\def\eqref#1{equation~\ref{#1}}
\def\1{\bm{1}}
\DeclareMathAlphabet{\mathsfit}{\encodingdefault}{\sfdefault}{m}{sl}
\SetMathAlphabet{\mathsfit}{bold}{\encodingdefault}{\sfdefault}{bx}{n}
\DeclareMathOperator*{\argmax}{arg\,max}
\DeclareMathOperator*{\argmin}{arg\,min}
\newcommand {\caA}[0]{\mathcal{A}}
\newcommand {\caC}[0]{\mathcal{C}}
\newcommand {\caD}[0]{\mathcal{D}}
\newcommand {\caF}[0]{\mathcal{F}}
\newcommand {\caG}[0]{\mathcal{G}}
\newcommand {\caH}[0]{\mathcal{H}}
\newcommand {\caM}[0]{\mathcal{M}}
\newcommand {\caN}[0]{\mathcal{N}}
\newcommand {\caS}[0]{\mathcal{S}}
\newcommand {\caU}[0]{\mathcal{U}}
\newcommand {\caX}[0]{\mathcal{X}}
\newcommand {\caZ}[0]{\mathcal{Z}}
\newcommand {\dbI}[0]{\mathbb{I}}
\newcommand {\defeq}{\stackrel{\textup{\tiny def}}{=}}
\newtheorem{assumption}{Assumption}
\newtheorem{theorem}{Theorem}
\newtheorem{lemma}{Lemma}
\newtheorem{corollary}{Corollary}
\newtheorem{proposition}{Proposition}
\newtheorem{definition}{Definition}
\algnewcommand{\LineComment}[1]{\State \(\triangleright\) #1}
\newcommand{\jiachen}[1]{\colorbox{yellow}{JH:~ #1}}
\newcommand{\pidr}{\pi^*_{\text{DR}}}
\title{
Understanding Domain Randomization for Sim-to-real Transfer
}
\author{Xiaoyu Chen \& Jiachen Hu \thanks{ These two authors contributed equally.}  \\
Key Laboratory of Machine Perception, MOE, \\School of Artificial Intelligence,
Peking University \\
\texttt{\{cxy30, NickH\}@pku.edu.cn} \\
\And
Chi Jin \\ 
Department of Electrical and Computer Engineering, \\
Princeton University \\
\texttt{chij@princeton.edu} \\
\And
Lihong Li \\ 
Amazon \\
\texttt{llh@amazon.com} \\
\And
Liwei Wang \\
Key Laboratory of Machine Perception, MOE,\\ School of Artificial Intelligence,
Peking University \\
International Center for Machine Learning Research, Peking University \\
\texttt{wanglw@cis.pku.edu.cn}
}
\begin{document}

\maketitle

\begin{abstract}

Reinforcement learning encounters many challenges when applied directly in the real world. Sim-to-real transfer is widely used to transfer the knowledge learned from simulation to the real world. Domain randomization---one of the most popular algorithms for sim-to-real transfer---has been demonstrated to be effective in various tasks in robotics and  autonomous driving. Despite its empirical successes, theoretical understanding on why this simple algorithm works is limited. In this paper, we propose a  theoretical framework for sim-to-real transfers, in which the simulator is modeled as a set of MDPs with tunable parameters (corresponding to unknown physical parameters such as friction).  We provide sharp bounds on the sim-to-real gap---the difference between the value of policy returned by domain randomization and the value of an optimal policy for the real world. We prove that sim-to-real transfer can succeed under mild conditions without any real-world training samples. Our theory also highlights the importance of using memory (i.e., history-dependent policies) in domain randomization. Our proof is based on novel techniques that reduce the problem of bounding the sim-to-real gap to the problem of designing efficient learning algorithms for infinite-horizon MDPs, which we believe are of independent interest.
\end{abstract}

\section{Introduction}
\label{sec: introduction}


Reinforcement Learning (RL) is concerned with sequential decision making, in which the agent interacts with the environment to maximize its cumulative rewards. This framework has achieved tremendous empirical successes in various fields such as Atari games, Go and StarCraft~\citep{mnih2013playing,silver2017mastering,vinyals2019grandmaster}. However, state-of-the-art algorithms often require a large amount of training samples to achieve such a good performance. 
While feasible in applications that have a good simulator such as the examples above, these methods are limited in applications where interactions with the real environment are costly and risky, such as healthcare and robotics.

One solution to this challenge is \emph{sim-to-real transfer}~\citep{floreano2008evolutionary,kober2013reinforcement}. The basic idea 
is to train an RL agent in a simulator that approximates the real world and then transfer the trained agent to the real environment. This paradigm has been widely applied, especially in robotics~\citep{rusu2017sim,peng2018sim,chebotar2019closing}
and autonomous driving \citep{pouyanfar2019roads,niu2021dr2l}. Sim-to-real transfer is appealing as it provides an essentially unlimited amount of data to the agent, and reduces the costs and risks in training.

However, sim-to-real transfer faces the fundamental challenge that the policy trained in the simulated environment may have degenerated performance in the real world due to the \emph{sim-to-real gap}---the mismatch between simulated and real environments.  
In addition to building higher-fidelity simulators to alleviate this gap,
\emph{domain randomization} is another popular method~\citep{sadeghi2016cad2rl,tobin2017domain,peng2018sim,andrychowicz2020learning}. 
Instead of training the agent in a single simulated environment, domain randomization randomizes the dynamics of the environment, thus exposes the agent to a diverse set of
environments in the training phase. 
Policies learned entirely in the simulated environment with domain randomization can be directly transferred to the physical world with good performance~\citep{sadeghi2016cad2rl,matas2018sim,andrychowicz2020learning}.

In this paper, we focus on understanding sim-to-real transfer and domain randomization from a theoretical perspective.
The empirical successes raise the question:
can we provide guarantees for the sub-optimality gap of the policy that is trained in a simulator with domain randomization and directly transferred to the physical world?
To do so, we formulate the simulator as a set of MDPs with tunable latent variables, which corresponds to unknown parameters such as friction coefficient or wind velocity in the real physical world.
We model the training process with domain randomization as finding an optimal history-dependent policy for a \emph{latent MDP}, 
in which an MDP 
is randomly drawn from a set of MDPs in the simulator at the beginning of each episode.

Our contributions can be summarized as follows:
\begin{itemize}
    \item We propose a novel formulation of sim-to-real transfer and establish the connection between domain randomization and the latent MDP model~\citep{kwon2021rl}. The latent MDP model illustrates the uniform sampling nature of domain randomization, and  helps to analyze the sim-to-real gap for the policy obtained from domain randomization.
    \item We study the optimality of domain randomization in three different settings. Our results indicate that the sim-to-real gap of the policy trained in the simulation can be $o(H)$ when the randomized simulator class is finite or satisfies certain smoothness condition, where $H$ is the horizon of the real-world interaction. We also provide a lower bound showing that such benign conditions are necessary for efficient learning.
    Our theory highlights the importance of using memory (i.e., history-dependent policies) in domain randomization.
    \item To analyze the optimality of domain randomization, we propose a novel proof framework which reduces the problem of bounding the sim-to-real gap of domain randomization to the problem of designing efficient learning algorithms for infinite-horizon MDPs, which we believe are of independent interest.
    \item As a byproduct of our proof, we provide the first provably efficient model-based algorithm for learning infinite-horizon average-reward MDPs with general function approximation (Algorithm~\ref{alg: general_opt_alg} in Appendix~\ref{appendix: infinite simulator class}). Our algorithm achieves a regret bound of $\tilde{O}(D\sqrt{d_e T})$, where $T$ is the total timesteps and $d_e$ is a complexity measure of a certain function class $\caF$ that depends on the eluder dimension~\citep{russo2013eluder,osband2014model}. 
\end{itemize}
\section{Related Work}
\label{sec: related_work}

\paragraph{Sim-to-Real and Domain Randomization}  
The basic idea of sim-to-real is to first train an RL agent in simulation, and then transfer it to the real environment. This idea has been widely applied to problems such as robotics~\cite[e.g.,][]{ng2006autonomous,bousmalis2018using,tan2018sim, andrychowicz2020learning} and autonomous driving~\cite[e.g.,][]{pouyanfar2019roads,niu2021dr2l}. To alleviate the influence of reality gap, previous works have proposed different methods to help with sim-to-real transfer, including progressive networks~\citep{rusu2017sim}, inverse dynamics models~\citep{christiano2016transfer} and Bayesian methods~\citep{cutler2015efficient,pautrat2018bayesian}. Domain randomization is an alternative approach to making the learned policy to be more adaptive to different environments~\citep{sadeghi2016cad2rl,tobin2017domain,peng2018sim,andrychowicz2020learning}, thus greatly reducing the number of real-world interactions. 

There are also theoretical works related to sim-to-real transfer. \cite{jiang2018pac} uses the number of different state-action pairs as a measure of the gap between the simulator and the real environment. Under the assumption that the number of different pairs is constant, they prove the hardness of sim-to-real transfer and propose efficient adaptation algorithms with further conditions. \cite{feng2019does} prove that an approximate simulator model can effectively reduce the sample complexity in the real environment by eliminating sub-optimal actions from the policy search space. \cite{zhong2019pac} formulate a theoretical sim-to-real framework using the rich observation Markov decision processes (ROMDPs), and show that the transfer can result in a smaller real-world sample complexity.
None of these results study benefits of domain randomization in sim-to-real transfer. Furthermore, all above works require real-world samples to fine-tune their policy during training, while our work and the domain randomization algorithm do not. 

\paragraph{POMDPs and Latent MDPs}
Partially observable Markov decision processes (POMDPs) are a general framework for sequential decision-making problems when the state is not fully observable~\citep{smallwood1973optimal,kaelbling98planning,vlassis2012computational,jin2020sample,xiong2021sublinear}. Latent MDPs~\citep{kwon2021rl}, or LMDPs, are a special type of POMDPs, in which the real environment is randomly sampled from a set of MDPs at the beginning of each episode. This model has been widely investigated with different names such as hidden-model MDPs and multi-model MDPs. There are also results studying the planning problem in LMDPs, when the true parameters of the model is given~\citep{chades2012momdps,buchholz2019computation,steimle2021multi} . \cite{kwon2021rl} consider the regret minimization problem for LMDPs, and provide efficient learning algorithms under different conditions. We remark that all works mentioned above focus on the problems of finding the optimal policies for POMDPs or latent MDPs, which is perpendicular to the central problem of this paper---bounding the performance gap of transferring the optimal policies of latent MDPs from simulation to the real environment.

\paragraph{Infinite-horizon Average-Reward MDPs}
Recent theoretical progress has produced many provably sample-efficient algorithms for RL in infinite-horizon average-reward setting. 
Nearly matching upper bounds and lower bounds are known for the tabular setting~\citep{jaksch2010near,fruit2018efficient,zhang2019regret,wei2020model}. Beyond the tabular case, \cite{wei2021learning} propose efficient algorithms for infinite-horizon MDPs with linear function approximation. To the best of our knowledge, our result (Algorithm~\ref{alg: general_opt_alg}) is the first efficient algorithm with near-optimal regret for infinite-horizon average-reward MDPs with general function approximation.

\section{Preliminaries}
\label{sec: preliminaries}

\subsection{Episodic MDPs}


We consider episodic RL problems where each MDP is specified by $\caM  = (\caS,\caA, P, R, H, s_1)$. $\caS$ and $\caA$ are the state and the action space with cardinality $S$ and $A$ respectively. We assume that $S$ and $A$ are finite but can be extremely large. $P: \caS \times \caA \rightarrow \Delta(\caS)$ is the transition probability matrix so that $P(\cdot|s,a)$ gives the distribution over states if action $a$ is taken on state $s$, $R: \caS \times \caA \rightarrow [0,1]$ is the reward function. $H$ is the number of steps in one episode. 

For simplicity, we assume the agent always starts from the same state in each episode, and use $s_1$ to denote the initial state at step $h=1$. It is straight-forward to extend our results to the case with random initialization. At step $h \in [H]$, the agent observes the current state $s_h \in \caS$, takes action $a_h \in \caA$, receives reward $R(s_h,a_h)$, and transits to state $s_{h+1}$ with probability $P(s_{h+1}|s_h,a_h)$. The episode ends when $s_{H+1}$ is reached.

We consider the history-dependent policy class $\Pi$, where $\pi \in \Pi$ is a collection of mappings from the history observations to the distributions over actions. Specifically, we use ${traj}_h = \{(s_1,a_1,s_2,a_2, \cdots, s_h) ~|~ s_i \in \caS, a_i \in \caA, i \in [h] \}$ to denote the set of all possible trajectories of history till step $h$. 
We define a policy $\pi \in \Pi$ to be a collection of $H$ policy functions $\{\pi_h: {traj}_h \rightarrow \Delta(\caA)\}_{h\in [H]}$. We define $V^{\pi}_{\caM,h}: \caS \rightarrow \mathbb{R}$ to be the value function at step $h$ under policy $\pi$ on MDP $\caM$, i.e., $V_{\caM,h}^{\pi}(s)=\mathbb{E}_{\caM, \pi}[\sum_{t=h}^{H} R(s_{t}, a_{t}) \mid s_{h}=s].$
Accordingly, we define $Q^{\pi}_{\caM,h}: \caS \times \caA \rightarrow R$ to be the Q-value function at step $h$: $Q_{\caM,h}^{\pi}(s,a)=\mathbb{E}_{\caM, \pi}[R(s_h,a_h) + \sum_{t=h+1}^{H} R(s_{t}, a_{t}) \mid s_{h}=s, a_h = a].$

We use $\pi^*_{\caM}$ to denote the optimal policy for a single MDP $\caM$. It can be shown that there exists $\pi^*_{\caM}$ such that the policy at step $h$ depends on only the state at step $h$ but not any other prior history. That is, $\pi^*_{\caM}$ can be expressed as a collection of $H$ policy functions mapping from $\caS$ to $\Delta(\caA)$. 
We use $V^*_{\caM, h}$ and $Q^*_{\caM,h}$ to denote the optimal value and Q-functions under the optimal policy $\pi^*_{\caM}$ at step $h$. 

\subsection{Practical Implementation of Domain Randomization}
In this subsection, we briefly introduce how domain randomization works in practical applications. Domain randomization is a popular technique for improving domain transfer~\citep{tobin2017domain,peng2018sim,matas2018sim}, which is often used for zero-shot transfer when the target domain is unknown or cannot be easily used for training. For example, by highly randomizing the rendering settings for their simulated training set, \cite{sadeghi2016cad2rl} trained vision-based controllers for a quadrotor using only synthetically rendered scenes. \cite{andrychowicz2020learning} studied the problem of dexterous in-hand manipulation. The training is performed entirely in a simulated environment in which they randomize the physical parameters of the system like friction coefficients and vision properties such as object's appearance. 

To apply domain randomization in the simulation training, the first step before domain randomization is usually to build a simulator that  is close to the real environment. The simulated model is further improved to match the physical system more closely through calibration. Though the simulation is still a rough approximation of the physical setup after these engineering efforts, these steps ensure that the randomized simulators generated by domain randomization can cover the real-world variability. During the training phase, many aspects of the simulated environment are randomized in each episode in order to help the agent learn a policy that generalizes to reality. The policy trained with domain randomization can be represented using recurrent neural network with memory such as LSTM~\citep{yu2018policy,andrychowicz2020learning,doersch2019sim2real}. 
Such a memory-augmented structure allows the policy to potentially identify the properties of the current environment and adapt its behavior accordingly. With sufficient data sampled using the simulator, the agent can find a near-optimal policy w.r.t. the average value function over a variety of simulation environments. This policy has shown its great adaptivity in many previous results, and can be directly applied to the physical world without any real-world fine-tuning~\citep{sadeghi2016cad2rl,matas2018sim,andrychowicz2020learning}.


\section{Formulation}



In this section, we propose our theoretical formulation of sim-to-real and domain randomization. The corresponding models will be used to analyze the optimality of domain randomization in the next section, which can also serve as a starting point for future research on sim-to-real.

\subsection{Sim-to-real Transfer}
In this paper, we model the simulator as a set of MDPs with tunable latent parameters. We consider an MDP set $\caU$ representing the simulator model with joint state space $\caS$ and joint action space $\caA$. Each MDP $\caM = (\caS,\caA, P_{\caM}, R, H, s_1)$ in $\caU$ has its own transition dynamics $P_{\caM}$, which corresponds to an MDP with certain choice of latent parameters. 
Our result can be easily extended to the case where the rewards are also influenced by the latent parameters. 
We assume that there exists an MDP $\caM^* \in \caU$ that represents the dynamics of the real environment.

We can now explain our general framework of sim-to-real. For simplicity, we assume that during the simulation phase (or training phase), we are given the entire set $\caU$ that represents MDPs under different tunable latent parameter. Or equivalently, the learning agent is allowed to interact with any MDP $\caM \in \caU$ in arbitrary fashion, and sample arbitrary amount of trajectories. However, we do not know which MDP $\caM \in \caU$ represents the real environment. The objective of sim-to-real transfer is to find a policy $\pi$ purely based on $\caU$, which performs well in the real environment. In particular, we measure the performance in terms of the \emph{sim-to-real gap}, which is defined as the difference between the value of learned policy $\pi$ and the value of an optimal policy for the real world:
\begin{equation}
    \text{Gap}(\pi) =  V^{*}_{\caM^*,1}(s_1) - V^{\pi}_{\caM^*,1}(s_1).
\end{equation}

We remark that in our framework, the policy $\pi$ is learned exclusively in simulation without the use of any real world samples. We study this framework because (1) our primary interests---domain randomization algorithm does not use any real-world samples for training; (2) we would like to focus on the problem of knowledge transfer from simulation to the real world. The more general learning paradigm that allows the fine-tuning of policy learned in simulation using real-world samples can be viewed as a combination of sim-to-real transfer and standard on-policy reinforcement learning, which we left as an interesting topic for future research.

\subsection{Domain Randomization and LMDPs}
We first introduce Latent Markov decision processes (LMDPs) and then explain domain randomization in the viewpoint of LMDPs.
A LMDP can be represented as $(\caU,\nu)$, where $\caU$ is a set of MDPs with joint state space $\caS$ and joint action space $\caA$, and $\nu$ is a distribution over $\caU$. 
Each MDP $\caM = (\caS,\caA, P_{\caM}, R, H, s_1)$ in $\caU$ has its own transition dynamics $P_{\caM}$ that may differs from other MDPs. At the start of an episode, an MDP $\caM \in \caU$ is randomly chosen according to the distribution $\nu$. The agent does not know explicitly which MDP is sampled, but she is allowed to interact with this MDP $\caM$ for one entire episode. 

Domain randomization algorithm first specifies a distribution over tunable parameters, which equivalently gives a distribution $\nu$ over MDPs in simulator $\caU$. This induces a LMDP with distribution $\nu$. The algorithm then samples trajectories from this LMDP, runs RL algorithms in order to find the near-optimal policy of this LMDP. 
We consider the ideal scenario that the domain randomization algorithm eventually find the globally optimal policy of this LMDP, which we formulate as domain randomization oracle as follows:
\begin{definition}
(Domain Randomization Oracle) Let $\caU$ be the set of MDPs generated by domain randomization and $\nu$ be the uniform distribution over $\caU$. The domain randomization oracle returns an optimal history-dependent policy $\pidr$ of the LMDP $(\caU, \nu)$:
\begin{align}
    \label{eqn: definition of pi*}
    \pidr = \argmax_{\pi \in \Pi} \mathbb{E}_{\caM \sim \nu} V^{\pi}_{\caM,1} (s_1).
\end{align}
\end{definition}
Since LMDP is a special case of POMDPs, its optimal policy $\pidr$ in general will depend on history. This is in sharp contrast with the optimal policy of a MDP, which is history-independent. 
We emphasize that both the memory-augmented policy and the randomization of the simulated environment are critical to the optimality guarantee of domain randomization. We also note that we don't restrict the learning algorithm used to find the policy $\pidr$, which can be either in a model-based or model-free style. Also, we don't explicitly define the behavior of $\pidr$. The only thing we know about $\pidr$ is that it satisfies the optimality condition defined in Equation~\ref{eqn: definition of pi*}. In this paper, we aim to bound the sim-to-real gap of $\pidr$, i.e., $\text{Gap}(\pidr,\caU)$ under different regimes.

\section{Main Results}
\label{sec: main_results}

We are ready to present the sim-to-real gap of $\pidr$ in this section. We study the gap in three different settings under our sim-to-real framework: finite simulator class (the cardinality $|\caU|$ is finite) with the separation condition (MDPs in $\caU$ are distinct), finite simulator class without the separation condition, and infinite simulator class. During our analysis, we mainly study the long-horizon setting where $H$ is relatively large compared with other parameters. This is a challenging 
setting that has been widely-studied in recent years~\citep{gupta2019relay,mandlekar2020learning,pirk2020modeling}. We show that the sim-to-real gap of $\pidr$ is only $O(\log^3(H))$ for the finite simulator class with the separation condition, and only $\tilde{O}(\sqrt{H})$ in the last two settings, matching the best possible lower bound in terms of $H$. 


In our analysis, we assume that the MDPs in $\caU$ are communicating MDPs with a bounded diameter. 

\begin{assumption}[Communicating MDPs~\citep{jaksch2010near}]
\label{assumption: communicating MDP}
The diameter of any MDP $\caM \in \caU$ is bounded by $D$. That is, consider the stochastic process defined by a stationary policy $\pi: \mathcal{S} \rightarrow \mathcal{A}$ on an MDP with initial state $s$. Let $T(s'|\caM, \pi, s)$ denote the random variable for the first time step in which state $s'$ is reached in this process, then
$\max _{s \neq s^{\prime} \in \mathcal{S}} \min _{\pi: \mathcal{S} \rightarrow \mathcal{A}} \mathbb{E}\left[T\left(s^{\prime} \mid \caM, \pi, s\right)\right] \leq D.$
\end{assumption}

This is a natural 
assumption widely used in the literature~\citep{jaksch2010near,agrawal2017posterior,fruit2020improved}. The communicating MDP model also covers many real-world tasks in robotics. For example, transferring the position or angle of a mechanical arm only costs constant time. Moreover, the diameter assumption is necessary under our framework.

\begin{proposition}
\label{prop: lower bound without diameter assumption}
Without Assumption~\ref{assumption: communicating MDP}, there exists a hard instance $\caU$ so that $\operatorname{Gap}(\pidr) = \Omega(H)$.
\end{proposition}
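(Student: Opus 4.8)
The plan is to construct an explicit two-MDP instance in which the absence of a communicating structure makes the first action irreversible, while the two MDPs are statistically indistinguishable at the first step, so that any policy is forced to commit before it can tell which environment it is in. This exposes exactly the failure mode that Assumption~\ref{assumption: communicating MDP} is meant to rule out.

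Concretely, I would take $\caU = \{\caM_1, \caM_2\}$ over a common state space $\caS = \{s_1, g, b\}$ and action space $\caA = \{a_1, a_2\}$, where $g$ and $b$ are absorbing with $R(g,\cdot) = 1$, $R(b,\cdot) = 0$, and $R(s_1,\cdot) = 0$. In $\caM_1$ the transition from $s_1$ sends $a_1 \mapsto g$ and $a_2 \mapsto b$; in $\caM_2$ the roles are swapped, $a_1 \mapsto b$ and $a_2 \mapsto g$. Since $g$ and $b$ are absorbing, neither MDP is communicating (the diameter is infinite, as one can never leave $g$ to reach $b$), so Assumption~\ref{assumption: communicating MDP} fails for any finite $D$. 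A short computation gives $V^*_{\caM_i,1}(s_1) = H-1$ for $i \in \{1,2\}$, namely reach $g$ at step $2$ and collect unit reward for the remaining $H-1$ steps.

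The key step is the indistinguishability argument. At step $1$ the only available history is the trajectory $(s_1)$, which is identical under $\caM_1$ and $\caM_2$; hence any history-dependent policy $\pi \in \Pi$ must play the same action distribution at $s_1$ in both MDPs, say $\Pr[a_1 \mid s_1] = p$. Because every decision after step $1$ is made in an absorbing state and therefore has no effect, the value is determined entirely by this first choice: $V^\pi_{\caM_1,1}(s_1) = p(H-1)$ and $V^\pi_{\caM_2,1}(s_1) = (1-p)(H-1)$. In particular their sum equals $H-1$ for every policy, so the LMDP objective $\tfrac12\big(V^\pi_{\caM_1,1}(s_1) + V^\pi_{\caM_2,1}(s_1)\big)$ is constant in $p$; every policy is optimal, and whatever maximizer $\pidr$ the oracle returns still satisfies $V^{\pidr}_{\caM_1,1}(s_1) + V^{\pidr}_{\caM_2,1}(s_1) = H-1$.

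Finally I would invoke a pigeonhole step: since the two values sum to $H-1$, at least one index $i^\star$ has $V^{\pidr}_{\caM_{i^\star},1}(s_1) \le (H-1)/2$. Taking the real environment to be $\caM^* = \caM_{i^\star}$ yields $\operatorname{Gap}(\pidr) = V^*_{\caM^*,1}(s_1) - V^{\pidr}_{\caM^*,1}(s_1) \ge (H-1) - (H-1)/2 = (H-1)/2 = \Omega(H)$, as claimed. I do not anticipate a serious technical obstacle; the only point requiring care is formalizing that the policy's step-$1$ behavior must coincide across the two MDPs (indistinguishability of the length-one history), which is precisely where the lack of the communicating assumption bites, since the absorbing trap state $b$ renders that forced, uninformed first choice permanently costly.
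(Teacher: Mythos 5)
Your proof is correct and takes essentially the same route as the paper's: an uninformative, irreversible first decision leading into absorbing states, so that the LMDP objective cannot distinguish the candidate environments and some choice of $\caM^* \in \caU$ forces a gap of $\Omega(H)$. The paper's instance uses $M$ MDPs with a hidden rewarding branch (wrong first action with probability $1-1/M$) while yours is the minimal two-MDP version; if anything, your explicit sum-identity-plus-pigeonhole step handles the oracle's arbitrary tie-breaking among the (all-optimal) policies more carefully than the paper's informal argument.
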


We prove Proposition~\ref{prop: lower bound without diameter assumption} in Appendix~\ref{appendix: proof of prop 1}. Note that the worst possible gap of any policy is $H$, so $\pidr$ becomes ineffective without Assumption \ref{assumption: communicating MDP}. 

\subsection{Finite Simulator Class With Separation Condition}

As a starting point, we will show the sim-to-real gap when the MDP set $\caU$ is a finite set with cardinality $M$. Intuitively, a desired property of $\pidr$ is the ability to identify the environment the agent is exploring within a few steps. This is because $\pidr$ is trained under uniform random environments, so we hope it can learn to tell the differences between environments. As long as $\pidr$ has this property, the agent is able to identify the environment dynamics quickly, and behave optimally afterwards (note that the MDP set $\caU$ is known to the agent). 

Before presenting the general results, we first examine a simpler case where all MDPs in $\caU$ are distinct. Concretely, we assume that any two MDPs in $\caU$ are well-separated on at least one state-action pair. Note that this assumption is much weaker than the separation condition in \citet{kwon2021rl}, which assumes strongly separated condition for each state-action pair.

\begin{assumption}[$\delta$-separated MDP set]
\label{assumption: separated MDPs}
 For any $\caM_1, \caM_2 \in \caU$, there exists a state-action pair $(s,a) \in \caS \times \caA$, such that the $L_1$ distance between the probability of next state of the different MDPs is at least $\delta$, i.e.
$\left\|\left(P_{\caM_{1}}-P_{\caM_{2}}\right)(\cdot \mid s, a)\right\|_{1} \geq \delta.$
\end{assumption}





The following theorem shows the sim-to-real gap of $\pidr$ in $\delta$-separated MDP sets.

\begin{theorem}
\label{theorem: well-separated gap}
 Under Assumption~\ref{assumption: communicating MDP} and Assumption~\ref{assumption: separated MDPs}, for any $\caM \in \caU$, the sim-to-real gap of $\pidr$ is at most 
\begin{align}
    \operatorname{Gap}({\pidr}) = O\left(\frac{DM^3\log(MH)\log^2(SMH/\delta)}{\delta^4}\right).
\end{align}
\end{theorem}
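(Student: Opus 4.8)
The plan is to exploit the defining optimality of $\pidr$ for the uniform LMDP and reduce the per-MDP guarantee to controlling a single, explicitly constructed reference policy. First I would pass from the worst-case-over-$\caU$ gap to the \emph{average} gap. Since $\pidr$ maximizes $\frac{1}{M}\sum_{\caM \in \caU} V^{\pi}_{\caM,1}(s_1)$ over all history-dependent $\pi \in \Pi$, for any reference policy $\pi'$ we have $\frac{1}{M}\sum_{\caM}(V^*_{\caM,1}(s_1) - V^{\pidr}_{\caM,1}(s_1)) \le \frac{1}{M}\sum_{\caM}(V^*_{\caM,1}(s_1) - V^{\pi'}_{\caM,1}(s_1))$. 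Because every per-MDP gap is nonnegative, a bound of $g$ on the average gap of $\pi'$ yields $\operatorname{Gap}(\pidr) \le M g$ for every $\caM \in \caU$ simultaneously. So it suffices to build one policy $\pi'$ whose gap on \emph{every} MDP is at most $g$, and the whole problem becomes: design a good learning algorithm for the latent MDP and bound its (infinite-horizon-style) regret $g$.

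The reference policy I would use is explore-then-commit: run an identification subroutine for the first $\tau$ steps, then commit to the optimal stationary policy $\pi^*_{\caM}$ of whichever MDP has been identified. The identification subroutine maintains a set of candidate MDPs (initially all of $\caU$) and repeatedly eliminates candidates. To eliminate, pick two surviving candidates $\caM_1,\caM_2$; by Assumption~\ref{assumption: separated MDPs} there is a state-action pair $(s,a)$ with $\|(P_{\caM_1}-P_{\caM_2})(\cdot\mid s,a)\|_1 \ge \delta$. Using Assumption~\ref{assumption: communicating MDP}, navigate to $s$ in $O(D)$ expected steps, sample the transition at $(s,a)$, and return to $s$ to resample; collecting $\tilde{O}(1/\delta^2)$ samples (each costing $O(D)$ steps to re-reach $(s,a)$) lets a likelihood-ratio test reject whichever of $\caM_1,\caM_2$ disagrees with the observed empirical distribution, and by the $\delta$-separation the true MDP is never rejected with high probability. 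Each elimination removes at least one candidate, so after at most $M-1$ rounds only the true MDP survives.

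With this subroutine, the gap of $\pi'$ on each $\caM$ decomposes into three pieces. During the $\tau$ exploration steps we lose at most $\tau$ (rewards lie in $[0,1]$); the event that identification fails contributes at most $H \cdot (\text{failure probability})$, which I make negligible by taking the per-test failure probability $\tilde{O}(1/(MH))$ in the union bound (this is the source of the $\log(MH)$ and $\log(SMH/\delta)$ factors); and once the true MDP is correctly identified, committing to $\pi^*_{\caM}$ for the remaining horizon is near-optimal because, for a communicating MDP, the finite-horizon value equals $H\rho^*_{\caM}$ up to an $O(D)$ span term, so starting the stationary optimal policy from the ``wrong'' post-exploration state costs only $O(D)$. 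Hence $g = O(\tau + D)$, and tracking the navigation/sampling cost of the $O(M^2)$ potential pairwise tests together with the $\tilde{O}(1/\delta^2)$ samples per test and the per-sample $O(D)$ re-reach cost gives $\tau = \tilde{O}(\mathrm{poly}(M,D,1/\delta))$; multiplying by the factor $M$ from the averaging step yields the stated bound.

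The main obstacle is the identification subroutine under model uncertainty, specifically the ``chicken-and-egg'' navigation problem: reaching the distinguishing pair $(s,a)$ in $O(D)$ steps requires a shortest-path policy that depends on the true (unknown) dynamics, so the navigation and the identification must be interleaved, and bounding the extra cost incurred while the candidate set is still large is where the polynomial dependence on $M$ and $1/\delta$ (beyond the naive $1/\delta^2$ testing cost) really arises. The secondary subtlety is making the per-MDP-from-average reduction lossless up to the factor $M$, i.e.\ verifying that $\pi'$ attains gap $\le g$ \emph{uniformly} over $\caU$ rather than only on average, since the averaging step can only be inverted if the reference bound holds for each MDP.
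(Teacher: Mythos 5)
Your proposal follows essentially the same route as the paper's own proof: the identical reduction (Lemma~\ref{lemma: construction argument}) from $\operatorname{Gap}(\pidr)$ to the uniform gap of an explicitly constructed base policy at the cost of a factor $M$, and the same explore-then-commit base policy (Algorithm~\ref{alg: gap-dpendent algorithm}) that pairwise-eliminates candidates by sampling at the maximally separated state-action pair, with the commit-phase loss bounded by $O(D)$ through the span of the optimal bias function, exactly as you describe. The one genuinely missing ingredient is the ``chicken-and-egg'' navigation problem that you correctly flag as the main obstacle but leave unresolved. The paper's resolution (Subroutine~\ref{subroutine: collecting data, M >2}) is simple and worth knowing: since the likelihood test never eliminates the true MDP with high probability (Lemma~\ref{lemma: never eliminating M*, gap-dependent bound}), the true model always survives in the candidate set, so the subroutine round-robins through the minimum-expected-travel-time policies $\pi_{\caM_i}(s_{init},s_0)$ of \emph{all} $M$ candidates, running each for $2D$ steps; in the round in which the true model's policy is executed, Markov's inequality gives success probability at least $1/2$, so each sample at $(s_0,a_0)$ costs $O(MD)$ expected steps. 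No finer interleaving of navigation and identification is needed, and the extra cost is exactly the factor $M$ per sample that your $\mathrm{poly}(M,D,1/\delta)$ budget already tolerates.

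A second, smaller issue: your claim that $\tilde{O}(1/\delta^2)$ samples suffice for a likelihood-ratio test is not justified as stated, because the log-likelihood ratio $\ln\left(P_{\caM_1}(s'|s_0,a_0)/P_{\caM_2}(s'|s_0,a_0)\right)$ is unbounded, and infinite when $P_{\caM_2}(s'|s_0,a_0)=0$. The paper handles this with an explicit zero-probability branch plus a smoothed alternative $\tilde{P}_{\caM_1}$ with mixing weight $\alpha = \delta^2/(8S)$, treating $\ln(1/P(X))$ as sub-exponential; the deviation term $\log(1/\alpha)\sqrt{n_0\log(1/\delta_0)}$ must then be dominated by the Pinsker gain $n_0\delta^2/2$, which forces $n_0 = \tilde{O}\left(\log^2(SMH/\delta)/\delta^4\right)$ and is precisely the source of the $1/\delta^4$ and $\log^2(SMH/\delta)$ factors in the theorem statement. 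Your $1/\delta^2$ figure would be attainable with a Scheff\'e-type total-variation test on the set $\{s': P_{\caM_1}(s'|s_0,a_0) > P_{\caM_2}(s'|s_0,a_0)\}$, which would actually improve the theorem's $\delta$-dependence, but the naive LR test as you state it does not concentrate without the smoothing device. Since $1/\delta^2 \le 1/\delta^4$ for $\delta \le 1$, either repair lands within the stated bound, so the architecture of your argument is sound once these two steps are filled in.
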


The proof of Theorem~\ref{theorem: well-separated gap} is deferred to Appendix~\ref{appendix: omitted proof in setting 1}. Though the dependence on $M$ and $\delta$ may not be tight, our bound has only poly-logarithmic dependence on the horizon $H$. 

The main difficulty to prove Theorem \ref{theorem: well-separated gap} is that we do not know what $\pidr$ does exactly despite knowing a simple and clean strategy in the real-world interaction with minimum sim-to-real gap. That is, to firstly visit the state-action pairs that help the agent identify the environment quickly and then follow the optimal policy in the real MDP $\caM^*$ after identifying $\caM^*$. 
Therefore, we use a novel constructive argument in the proof. We construct a base policy that implements the idea mentioned above, and show that $\pidr$ cannot be much worse than the base policy. The proof overview can be found in Section \ref{sec: proof overview}.

\subsection{Finite Simulator Class Without Separation Condition}

Now we generalize the setting and study the sim-to-real gap of $\pidr$ when $\caU$ is finite but not necessary a $\delta$-separated MDP set. 
Surprisingly, we show that $\pidr$ can achieve $\tilde{O}(\sqrt{H})$ sim-to-real gap when $|\caU| = M$.

\begin{theorem}
\label{theorem: finite class gap}
Under Assumption~\ref{assumption: communicating MDP}, when the MDP set induced by domain randomization $\caU$ is a finite set with cardinality $M$, the sim-to-real gap of $\pidr$ is upper bounded by
\begin{align}
    \operatorname{Gap}({\pidr}) = O\left(D\sqrt{M^3H \log(MH)}\right).
\end{align}
\end{theorem}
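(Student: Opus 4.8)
The plan is to combine a simple averaging argument with a reduction to online learning in an infinite-horizon average-reward MDP. The starting observation is that $\pidr$ is, by construction, optimal for the LMDP $(\caU,\nu)$ with $\nu$ uniform, so for \emph{any} history-dependent base policy $\pi_{\text{base}}$ we have $\E_{\caM\sim\nu}V^{\pidr}_{\caM,1}(s_1)\ge \E_{\caM\sim\nu}V^{\pi_{\text{base}}}_{\caM,1}(s_1)$. Since $V^{\pidr}_{\caM,1}(s_1)\le V^*_{\caM,1}(s_1)$ for every $\caM$, this bounds the \emph{average} sub-optimality by $\E_{\caM\sim\nu}[V^*_{\caM,1}(s_1)-V^{\pi_{\text{base}}}_{\caM,1}(s_1)]$. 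The crucial first step is then to pass from this average gap to the per-instance gap of the real MDP $\caM^*$. Because $\nu$ is uniform with mass $1/M$ on each MDP and every summand $V^*_{\caM,1}(s_1)-V^{\pidr}_{\caM,1}(s_1)$ is nonnegative, the single term indexed by $\caM^*$ is at most $M$ times the average, giving $\operatorname{Gap}(\pidr)\le M\cdot\E_{\caM\sim\nu}[V^*_{\caM,1}(s_1)-V^{\pi_{\text{base}}}_{\caM,1}(s_1)]$. This single factor of $M$ is exactly what will turn a $\sqrt{M}$ learning rate into the $\sqrt{M^3}$ of the statement.

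Next I would construct $\pi_{\text{base}}$ as the execution of an online learning algorithm. The key conceptual move is to view the $H$ steps of one LMDP episode as a single length-$T=H$ trajectory in an unknown but fixed MDP drawn from the known finite class $\caU$; since there are no resets inside the episode, this is precisely the infinite-horizon average-reward learning problem. I would take the model-based optimistic/elimination algorithm underlying Algorithm~\ref{alg: general_opt_alg}, specialized to the finite realizable class $\caU$ of $M$ candidate transition models. Its regret against the optimal gain of the true model is $\tilde{O}(D\sqrt{MH})$, since the relevant eluder-type complexity of a finite class of size $M$ is $O(M)$ and every model has diameter at most $D$ by Assumption~\ref{assumption: communicating MDP}.

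I then need to convert this average-reward regret back into the finite-horizon value gap that appears in $\operatorname{Gap}$. Writing $g^*_{\caM}$ for the optimal gain of $\caM$, a standard span/bias argument for communicating MDPs gives $V^*_{\caM,1}(s_1)=g^*_{\caM}H+O(D)$ and lower-bounds the reward collected by $\pi_{\text{base}}$ over the episode by $g^*_{\caM}H-\operatorname{Reg}_{\caM}(H)-O(D)$, where the $O(D)$ terms arise because the optimal bias span is at most the diameter. Hence $V^*_{\caM,1}(s_1)-V^{\pi_{\text{base}}}_{\caM,1}(s_1)\le \E[\operatorname{Reg}_{\caM}(H)]+O(D)$, and taking expectation over $\caM\sim\nu$ yields $\E_{\caM\sim\nu}[V^*_{\caM,1}(s_1)-V^{\pi_{\text{base}}}_{\caM,1}(s_1)]=\tilde{O}(D\sqrt{MH})$. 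Substituting into the averaging bound gives $\operatorname{Gap}(\pidr)\le M\cdot\tilde{O}(D\sqrt{MH})=\tilde{O}(D\sqrt{M^3H})$, which matches the claim once the $\log(MH)$ factor is tracked through the regret analysis.

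The main obstacle is Step~2 together with the conversion in Step~3: making the reduction rigorous requires (i) verifying that running a within-episode, history-dependent learning algorithm is a legitimate member of the policy class $\Pi$ against which $\pidr$ competes, and (ii) controlling the finite-horizon-to-gain conversion uniformly, since the learner's nonstationary behaviour and the fixed reset state $s_1$ must be reconciled with the stationary average-reward benchmark. The cleanest way to handle (ii) is to bound the per-episode difference between the finite-horizon value and $g^*_{\caM}H$ by the span of the optimal bias function, invoking the diameter bound; the delicate point is that this must hold simultaneously for the optimal policy and for the learner, so the regret decomposition has to absorb the bias terms without introducing any horizon-dependent loss.
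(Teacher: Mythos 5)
Your proposal follows essentially the same route as the paper's proof: your averaging step is exactly Lemma~\ref{lemma: construction argument}, your within-episode learner serving as a history-dependent base policy with $\tilde{O}(D\sqrt{MH})$ average-reward regret corresponds to the paper's Algorithm~\ref{alg: optimistic exploration} (Theorem~\ref{theorem: square-root H regret bound}), and your bias-span conversion between the episodic value and $H$ times the optimal gain is Lemma~\ref{lemma: connection with infinite-horizon setting}. The only cosmetic difference is that you specialize the general eluder-dimension algorithm (Algorithm~\ref{alg: general_opt_alg}) to the finite class, while the paper runs a dedicated finite-class elimination scheme; both yield the same $O\left(D\sqrt{M^3H\log(MH)}\right)$ bound.
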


Theorem \ref{theorem: finite class gap} is proved in Appendix~\ref{appendix: omitted proof in setting 2}. This theorem implies the importance of randomization and memory in the domain randomization algorithms \citep{sadeghi2016cad2rl,tobin2017domain,peng2018sim,andrychowicz2020learning}. With both of them, we successfully reduce the worst possible gap of $\pidr$ from the order of $H$ to the order of $\sqrt{H}$, so per step loss will be only $\tilde{O}(H^{-1/2})$. Without randomization, it is not possible to reduce the worst possible gap (i.e., the sim-to-real gap) because the policy is even not trained on all environments. Without memory, the policy is not able to implicitly ``identify'' the environments, so it cannot achieve sublinear loss in the worst case. 

We also use a constructive argument to prove Theorem \ref{theorem: finite class gap}. However, it is more difficult to construct the base policy because we do not have any idea to minimize the gap without the well-separated condition (Assumption~\ref{assumption: separated MDPs}). 
Fortunately, we observe that the base policy is also a memory-based policy, which basically can be viewed as an algorithm that seeks to minimize the sim-to-real gap in an unknown underlying MDP in $\caU$. Therefore, we connect the sim-to-real gap of the base policy with the regret bound of the algorithms in \textit{infinite-horizon average-reward MDPs} \citep{bartlett2012regal, fruit2018efficient, zhang2019regret}. The proof overview is deferred to Section \ref{sec: proof overview}.


To illustrate the hardness of minimizing the worst case gap, we prove the following lower bound for $\text{Gap}(\pi,\caU)$ to show that any policy must suffer a gap at least $\Omega(\sqrt{H})$.

\begin{theorem}
\label{thm: lower bound}
Under Assumption~\ref{assumption: communicating MDP}, suppose $A \geq 10, SA \geq M \geq 100, D \geq 20 \log_A{M}, H \geq DM$, for any history dependent policy $\pi = \{\pi_h: traj_h \rightarrow \caA\}_{h=1}^H$, there exists a set of $M$ MDPs $\caU = \{\mathcal{M}_m\}_{m=1}^{M}$ and a choice of $\mathcal{M}^* \in \caU$ such that $\operatorname{Gap}(\pi)$ is at least $\Omega(\sqrt{DMH}).$
\end{theorem}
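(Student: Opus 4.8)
The plan is to prove a minimax lower bound by exhibiting a family of $M$ statistically indistinguishable MDPs and averaging over a uniformly random choice of the real environment $\caM^* \in \caU$. Since $\max_m \operatorname{Gap}(\pi;\caM_m) \ge \frac{1}{M}\sum_{m=1}^M \operatorname{Gap}(\pi;\caM_m)$, it suffices to lower bound the average gap of an arbitrary fixed history-dependent policy $\pi$ over the family. The construction mirrors the classical $\Omega(\sqrt{DSAT})$ regret lower bound for infinite-horizon average-reward MDPs, specialized through the reduction used elsewhere in this paper with the substitutions $SA \mapsto M$ and $T \mapsto H$; the extra subtlety here is that the agent is told the finite candidate set $\caU$, so the hard family must be designed so that the $M$ candidates remain difficult to separate from a single length-$H$ trajectory even with this knowledge.

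First I would build the hard instance. Using the branching factor $A \ge 10$ I route the agent through a navigation gadget of depth $\Theta(\log_A M)$ whose $M$ leaves index $M$ candidate ``secret'' state--action pairs; the hypotheses $\caM_m$ share identical dynamics except that pair $m$ has its transition toward a rewarding component perturbed by an $L_1$ amount $\epsilon$. The side conditions are exactly what make this realizable: $SA \ge M$ supplies enough pairs to host the $M$ secrets, $D \ge 20\log_A M$ permits a depth-$\log_A M$ gadget whose diameter is $\Theta(D)$ (base transition and reset probabilities set to $\Theta(1/D)$), and $H \ge DM$ guarantees the horizon is long enough for the $\sqrt{H}$ regime. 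By symmetry every $\caM_m$ has the same optimal value $V^*$, achieved only by the policy that localizes $m$ and exploits the rewarding component, while a policy that has not localized the secret suffers a per-step reward deficit of order $\epsilon$. I will tune $\epsilon \asymp \sqrt{DM/H}$ at the end.

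The core is an information-theoretic change-of-measure argument. Let $P_m$ denote the law of the length-$H$ trajectory induced by $\pi$ on $\caM_m$ and $P_0$ the law under a null reference MDP in which no pair is special. By the KL chain rule (the divergence-decomposition identity used in bandit lower bounds), $\KL(P_0 \,\|\, P_m)$ equals the expected number of visits to secret pair $m$ under $P_0$ times the per-visit divergence $\Theta(\epsilon^2)$. Because the visit counts to the $M$ disjoint secret pairs sum to at most $H$, we obtain $\sum_{m=1}^M \KL(P_0\,\|\,P_m) = O(\epsilon^2 H)$, so the average divergence is $O(\epsilon^2 H / M)$. With $\epsilon \asymp \sqrt{DM/H}$ this is a small constant, and Pinsker (or Bretagnolle--Huber) then forces $\pi$ to behave nearly identically under $P_0$ and under $P_m$ for a constant fraction of the indices $m$; on each such environment $\pi$ fails to localize the secret and hence accumulates $\Omega(\epsilon \cdot H)$ value deficit over the horizon, with the diameter $D$ entering through the $\Theta(1/D)$ mixing that converts the per-step advantage into the stated scale. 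Averaging yields $\frac{1}{M}\sum_m \operatorname{Gap}(\pi;\caM_m) = \Omega(\epsilon H) = \Omega(\sqrt{DMH})$, and taking the worst index $m$ finishes the proof.

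The step I expect to be the main obstacle is correctly threading the diameter factor $D$ through the construction: I must simultaneously (i) certify that the gadget's diameter is $\Theta(D)$ so that Assumption~\ref{assumption: communicating MDP} holds with the intended constant, (ii) ensure the optimal-versus-suboptimal value gap accumulates linearly in $H$ with per-step size $\Theta(\epsilon)$ after accounting for the $\Theta(1/D)$ mixing and reset rate, and (iii) verify that localizing a secret genuinely costs $\Theta(1/\epsilon^2)$ informative visits despite the agent's knowledge of $\caU$. Balancing these so that the navigation depth $\log_A M$, the reward amplification, and the chosen $\epsilon$ compose into the clean $\sqrt{DMH}$ bound---while respecting the numerical thresholds $A\ge 10$, $M\ge 100$, $D\ge 20\log_A M$, and $H\ge DM$---is the delicate part of the argument.
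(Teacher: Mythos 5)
Your overall route is the same as the paper's: the appendix proof likewise takes the Jaksch--Auer--Ortner $\Omega(\sqrt{DSAT})$ construction (two-state gadgets with transition rate $\delta = 1/D$, hung on an $A$-ary tree) and substitutes $SA \mapsto M$ --- since the agent knows $\caU$, the effective hypothesis class is the $M$ candidate secret state--action pairs --- and $T \mapsto H$. The paper's write-up is in fact far terser than yours (it defers the change-of-measure details entirely to that reference, after first noting the $\Omega(\sqrt{MH})$ bandit bound), so your plan, your use of the side conditions $SA \ge M$, $D \ge 20\log_A M$, $H \ge DM$, and your averaging reduction are all consistent with what the paper intends.

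However, the KL bookkeeping at the crux is internally inconsistent, and as stated the argument fails exactly where you predicted it would. You posit simultaneously (a) per-visit divergence $\Theta(\epsilon^2)$, (b) per-step value deficit $\Theta(\epsilon)$, and (c) the tuning $\epsilon \asymp \sqrt{DM/H}$, and then assert that the average divergence $O(\epsilon^2 H/M)$ is a small constant. It is not: $\epsilon^2 H/M = D \ge 20$, growing with $D$, so Pinsker/Bretagnolle--Huber yields nothing and the indistinguishability step collapses. Conversely, if you keep (a) and shrink the tuning to $\epsilon \asymp \sqrt{M/H}$ so that the KL budget is genuinely constant, the gap you extract is only $\Omega(\epsilon H) = \Omega(\sqrt{MH})$ --- the bandit bound, with the $\sqrt{D}$ factor lost. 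The missing mechanism is the diameter amplification that is the entire point of the JAO gadget: perturbing a transition of base rate $\delta = \Theta(1/D)$ by $\beta$ shifts the stationary per-step reward by $\Theta(\beta/\delta) = \Theta(\beta D)$, while costing only $\KL\left(\mathrm{Ber}(\delta)\,\|\,\mathrm{Ber}(\delta+\beta)\right) = \Theta(\beta^2/\delta) = \Theta(\beta^2 D)$ per informative visit; the squared-advantage-to-KL ratio is thus $\Theta(D)$, not $\Theta(1)$. In your normalization, where $\epsilon = \beta D$ denotes the per-step advantage, the per-visit KL is therefore $\Theta(\epsilon^2/D)$, not $\Theta(\epsilon^2)$; with that correction the average divergence becomes $\epsilon^2 H/(DM) = \Theta(1)$ at $\epsilon \asymp \sqrt{DM/H}$, and everything composes to $\Omega(\sqrt{DMH})$. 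The fix is local, but as written claims (a)--(c) are jointly contradictory, and the one quantitative idea separating this bound from $\Omega(\sqrt{MH})$ is absent from your accounting.
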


The proof of Theorem~\ref{thm: lower bound} follows the idea of the lower bound proof for tabular MDPs~\citep{jaksch2010near}, which we defer to Appendix~\ref{appendix: lower bound proof}. This lower bound implies that $\Omega(\sqrt{H})$ sim-to-real gap is unavoidable for the policy $\pidr$ when directly transferred to the real environment.

\subsection{Infinite Simulator Class}

In real-world scenarios, the MDP class is very likely to be extensively large. For instance, many physical parameters such as surface friction coefficients and robot joint damping coefficients are sampled uniformly from a continuous interval in the Dexterous Hand Manipulation algorithms \citep{andrychowicz2020learning}. In these cases, the induced MDP set $\caU$ is large and even infinite. A natural question is whether we can extend our analysis to the infinite simulator class case, and provide a corresponding sim-to-real gap.



Intuitively, since the domain randomization approach returns the optimal policy in the average manner, the policy $\pidr$ can perform bad in the real world $\caM^*$ if most MDPs in the randomized set differ much with $\caM^*$.
In other words, $\caU$ must be "smooth" near $\caM^*$ for domain randomization to return a nontrivial policy. By "smoothness", we mean that there is a positive probability that the uniform distribution $\nu$ returns a MDP that is close to $\caM^*$. This is because the probability that $\nu$ samples exactly $\caM^*$ in a infinite simulator class is 0, so domain randomization cannot work at all if such smoothness does not hold. 

Formally, we assume there is a distance measure $d(\caM_1, \caM_2)$ on $\caU$ between two MDPs $\caM_1$ and $\caM_2$. Define the $\epsilon$-neighborhood $\caC_{\caM^*, \epsilon}$ of $\caM^*$ as $\caC_{\caM^*, \epsilon} \defeq \{\caM \in \caU: d(\caM, \caM^*) \leq \epsilon\}$. The smoothness condition is formally stated as follows: 

\begin{assumption}[Smoothness near $\caM^*$]
\label{assumption: Lipchitz}
There exists a positive real number $\epsilon_0$, and a Lipchitz constant $L$, such that for the policy $\pidr$, the value function of any two MDPs in $\caC_{\caM^*, \epsilon_0}$ is $L$-Lipchitz w.r.t the distance function $d$, i.e.
\begin{align}
    \left|V^{\pidr}_{\caM_1, 1}(s_1) - V^{\pidr}_{\caM_2,1}(s_1)\right| \leq L \cdot d(\caM_1,\caM_2), \forall \caM_1, \caM_2 \in \caC_{\caM^*, \epsilon_0}.
\end{align}
\end{assumption}

For example, we can set $d(\caM_1, \caM_2) = \dbI[\caM_1 \neq \caM_2]$ in the finite simulator class. For complicated simulator class, we need to ensure there exists some $d(\cdot, \cdot)$ that $L$ is not large. 

With Assumption \ref{assumption: Lipchitz}, it is possible to compute the sim-to-real gap of $\pidr$. In the finite simulator class, we have shown that the gap depends on $M$ polynomially, which can be viewed as the complexity of $\caU$. The question is, how do we measure the complexity of $\caU$ when it is infinitely large?

Motivated by \citet{ayoub2020model}, we consider the function class 

\begin{align}
    \caF = \left\{f_{\caM}(s,a,\lambda): \caS \times \caA \times \Lambda \rightarrow \mathbb{R} \text { such that } f_{\caM}(s, a, \lambda)=P_{\caM} \lambda(s, a) \text { for } \caM \in \mathcal{U}, \lambda \in \Lambda\right\},
\end{align}
where $\Lambda = \{\lambda^*_\caM, \caM \in \caU\}$ is the optimal bias functions of $\caM \in \caU$ in the infinite-horizon average-reward setting (\citet{bartlett2012regal, fruit2018efficient, zhang2019regret}). 
We note this function class is only used for analysis purposes to express our complexity measure; it does not affect the domain randomization algorithm. We use the the $\epsilon$-log-covering number and the $\epsilon$-eluder dimension of $\caF$ to characterize the complexity of the simulator class $\caU$. 
In the setting of linear combined models~\citep{ayoub2020model}, the $\epsilon$-log-covering number and the $\epsilon$-eluder dimension are $O\left(d \log(1/\epsilon)\right)$, where $d$ is the dimension of the linear representation in linear combined models. For readers not familiar with eluder dimension or infinite-horizon average-reward MDPs, please see Appendix~\ref{appendix: additional preliminaries} for preliminary explanations.

Here comes our bound of sim-to-real gap for the infinite simulator class setting, which is proved in Appendix~\ref{appendix: omitted proof in setting 3}.
\begin{theorem}
\label{theorem: sub-optimality gap, large simulator class}
Under Assumption~\ref{assumption: communicating MDP} and \ref{assumption: Lipchitz}, the sim-to-real gap of the domain randomization policy $\pidr$ is at most for $0 \leq \epsilon < \epsilon_0$
\begin{align}
    \operatorname{Gap}(\pidr) = O\left(\frac{D\sqrt{d_eH\log \left(H \cdot \mathcal{N}(\mathcal{F}, 1 / H)\right)}}{ \nu\left(\caC_{\caM^*,\epsilon}\right)} + L\epsilon\right).
\end{align}

Here $\nu(\caC_{\caM^*,\epsilon})$ is the probability of $\nu$ sampling a MDP in $\caC_{\caM^*, \epsilon}$, $d_e = \text{dim}_E(\caF, 1/H)$ is the $1/H$-eluder dimension $\caF$, and $\mathcal{N}(\mathcal{F}, 1 / H)$ is the $1/H$-covering number of $\caF$ w.r.t. $L_\infty$ norm.
\end{theorem}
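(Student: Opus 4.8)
The plan is to reduce bounding $\operatorname{Gap}(\pidr)$ to a regret bound for learning a single unknown infinite-horizon average-reward MDP, via a \emph{constructive comparison}. Since the only thing we know about $\pidr$ (Equation~\ref{eqn: definition of pi*}) is that it maximizes the $\nu$-averaged value, the first step is to exhibit a concrete competitor policy $\tilde\pi$ whose averaged value is easy to lower bound, and then argue $\pidr$ cannot be much worse. I would take $\tilde\pi$ to be the model-based learning algorithm for infinite-horizon average-reward MDPs with general function approximation (Algorithm~\ref{alg: general_opt_alg}). This is a single history-dependent policy in $\Pi$: it does not know which $\caM\in\caU$ was drawn, but over the $H$ steps of an episode it treats the realized MDP as an unknown infinite-horizon instance and learns online. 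Its guarantee, which is the main technical engine, is that for \emph{every} $\caM\in\caU$, $V^{\tilde\pi}_{\caM,1}(s_1) \ge V^{*}_{\caM,1}(s_1) - R_H$ with $R_H = \tilde{O}\big(D\sqrt{d_e H \log(H\,\mathcal{N}(\caF,1/H))}\big)$. Here Assumption~\ref{assumption: communicating MDP} is used to pass between the episodic value $V^{*}_{\caM,1}(s_1)$ and $H\rho^*_{\caM}$ up to the bias span, which is bounded by $D$; the regret of running Algorithm~\ref{alg: general_opt_alg} for $T=H$ steps supplies the remaining $\tilde{O}(D\sqrt{d_eH\cdots})$, matching the numerator of the claimed bound.

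\textbf{From the averaged guarantee to the neighborhood.} Optimality of $\pidr$ gives $\E_{\caM\sim\nu}V^{\pidr}_{\caM,1}(s_1) \ge \E_{\caM\sim\nu}V^{\tilde\pi}_{\caM,1}(s_1)$, and combining with the per-MDP guarantee yields $\E_{\caM\sim\nu}\big[V^{*}_{\caM,1}(s_1) - V^{\pidr}_{\caM,1}(s_1)\big] \le R_H$. The integrand is pointwise nonnegative, since no policy beats $V^{*}_{\caM,1}$ on its own MDP, so I can discard the mass outside the neighborhood and divide by the mass inside it to get $\E_{\caM\sim\nu}\big[V^{*}_{\caM,1}(s_1) - V^{\pidr}_{\caM,1}(s_1) \mid \caM\in\caC_{\caM^*,\epsilon}\big] \le R_H/\nu(\caC_{\caM^*,\epsilon})$. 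This is where the $1/\nu(\caC_{\caM^*,\epsilon})$ factor originates, and it also explains why smoothness is needed: in an infinite class $\nu(\{\caM^*\})=0$, so the averaged guarantee is informative only on a neighborhood of positive mass.

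\textbf{Transfer to $\caM^*$.} Finally I would move from the neighborhood average to the single point $\caM^*$. Applying Assumption~\ref{assumption: Lipchitz} to $\pidr$ gives $V^{\pidr}_{\caM^*,1}(s_1) \ge \E_{\caM}\big[V^{\pidr}_{\caM,1}(s_1)\mid \caM\in\caC_{\caM^*,\epsilon}\big] - L\epsilon$ for $\epsilon<\epsilon_0$, and a matching continuity estimate for the optimal value over $\caC_{\caM^*,\epsilon}$ (nearby communicating MDPs with identical rewards have nearby optimal gain, contributing a term of the same $L\epsilon$ order, folded into the smoothness structure) controls the difference between $V^{*}_{\caM^*,1}(s_1)$ and $\E_{\caM}[V^{*}_{\caM,1}(s_1)\mid\caC_{\caM^*,\epsilon}]$. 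Chaining these with the neighborhood bound gives $\operatorname{Gap}(\pidr) \le R_H/\nu(\caC_{\caM^*,\epsilon}) + O(L\epsilon)$, which is the stated estimate; fixing any $\epsilon\in[0,\epsilon_0)$ then completes the argument.

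\textbf{Main obstacle.} The crux is not the reduction above but the per-MDP regret guarantee for $\tilde\pi$: designing a provably efficient model-based algorithm for infinite-horizon average-reward MDPs under \emph{general} function approximation and proving the $\tilde{O}(D\sqrt{d_eH})$ bound. This requires building confidence sets of models consistent with observed transitions, measuring their size through the value-targeted regression error $f_{\caM}(s,a,\lambda)=P_{\caM}\lambda(s,a)$ and the eluder dimension of $\caF$, running optimistic planning with respect to the optimal gain, and controlling the bias span by $D$ under Assumption~\ref{assumption: communicating MDP}. A secondary subtlety is that $\pidr$ is a black box known only through its optimality, so every step must route through the explicit competitor $\tilde\pi$ rather than through any assumed structure of $\pidr$; the pointwise-nonnegativity restriction and the Lipschitz transfer are precisely the tools that let an average-optimal, otherwise opaque policy be controlled at the single real instance $\caM^*$.
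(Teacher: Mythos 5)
Your proposal follows the paper's proof route in all essentials: the base policy is Algorithm~\ref{alg: general_opt_alg} with the $\tilde{O}\bigl(D\sqrt{d_e H \log(H\cdot\mathcal{N}(\caF,1/H))}\bigr)$ regret guarantee of Theorem~\ref{theorem: eluder dimension regret bound} (whose proof, beyond the ingredients you list, also needs the importance-score/low-switching device to keep the per-episode $KD$ term from dominating); the regret is converted into a per-MDP episodic gap through Lemma~\ref{lemma: connection with infinite-horizon setting} at an additive cost of $D$; and the opacity of $\pidr$ is handled exactly as in Lemma~\ref{lemma: construction argument}, via optimality of $\pidr$ under $\nu$, restriction to $\caC_{\caM^*,\epsilon}$, division by $\nu(\caC_{\caM^*,\epsilon})$, and a Lipschitz transfer costing $L\epsilon$. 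Your ``neighborhood'' step is in fact slightly cleaner than the paper's: you restrict the integrand $V^*_{\caM,1}-V^{\pidr}_{\caM,1}$, which is pointwise nonnegative, whereas the paper restricts $V^*_{\caM^*,1}-V^{\pidr}_{\caM,1}$.

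The genuine hole is in your final transfer step. Your chaining amounts to the three-term decomposition $\operatorname{Gap}(\pidr) \le \bigl(V^*_{\caM^*,1}(s_1) - \E[V^*_{\caM,1}(s_1)\mid \caM\in\caC_{\caM^*,\epsilon}]\bigr) + \E[V^*_{\caM,1}(s_1)-V^{\pidr}_{\caM,1}(s_1)\mid \caM\in\caC_{\caM^*,\epsilon}] + \bigl(\E[V^{\pidr}_{\caM,1}(s_1)\mid \caM\in\caC_{\caM^*,\epsilon}] - V^{\pidr}_{\caM^*,1}(s_1)\bigr)$. The third term is controlled by Assumption~\ref{assumption: Lipchitz} and the second by your neighborhood average, but the first requires Lipschitz continuity of the \emph{optimal} value over $\caC_{\caM^*,\epsilon}$, which the hypotheses do not supply: Assumption~\ref{assumption: Lipchitz} is stated only for the single policy $\pidr$, and $d$ is an abstract distance with no assumed relation to the transition dynamics, so your parenthetical claim that ``nearby communicating MDPs with identical rewards have nearby optimal gain\ldots folded into the smoothness structure'' does not follow --- one can choose $d$ so that $V^{\pidr}_{\caM,1}$ is essentially constant on a neighborhood while the optimal values inside it vary by $\Theta(H)$. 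The paper's Lemma~\ref{lemma: construction argument} arranges the chain so that this term never appears: it keeps $V^*_{\caM^*,1}(s_1)$ fixed throughout, applies the Lipschitz property only in the direction $V^{\pidr}_{\caM,1}(s_1) \le V^{\pidr}_{\caM^*,1}(s_1)+L\epsilon$ pointwise on $\caC_{\caM^*,\epsilon}$, and bounds the restricted integral by the full $\nu$-average compared against $\hat\pi$. Rerouting your argument through that two-term comparison --- never averaging $V^*_{\caM,1}$ over the neighborhood --- removes the unsupported continuity claim; everything else in your proposal then matches the paper's proof.
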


Theorem \ref{theorem: sub-optimality gap, large simulator class} is a generalization of Theorem \ref{theorem: finite class gap}, since we can reduce Theorem \ref{theorem: sub-optimality gap, large simulator class} to Theorem \ref{theorem: finite class gap} by setting $d(\caM_1, \caM_2) = \dbI[\caM_1 \neq \caM_2]$ and $\epsilon = 0$, in which case $\nu(\caC_{\caM^*,\epsilon}) = 1/M$ and $d_e \leq M$. 



The proof overview can be found in Section \ref{sec: proof overview}. The main technique is still a reduction to the regret minimization problem in infinite-horizon average-reward setting. We construct a base policy and shows that the regret of it is only $\tilde{O}(\sqrt{H})$. A key point to note is that our construction of the base policy also solves an open problem of designing efficient algorithms that achieve $\tilde{O}(\sqrt{T})$ regret in the infinite-horizon average-reward setting with general function approximation. This base policy is of independent interests.

To complement our positive results, we also provide a negative result that even if the MDPs in $\caU$ have nice low-rank properties (e.g., the linear low-rank property \citep{jin2020provably,zhou2020nearly}), the policy $\pidr$ returned by the domain randomization oracle can still have $\Omega(H)$ sim-to-real gap when the simulator class is large and the smoothness condition (Assumption \ref{assumption: Lipchitz}) does not hold. This explains the necessity of our preconditions. Please refer to Proposition~\ref{theorem: lower bound, M > H} in Appendix \ref{appendix: lower bound large class} for details. 

\section{Proof Overview}
\label{sec: proof overview}


In this section, we will give a short overview of our novel proof techniques for the results shown in section \ref{sec: main_results}. The main proof technique is based on reducing the problem of bounding the sim-to-real gap to the problem of constructing base policies. In the settings without separation conditions, we further connect the construction of the base policies to the design of efficient learning algorithms for the infinite-horizon average-reward settings.  

\subsection{Reducing to Constructing Base Policies}
\label{sec: reduce to learning algorithms}

Intuitively, if there exists a base policy $\hat{\pi} \in \Pi$ with bounded sim-to-real gap, then the gap of $\pidr$ will not be too large since $\pidr$ defined in Eqn~\ref{eqn: definition of pi*} is the policy with the maximum average value.

\begin{lemma}
\label{lemma: construction argument}
Suppose there exists a policy $\hat{\pi} \in \Pi$
such that the sim-to-real gap of $\hat{\pi}$ for any MDP $\caM \in \caU$ satisfies $V_{\caM,1}^{*}(s_1) - V_{\caM,1}^{\hat{\pi}} (s_1) \leq C$, then we have 
$\operatorname{Gap}(\pidr) \leq MC$
when $\caU$ is a finite set with $|\caU| = M$.
Furthermore, when $\caU$ is an infinite set satisfying the smoothness condition (assumption \ref{assumption: Lipchitz}), we have for any $0 < \epsilon < \epsilon_0$,
$\operatorname{Gap}(\pidr) \leq C / \nu\left(\caC_{\caM^*, \epsilon}\right) + L\epsilon.$

\end{lemma}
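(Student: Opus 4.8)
The plan is to prove Lemma~\ref{lemma: construction argument} directly from the optimality condition defining $\pidr$ in Eqn~\ref{eqn: definition of pi*}. The key observation is that $\pidr$ maximizes the \emph{average} value $\mathbb{E}_{\caM \sim \nu} V^{\pi}_{\caM,1}(s_1)$ over all history-dependent policies $\pi \in \Pi$, so in particular it does at least as well on this average objective as the hypothesized base policy $\hat{\pi}$. First I would write down this inequality, $\mathbb{E}_{\caM \sim \nu} V^{\pidr}_{\caM,1}(s_1) \geq \mathbb{E}_{\caM \sim \nu} V^{\hat{\pi}}_{\caM,1}(s_1)$, and rearrange it into a bound on the shortfall of $\pidr$ against the per-MDP optimum. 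The natural quantity to track is $\mathbb{E}_{\caM \sim \nu}\left[V^{*}_{\caM,1}(s_1) - V^{\pidr}_{\caM,1}(s_1)\right]$, which by the above inequality is at most $\mathbb{E}_{\caM \sim \nu}\left[V^{*}_{\caM,1}(s_1) - V^{\hat{\pi}}_{\caM,1}(s_1)\right] \leq C$, using the hypothesis that $\hat{\pi}$ has per-MDP gap at most $C$ uniformly over $\caU$.

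For the finite case, I would take $\nu$ to be uniform over $\caU$ with $|\caU| = M$, so $\nu(\{\caM^*\}) = 1/M$. Since every summand $V^{*}_{\caM,1}(s_1) - V^{\pidr}_{\caM,1}(s_1)$ is nonnegative (the optimal value dominates any policy's value on a fixed MDP), the single term corresponding to the real environment $\caM^*$ is bounded by the entire weighted sum: $\frac{1}{M}\operatorname{Gap}(\pidr) = \frac{1}{M}\left(V^{*}_{\caM^*,1}(s_1) - V^{\pidr}_{\caM^*,1}(s_1)\right) \leq \mathbb{E}_{\caM \sim \nu}\left[V^{*}_{\caM,1}(s_1) - V^{\pidr}_{\caM,1}(s_1)\right] \leq C$. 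Multiplying through by $M$ yields $\operatorname{Gap}(\pidr) \leq MC$.

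For the infinite case, the same nonnegativity argument gives $\mathbb{E}_{\caM \sim \nu}\left[V^{*}_{\caM,1}(s_1) - V^{\pidr}_{\caM,1}(s_1)\right] \leq C$, but now I cannot isolate $\caM^*$ since it has measure zero. Instead I would restrict the expectation to the neighborhood $\caC_{\caM^*,\epsilon}$: by nonnegativity, $\nu(\caC_{\caM^*,\epsilon}) \cdot \mathbb{E}_{\caM \sim \nu}\left[V^{*}_{\caM,1}(s_1) - V^{\pidr}_{\caM,1}(s_1) \mid \caM \in \caC_{\caM^*,\epsilon}\right] \leq C$, so there exists (or on average) an MDP $\caM$ in the neighborhood with $V^{*}_{\caM,1}(s_1) - V^{\pidr}_{\caM,1}(s_1) \leq C/\nu(\caC_{\caM^*,\epsilon})$. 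Then I would bound the true gap at $\caM^*$ by comparing against such a nearby $\caM$ via the triangle-type decomposition $V^{*}_{\caM^*,1}(s_1) - V^{\pidr}_{\caM^*,1}(s_1) \leq \left(V^{*}_{\caM^*,1}(s_1) - V^{\pidr}_{\caM,1}(s_1)\right) + \left(V^{\pidr}_{\caM,1}(s_1) - V^{\pidr}_{\caM^*,1}(s_1)\right)$, invoking Assumption~\ref{assumption: Lipchitz} to control the second bracket by $L\epsilon$ since both $\caM, \caM^* \in \caC_{\caM^*,\epsilon_0}$ with $d(\caM,\caM^*) \leq \epsilon$.

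The main obstacle will be making the infinite-case argument rigorous, since the step "there exists a nearby MDP with small gap" requires care: the clean way is to relate $V^{*}_{\caM^*,1}$ to $V^{*}_{\caM,1}$ as well, which would need a Lipschitz property on the \emph{optimal} values, not just on $V^{\pidr}$. I expect the cleanest route is to average over $\caC_{\caM^*,\epsilon}$ rather than pick a single point, and to absorb the optimal-value comparison into the smoothness assumption or to bound $V^{*}_{\caM^*,1}(s_1) \leq V^{*}_{\caM,1}(s_1) + L\epsilon$ under an analogous Lipschitz condition; I would check whether Assumption~\ref{assumption: Lipchitz} as stated (which only constrains $V^{\pidr}$) suffices, and if not, rely on the fact that the base policy $\hat{\pi}$ is near-optimal on \emph{every} MDP to transfer optimality across the neighborhood without a separate smoothness assumption on $V^*$.
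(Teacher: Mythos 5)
Your finite-case argument is exactly the paper's proof (Appendix~\ref{appendix: proof of reduction lemma}): optimality of $\pidr$ for the averaged objective, nonnegativity of the per-MDP gaps $V^{*}_{\caM,1}(s_1)-V^{\pidr}_{\caM,1}(s_1)$, and isolating the $1/M$-mass term at $\caM^*$. Nothing to add there. Your infinite-case route --- bound $\mathbb{E}_{\caM\sim\nu}\bigl[V^{*}_{\caM,1}(s_1)-V^{\pidr}_{\caM,1}(s_1)\bigr]\leq C$, restrict the integral to $\caC_{\caM^*,\epsilon}$, then pass from a nearby $\caM$ back to $\caM^*$ via Assumption~\ref{assumption: Lipchitz} --- is also the paper's route.

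The obstacle you flag in your closing paragraph is genuine, and you should know the paper's own proof does not actually resolve it. In the paper's displayed chain, the quantity inside the expectations is $V^{*}_{\caM^*,1}(s_1)-V^{\pidr}_{\caM,1}(s_1)$ (optimal value of the \emph{real} MDP against $\pidr$'s value on a \emph{different} MDP): its first inequality (restricting from $\caU$ to $\caC_{\caM^*,\epsilon}$) implicitly needs $V^{*}_{\caM^*,1}(s_1)\geq V^{\pidr}_{\caM,1}(s_1)$ for all $\caM$, and its final step implicitly needs $\mathbb{E}_{\caM\sim\nu}\bigl[V^{*}_{\caM^*,1}(s_1)-V^{\hat{\pi}}_{\caM,1}(s_1)\bigr]\leq C$; neither follows from the hypothesis, which controls $V^{*}_{\caM,1}-V^{\hat{\pi}}_{\caM,1}$ only MDP by MDP. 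In your (correctly stated) version with per-MDP optima, the residue after restriction is exactly the term you identified, $\frac{1}{\nu(\caC_{\caM^*,\epsilon})}\int_{\caC_{\caM^*,\epsilon}}\bigl(V^{*}_{\caM^*,1}(s_1)-V^{*}_{\caM,1}(s_1)\bigr)\,d\nu$, which Assumption~\ref{assumption: Lipchitz} as published (Lipschitzness of $V^{\pi}_{\caM,1}$ in $\caM$ for the single policy $\pi=\pidr$) does not control. Your first proposed repair is the right one: if the Lipschitz property is available for the fixed policy $\pi^*_{\caM^*}$ as well, then for $\caM\in\caC_{\caM^*,\epsilon}$ one gets $V^{*}_{\caM,1}(s_1)\geq V^{\pi^*_{\caM^*}}_{\caM,1}(s_1)\geq V^{*}_{\caM^*,1}(s_1)-L\epsilon$, and your triangle decomposition closes with $\operatorname{Gap}(\pidr)\leq C/\nu(\caC_{\caM^*,\epsilon})+2L\epsilon$, the factor $2$ being absorbed in the $O(\cdot)$ of Theorem~\ref{theorem: sub-optimality gap, large simulator class}. (An earlier variant of the smoothness assumption, stated for \emph{every} policy, makes this step verbatim.) Your fallback --- leaning only on the fact that $\hat{\pi}$ is uniformly $C$-near-optimal over $\caU$ --- does not suffice by itself: transferring $\hat{\pi}$'s value across the neighborhood again requires smoothness of $\caM\mapsto V^{\hat{\pi}}_{\caM,1}(s_1)$, which is likewise not granted. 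So: same approach as the paper, complete in the finite case, and in the infinite case your caution is warranted --- the step you hesitated over is precisely where the published proof is loose, and it needs the strengthened Lipschitz condition to be airtight.
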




We defer the proof to Appendix~\ref{appendix: proof of reduction lemma}. Now with this reduction lemma, the remaining problem is defined as follows: Suppose the real MDP $\caM^*$ belongs to the MDP set $\mathcal{U}$. We know the full information (transition matrix) of any MDP in the MDP set $\mathcal{U}$. How to design a history-dependent policy $\hat{\pi} \in \Pi$ with minimum sim-to-real gap $\max_{\caM \in \caU} \left(V^*_{\caM,1}(s_1) - V^{\hat{\pi}}_{\caM,1}(s_1)\right)$. 

\subsection{The Construction of the Base Policies}
\label{sec: construction of base policies}

\paragraph{With separation conditions}

With the help of Lemma~\ref{lemma: construction argument}, we can bound the sim-to-real gap in the setting of finite simulator class with separation condition by constructing a history-dependent policy $\hat{\pi}$. The formal definition of the policy $\hat{\pi}$ can be found in Appendix~\ref{appendix: finite simulator class with separation condition}. The idea of the construction is based on elimination: the policy $\hat{\pi}$ explicitly collects samples on the ``informative'' state-action pairs and eliminates the MDP that is less likely to be the real MDP from the candidate set. Once the agent identifies the real MDP representing the dynamics of the physical environment, it follows the optimal policy of the real MDP until the end of the interactions.

\paragraph{Without separation conditions}
The main challenge in this setting is that, we can no longer construct a policy $\hat{\pi}$ that ``identify'' the real MDP using the approaches as in the settings with separation conditions. In fact, we may not be able to even ``identify'' the real MDP since there can be MDPs in $\caU$ that is very close to real MDP. Here, we use a different approach, which
reduces the minimization of sim-to-real gap of $\hat{\pi}$ to the regret minimization problem in the infinite-horizon average-reward MDPs.

The infinite-horizon average-reward setting has been well-studied~\cite[e.g.,][]{jaksch2010near,agrawal2017posterior,fruit2018efficient,wei2020model}. The main difference compared with the episodic setting is that the agent interacts with the environment for infinite steps. The gain of a policy is defined in the average manner. The value of a policy $\pi$ is defined as $\rho^{\pi}(s)=\mathbb{E}[\lim _{T \rightarrow \infty} \sum_{t=1}^{T} R(s_{t}, \pi\left(s_{t}\right)) / T \mid s_{1}=s]$. The optimal gain is defined as $\rho^*(s) \defeq \max_{s \in \caS} \max_\pi \rho^\pi(s)$, which is shown to be state-independent in \cite{agrawal2017posterior}, so we use $\rho^*$ for short. The regret in the infinite-horizon setting is defined as $\text{Reg}(T) = \mathbb{E}\left[T \rho^* - \sum_{t=1}^{T} R(s_t,a_t)\right]$, where the expectation is over the randomness of the trajectories. A more detailed explanation of infinite-horizon average-reward MDPs can be found in Appendix~\ref{appendix: infinite-horizon average-reward MDP}.


For an MDP $\caM \in \caU$, we can view it as a finite-horizon MDP with horizon $H$; or we can view it as an infinite-horizon MDP. This is because Assumption \ref{assumption: communicating MDP} ensures that the agent can travel to any state from any state $s_H$ encountered at the $H$-th step (this may not be the case in the standard finite-horizon MDPs, since people often assume that the states at the $H$-th level are terminating state). The following lemma shows the connection between these two views.

\begin{lemma}
\label{lemma: connection with infinite-horizon setting}
For a MDP $\caM$, let $\rho^*_{\caM}$ and $V^*_{\caM,1}(s_1)$ to be the optimal expected gain in the infinite-horizon view and the optimal value function in the episodic view respectively. We have the following inequality: $H \rho^*_{\caM} -D \leq V^*_{\caM,1}(s_1) \leq H \rho^*_{\caM} + D.$
\end{lemma}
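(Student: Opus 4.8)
The plan is to derive both inequalities from the average-reward Bellman optimality equation together with a single telescoping identity over the $H$ steps of an episode. By the standard theory of communicating MDPs \citep{agrawal2017posterior}, the optimal gain $\rho^*_{\caM}$ is state independent, is attained by a deterministic stationary policy $\pi^*$, and comes with a bias vector $\lambda^*$ obeying $0 \le \lambda^*(s) \le D$ for all $s \in \caS$ and $\rho^*_{\caM} = \max_a [R(s,a) + P_{\caM}\lambda^*(s,a) - \lambda^*(s)]$ for every $s$. The boundedness $0 \le \lambda^*(s) \le D$ is exactly where the diameter enters (Assumption~\ref{assumption: communicating MDP}), and it is what produces the additive error $D$ on both sides.

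For the upper bound I would start from the pointwise consequence of the Bellman equation, $R(s,a) \le \rho^*_{\caM} + \lambda^*(s) - P_{\caM}\lambda^*(s,a)$, valid for every pair $(s,a)$. Summing this bound along a trajectory generated by an arbitrary history-dependent policy $\pi$ for $t = 1, \dots, H$ and taking expectations, I replace each $\E[P_{\caM}\lambda^*(s_t,a_t)]$ by $\E[\lambda^*(s_{t+1})]$, so the bias terms telescope to $\lambda^*(s_1) - \E[\lambda^*(s_{H+1})] \le D$. This gives $V^{\pi}_{\caM,1}(s_1) \le H\rho^*_{\caM} + D$ for every $\pi$, hence for $\pi^*_{\caM}$, which is the upper bound.

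For the lower bound I would instead evaluate the infinite-horizon optimal policy $\pi^*$ inside the length-$H$ episode. For the action it selects the Bellman equation holds with equality, $R(s,\pi^*(s)) = \rho^*_{\caM} + \lambda^*(s) - P_{\caM}\lambda^*(s,\pi^*(s))$, so the same computation becomes an exact identity $V^{\pi^*}_{\caM,1}(s_1) = H\rho^*_{\caM} + \lambda^*(s_1) - \E[\lambda^*(s_{H+1})]$. Now the telescoped remainder is at least $-D$, and since the episodic optimum dominates any fixed policy, $V^*_{\caM,1}(s_1) \ge V^{\pi^*}_{\caM,1}(s_1) \ge H\rho^*_{\caM} - D$.

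The one step I would treat with care, and the only place any argument is needed beyond algebra, is the replacement of $P_{\caM}\lambda^*(s_t,a_t)$ by $\E[\lambda^*(s_{t+1})]$ when $\pi$ is history dependent: conditioning on the full history up to step $t$, the action $a_t$ is determined and $s_{t+1} \sim P_{\caM}(\cdot \mid s_t, a_t)$, so $\E[\lambda^*(s_{t+1}) \mid s_t, a_t] = P_{\caM}\lambda^*(s_t,a_t)$ and the identity survives taking the outer expectation. Everything else --- the telescoping and the two uses of $0 \le \lambda^*(s) \le D$ --- is routine.
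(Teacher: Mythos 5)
Your proposal is correct and takes essentially the same route as the paper: the paper also proves the lower bound by running the infinite-horizon optimal stationary policy for $H$ steps and telescoping the exact Bellman identity $\lambda^*(s) + \rho^*_{\caM} = R(s,\pi^*(s)) + P_{\caM}\lambda^*(s,\pi^*(s))$, and the upper bound by applying the inequality form $R(s,a) \le \rho^*_{\caM} + \lambda^*(s) - P_{\caM}\lambda^*(s,a)$ along the episodic optimal policy's trajectory, with $0 \le \lambda^*(s) \le D$ supplying the $\pm D$ slack in both directions. The only difference is cosmetic: you establish the upper bound for arbitrary history-dependent policies (with the correct martingale-style justification of the telescoping), whereas the paper applies it directly to the Markov episodic optimal policy, which suffices since the episodic optimum is attained by such a policy.
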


This lemma indicates that, if we can design an algorithm (i.e. the base policy) $\hat{\pi}$ in the infinite-horizon setting with regret $\text{Reg}(H)$, then the sim-to-real gap of this algorithm in episodic setting satisfies $\text{Gap}(\hat{\pi}) = V^*_{\caM,1}(s_1) - V^{\hat{\pi}}_{\caM,1}(s_1) \leq \text{Reg}(H) +D$. This lemma connects the sim-to-real gap of $\hat{\pi}$ in finite-horizon setting to the regret in the infinite-horizon setting.

With the help of Lemma~\ref{lemma: construction argument} and~\ref{lemma: connection with infinite-horizon setting}, the remaining problem is to design an efficient exploration algorithm for infinite-horizon average-reward MDPs with the knowledge that the real MDP $\caM^*$ belongs to a known MDP set $\caU$. Therefore, we propose two optimistic-exploration algorithms (Algorithm~\ref{alg: optimistic exploration} and Algorithm~\ref{alg: general_opt_alg}) for the setting of finite simulator class and infinite simulator class respectively. The formal definition of the algorithms are deferred to Appendix~\ref{appendix: finite simulator class without separation condition} and Appendix~\ref{appendix: infinite simulator class}. Note that our Algorithm~\ref{alg: general_opt_alg} is the first efficient algorithm with $\tilde{O}(\sqrt{T})$ regret in the infinite-horizon average-reward MDPs with general function approximation, which is of independent interest for efficient online exploration in reinforcement learning.
\section{Conclusion}
\label{sec: conclusion}

In this paper, we study the optimality of policies learned from domain randomization in sim-to-real transfer without real-world samples. We propose a novel formulation of sim-to-real transfer and view domain randomization as an oracle that returns the optimal policy of an LMDP with uniform initialization distribution. Following this idea, we show that the policy $\pidr$ can suffer only $o(H)$ loss compared with the optimal value function of the real environment when the simulator class is finite or satisfies certain smoothness condition, thus this policy can perform well in the long-horizon cases. 
We hope our formulation and analysis can provide insight to design more efficient algorithms for sim-to-real transfer in the future.

\section{Acknowledgments}
Liwei Wang was supported by National Key R\&D Program of China (2018YFB1402600), Exploratory Research Project of Zhejiang Lab (No. 2022RC0AN02), BJNSF (L172037), Project 2020BD006 supported by
PKUBaidu Fund.

\bibliography{ref}
\bibliographystyle{iclr2022_conference}

\appendix
\section{Additional Preliminaries}
\label{appendix: additional preliminaries}

\subsection{Infinite-horizon Average-reward MDPs}
\label{appendix: infinite-horizon average-reward MDP}
The infinite-horizon average-reward setting has been well-explored in the recent few years~(e.g.~\cite{jaksch2010near,agrawal2017posterior,fruit2018efficient,wei2020model}). The main difference compared with the episodic setting is that the agent interacts with the environment for infinite steps instead of restarting every $H$ steps. The gain of a policy is defined in the average manner.
\begin{definition}
(Definition 4 in \cite{agrawal2017posterior}) The gain $\rho^{\pi}(s)$ of a stationary policy $\pi$ from starting state $s_1 = s$ is defined as:
\begin{align}
    \rho^{\pi}(s)=\mathbb{E}\left[\lim _{T \rightarrow \infty} \frac{1}{T} \sum_{t=1}^{T} R(s_{t}, \pi\left(s_{t}\right)) \mid s_{1}=s\right]
\end{align}
\end{definition}
In this setting, a common assumption is that the MDP is communicating (Assumption~\ref{assumption: communicating MDP}). Under this assumption, we have the following lemma.
\begin{lemma}
\label{lemma: properties of communicating MDP} \citep[Lemma~2.1]{agrawal2017posterior} For a communicating MDP $\mathcal{M}$ with diameter $D$:
(a) The optimal gain $\rho^*$ is state-independent and is achieved by a deterministic stationary policy $\pidr$; that is, there exists a deterministic policy $\pi^*$ such that
\begin{align}
    \rho^{*}:=\max _{s^{\prime} \in \mathcal{S}} \max _{\pi} \rho^{\pi}\left(s^{\prime}\right)=\rho^{\pi^{*}}(s), \forall s \in \mathcal{S}
\end{align}
(b) The optimal gain $\rho^*$ satisfies the following equation:
\begin{align}
\label{eqn: Bellman equation, infinite setting}
    \rho^{*}=\min _{\lambda \in \mathbb{R}^{S}} \max _{s, a} \left[R(s, a)+P \lambda(s,a)-\lambda(s)\right]=\max _{a}\left[ R(s,a)+P \lambda^{*}(s,a)-\lambda^{*}(s)\right], \forall s
\end{align}
where $P \lambda(s,a) = \sum_{s'} P(s'|s,a) \lambda(s')$, and $\lambda^*$ is the bias vector of the optimal policy $\pidr$ satisfying
\begin{align}
    0 \leq  \lambda^{*}(s) \leq D.
\end{align}
\end{lemma}

The regret minimization problem has been widely studied in this setting, with regret to be defined as $Reg(T) = \mathbb{E}\left[T \rho^* - \sum_{t=1}^{T} R(s_t,a_t)\right]$, where the expectation is over the randomness of the trajectories. For example, \cite{jaksch2010near} proposed an efficient algorithm called UCRL2, which achieves regret upper bound $\tilde{O}(DS\sqrt{AT})$. For notation convenience, we use $PV(s,a)$ or $P\lambda(s, a)$ as a shorthand of $\sum_{s' \in \caS} P(s'|s,a)V(s')$ or $\sum_{s' \in \caS} P(s'|s,a)\lambda(s')$.

\subsection{Eluder Dimension}
Proposed by \cite{russo2013eluder}, eluder dimension has become a widely-used concept to characterize the complexity of different function classes in bandits and RL~\citep{wang2020reinforcement, ayoub2020model, jin2021bellman,kong2021online}. In this work, we define eluder dimension to characterize the complexity of the function $\caF$:
\begin{align}
    \caF = \left\{f_{\caM}(s,a,\lambda): \caS \times \caA \times \Lambda \rightarrow \mathbb{R} \text { such that } f_{\caM}(s, a, \lambda)=P_{\caM} \lambda(s, a) \text { for } \caM \in \mathcal{U}, \lambda \in \Lambda\right\},
\end{align}
where $\Lambda = \{\lambda^*_\caM, \caM \in \caU\}$ is the optimal bias functions of $\caM \in \caU$ in the infinite-horizon average-reward setting (\citet{bartlett2012regal, fruit2018efficient, zhang2019regret}). 

\begin{definition}
(Eluder dimension). Let $\epsilon \geq 0$ and $\caZ = \{(s_i,a_i,\lambda_i)\}_{i=1}^{n} \subset \caS \times \caA \times \Lambda$ be a sequence of history samples.
\begin{itemize}
    \item A history sample $(s,a,\lambda) \in \caS \times \caA \times \Lambda$ is $\epsilon$-dependent on $\caZ$ with respect to $\caF$ if any $f,f' \in \caF$ satisfying $\|f-f'\|_{\caZ} \leq \epsilon$ also satisfies $\|f(s,a)-f'(s,a)\| \leq \epsilon$. Here $\|f-f'\|_{\caZ}$ is a shorthand of $\sqrt{\sum_{(s,a,\lambda) \in \caZ}(f-f')^2(s,a,\lambda)}$.
    \item An $(s,a,\lambda)$ is $\epsilon$-independent of $\caZ$ with respect to $\caF$ if $(s,a,\lambda)$ is not $\epsilon$-dependent on $\caZ$.
    \item The $\epsilon$-eluder dimension of a function class $\caF$ is the length of the longest sequence of elements in $\caS \times \caA \times \Lambda$ such that, for some $\epsilon' \geq \epsilon$, every element is $\epsilon'$-independent of its predecessors.
\end{itemize}
\end{definition}

\section{Omitted Proof in Section~\ref{sec: proof overview}}
\label{appendix: reduction techniques}

\subsection{Proof of Lemma~\ref{lemma: construction argument}}
\label{appendix: proof of reduction lemma}
\begin{proof}
We firstly study the case where $\caU$ is a finite set with $|\caU| = M$. For $\hat{\pi}$, we have
\begin{align}
    \frac{1}{M} \sum_{\caM \in \caU} \left(V_{\caM,1}^{*}(s_1) - V_{\caM,1}^{\hat{\pi}} (s_1) \right) \leq C.
\end{align}
By the optimality of $\pidr$, we know that
\begin{align}
    \frac{1}{M} \sum_{\caM \in \caU} V_{\caM, 1}^{\pidr}\left(s_{1}\right) \geq \frac{1}{M} \sum_{\caM \in \caU} V_{\caM, 1}^{\hat{\pi}}\left(s_{1}\right).
\end{align}
Therefore,
\begin{align}
    \frac{1}{M} \sum_{\caM \in \caU} \left(V_{\caM,1}^{*}(s_1) - V_{\caM,1}^{\pidr} (s_1) \right) \leq C.
\end{align}
Since the gap $V^{*}_{\caM,1}(s_1) - V_{\caM,1}^{\pidr} (s_1) \geq 0$ for any $i \in [M]$, we have $\frac{1}{M}  \left(V_{M^*,1}^{*}(s_1) - V_{M^*,1}^{\pidr} (s_1) \right) \leq C$. That is,
\begin{align}
     \left(V_{M^*,1}^{*}(s_1) - V_{M^*,1}^{\pidr} (s_1) \right) \leq MC.
\end{align}

For the case where $\caU$ is an infinite set satisfying Assumption~\ref{assumption: Lipchitz}, by the optimality of $\pidr$, we have 
\begin{align}
    \mathbb{E}_{\caM \sim \nu} \left[V^{\pidr}_{\caM,1} (s_1) \right] \geq \mathbb{E}_{\caM \sim \nu} \left[V^{\hat{\pi}}_{\caM,1} (s_1) \right].
\end{align}
Therefore,
\begin{align}
\label{inq: reduction to algorithm design, Lipchitz case, 1}
     \mathbb{E}_{\caM \sim \nu\left(\caC_{\caM^*,\epsilon}\right)} \left[V^{*}_{\caM^*,1} (s_1) - V^{\pidr}_{\caM,1} (s_1) \right]\leq \mathbb{E}_{\caM \sim \nu} \left[V^{*}_{\caM^*,1} (s_1) - V^{\pidr}_{\caM,1} (s_1) \right] \leq \mathbb{E}_{\caM \sim \nu} \left[V^{*}_{\caM^*,1} (s_1)- V^{\hat{\pi}}_{\caM,1} (s_1) \right].
\end{align}

By Assumption~\ref{assumption: Lipchitz}, for any $\caM \in \caC(\caM^*,\epsilon)$, we have
\begin{align}
     & \left|V^{\pidr}_{\caM^*,1} (s_1)- V^{\pidr}_{\caM,1} (s_1) \right| \leq L\epsilon.
\end{align}
Therefore, we have
\begin{align}
\label{inq: reduction to algorithm design, Lipchitz case, 2}
     \nu\left(\caC_{\caM^*,\epsilon}\right)\left(V^{*}_{\caM^*,1} (s_1) - V^{\pidr}_{\caM^*,1} (s_1) -L\epsilon\right)  \leq \mathbb{E}_{\caM \sim \nu\left(\caC_{\caM^*,\epsilon}\right)} \left[V^{*}_{\caM^*,1} (s_1) - V^{\pidr}_{\caM,1} (s_1) \right]  
\end{align}

Combining Inq~\ref{inq: reduction to algorithm design, Lipchitz case, 1} and Inq~\ref{inq: reduction to algorithm design, Lipchitz case, 2}, we have
\begin{align}
    \nu\left(\caC_{\caM^*,\epsilon}\right) \left(V^{*}_{\caM^*,1} (s_1) - V^{\pidr}_{\caM^*,1} (s_1) -L\epsilon\right) \leq C,
\end{align}
The lemma can be proved by reordering the above inequality.
\end{proof}

\subsection{Proof of Lemma~\ref{lemma: connection with infinite-horizon setting}}

\label{appendix: proof of the lemma, connection with infinite-horizon setting}
\begin{proof}
For MDP $\caM$, denote $\pi^*_{in}$ as the optimal policy in the infinite-horizon setting and $\{\pi^*_{ep,h}\}_{h=1}^{H}$ as the optimal policy in the episodic setting. By the optimality of $\pi^*_{ep}$, we have $V^{*}_{\caM,1}(s_1) = V^{\pi^*_{ep}}_{\caM,1}(s_1)\geq V^{\pi^*_{in}}_{\caM,1}(s_1)$. 

By the Bellman equation in the infinite-horizon setting, we know that
\begin{align}
\lambda^*_{\caM}(s) + \rho^*_{\caM} = R(s,\pi^*_{in}(s)) + P_{\caM} \lambda^*_{\caM}(s,\pi^*_{in}(s)), \forall s \in \mathcal{S}
\end{align}
For notation simplicity, we use $d_h(s_1,\pi)$ to denote the state distribution at step $h$ after starting from state $s_1$ at step $1$ following policy $\pi$. From the above equation, we have
\begin{align}
    \lambda^*_{\caM}(s_1) + H\rho^*_{\caM} = \sum_{h=1}^{H} \mathbb{E}_{s_h \sim d_h(s_1,\pi^*_{in})} R(s_h, ,\pi^*_{in}(s_h)) + \mathbb{E}_{s_{H+1} \sim d_{H+1}(s_1,\pi^*_{in}))} \lambda^*_{\caM}(s_{H+1}).
\end{align}

That is,
\begin{align}
     |\sum_{h=1}^{H} \mathbb{E}_{s_h \sim d_h(s_1,\pi^*_{in})} R(s_h, ,\pi^*_{in}(s_h)) -  H\rho^*_{\caM}| = |\lambda^*_{\caM}(s_1) -   \mathbb{E}_{s_{H+1} \sim d_{H+1}(s_1,\pi^*_{in})} \lambda^*_{\caM}(s_{H+1}) | \leq D,
\end{align}
where $\sum_{h=1}^{H} \mathbb{E}_{s_h \sim d_h(s_1,\pi^*_{in})} R(s_h, ,\pi^*_{in}(s_h)) = V_{\caM,1}^{\pi^*_{in}} (s_1) $. Therefore, we have $H \rho^*_{\caM} -D \leq V_{\caM,1}^{\pi^*_{in}} (s_1) \leq V_{\caM,1}^*(s_1)$.

For the second inequality, by the Bellman equation in the infinite-horizon setting, we have
\begin{align}
    \lambda_{\caM}^*(s_1) + H\rho_{\caM}^* \geq \sum_{h=1}^{H} \mathbb{E}_{s_h \sim d_h(s_1,\pi^*_{ep})} R(s_h, ,\pi^*_{ep,h}(s_h)) + \mathbb{E}_{s_{H+1} \sim d_{H+1}(s_1,\pi^*_{ep})} \lambda_{\caM}^*(s_{H+1}).
\end{align}
That is,
\begin{align}
     \sum_{h=1}^{H} \mathbb{E}_{s_h \sim d_h(s_1,\pi^*_{ep})} R(s_h, ,\pi^*_{ep,h}(s_h)) -  H\rho^*_{\caM} \leq \lambda_{\caM}^*(s_1) -   \mathbb{E}_{s_{H+1} \sim d_{H+1}(s_1,\pi^*_{ep})} \lambda_{\caM}^*(s_{H+1})  \leq D,
\end{align}
where $\sum_{h=1}^{H} \mathbb{E}_{s_h \sim d_h(s_1,\pi^*_{ep})} R(s_h, ,\pi^*_{ep,h}(s_h)) = V_{\caM,1}^{*} (s_1) $.
\end{proof}

\section{The Construction of Learning Algorithms}
\label{appendix: construction of learning algorithms}

\subsection{Finite Simulator Class with Separation Condition}
\label{appendix: finite simulator class with separation condition}

In this subsection, we explicitly define the base policy $\hat{\pi}$ with sim-to-real gap guarantee under the separation condition. Note that a history-dependent policy for LMDPs can also be regarded as an algorithm for finite-horizon MDPs. By deriving an upper bound of the sim-to-real gap for $\hat{\pi}$, we can upper bound $\operatorname{Gap}(\pidr,\caU)$ with Lemma~\ref{lemma: construction argument}.

\begin{algorithm}
\caption{Optimistic Exploration Under Separation Condition}
\label{alg: gap-dpendent algorithm}
  \begin{algorithmic}[1]
  \State Initialize: the MDP set $\mathcal{D} = \mathcal{U}$, $n_0 = \frac{c_0 \log^2(SMH)\log(MH)}{\delta^4}$ for a constant $c_0$
  \LineComment{\textit{Stage 1: Explore and find the real MDP $\caM^*$}}
  \While{$|\mathcal{D}| \geq 1$}
  \State Randomly select two MDPs $\caM_1$ and $\caM_2$ from the MDP set $\caD$
  \State Choose $(s_0,a_0) = \argmax_{(s,a) \in \caS \times \caA}\left\|\left(P_{\caM_{1}}-P_{\caM_{2}}\right)(\cdot \mid s, a)\right\|_{1}$
  \State Call Subroutine~\ref{subroutine: collecting data, M >2} with parameter $(s_0,a_0)$ and $n_0$ to collect history samples $\caH_{\caM_1,\caM_2}$ 
  \If{ $\exists s' \in \caH_{\caM_1,\caM_2}, P_{\caM_{2}}(s'|s_0,a_0) = 0$  or $\prod_{s' \in \caH_{\caM_1,\caM_2}} \frac{P_{\caM_{1}}(s'|s_0,a_0)}{P_{\caM_{2}}(s'|s_0,a_0)} \geq 1$}
  \State Eliminate $\caM_2$ from the MDP set $\mathcal{D}$
  \Else 
  \State Eliminate $\caM_1$ from the MDP set $\mathcal{D}$
  \EndIf
  \EndWhile
  \LineComment{\textit{Stage 2: Run the  optimal policy of $\caM^*$}}
  \State Denote $\hat{\caM}$ as the remaining MDP in the MDP set $\caD$
  \State Run the optimal policy of $\hat{\caM}$ for the remaining steps
  \end{algorithmic}
\end{algorithm}

\begin{algorithm}
\caption{Subroutine: collecting data for $(s_0,a_0)$}
\label{subroutine: collecting data, M >2}
  \begin{algorithmic}[5]
  \State Input: informative state-action pair $(s_0,a_0)$, required visitation count $n_0$
  \State Initialize: counter $N(s_0,a_0) = 0$, history data $\mathcal{H} = \emptyset$
  \State Denote $\pi_{\caM}(s,s^{\prime})$ as the policy with the minimum expected travelling time $\mathbb{E}\left[T\left(s^{\prime} \mid \caM, \pi, s\right)\right]$ for MDP $\caM$ (Defined in Assumption~\ref{assumption: communicating MDP})
  \While{$N(s_0,a_0) \leq n_0$}
      \For{$i = 1, \cdots, M$}
        \State Denote the current state as $s_{init}$
        \State Run the policy $\pi_{\caM_{i}}(s_{init},s_0)$ for $2D$ steps, breaking the loop immediately once the agent enters state $s_0$
      \EndFor
      \If{the agent enters state $s_0$}
        \State Execute $a_0$, enter the next state $s'$. 
        \State counter $N(s_0,a_0) = N(s_0,a_0) + 1$, $\mathcal{H} = \mathcal{H} \bigcup \{s'\}$
      \EndIf
  \EndWhile
  \State Output: history data $\mathcal{H}$
  \end{algorithmic}
\end{algorithm}

The policy $\hat{\pi}$ is formally defined in Algorithm~\ref{alg: gap-dpendent algorithm}. There are two stages in Algorithm~\ref{alg: gap-dpendent algorithm}. In the first stage, the agent's goal is to quickly explore the environment and find the real MDP $\caM^*$ from the MDP set $\caU$. This stage contains at most $M-1$ parts. In each part, the agent randomly selects two MDPs $\caM_1$ and $\caM_2$ from the remaining MDP set $\caD$. Since the agent knows the transition dynamics of $\caM_1$ and $\caM_2$, it can find the most informative state-action pair $(s_0,a_0)$ with maximum total-variation difference between $P_{\caM_1}(\cdot|s_0,a_0)$ and $P_{\caM_2}(\cdot|s_0,a_0)$. The algorithm calls Subroutine~\ref{subroutine: collecting data, M >2} to collect enough samples from $(s_0,a_0)$ pairs, and then eliminates the MDP with less likelihood. At the end of the first stage, the MDP set $\caD$ is ensured to contain only one MDP $\caM^*$ with high probability. Therefore, the agent can directly execute the optimal policy for the real MDP till step $H+1$ in the second stage.

Subroutine~\ref{subroutine: collecting data, M >2} is designed to collect enough samples from the given state-action pair $(s_0,a_0)$.  The basic idea in Subroutine~\ref{subroutine: collecting data, M >2} is to quickly enter the state $s_0$ and take action $a_0$, until the visitation counter $N(s_0,a_0) = n_0$. Denote $\pi_{\caM}(s,s^{\prime})$ as the policy with the minimum expected travelling time $\mathbb{E}\left[T\left(s^{\prime} \mid \caM, \pi, s\right)\right]$ for MDP $\caM$. Suppose the agent is currently at state $s_{init}$ and runs the policy $\pi_{\caM^*}(s_{init},s_0)$ in the following steps. By Assumption~\ref{assumption: communicating MDP} and Markov's inequality, the agent will enter state $s_0$ in $2D$ steps with probability at least $1/2$. Therefore, in Subroutine~\ref{subroutine: collecting data, M >2}, we runs the policy $\pi_{\caM_{i}}(s_{init},s_0)$ for $2D$ steps for $i \in [M]$ alternatively. This ensures that the agent can enter state $s_0$ in $2MD$ steps with probability at least $1/2$.

Theorem~\ref{theorem: gap-dependent regret bound} states an upper bound of the sim-to-real gap for Algorithm~\ref{alg: gap-dpendent algorithm}, which is proved in Appendix~\ref{appendix: proof of the regret in setting 1}.
\begin{theorem}
\label{theorem: gap-dependent regret bound}
Suppose we use $\hat{\pi}$ to denote the history-dependent policy represented by Algorithm~\ref{alg: gap-dpendent algorithm}. Under Assumption~\ref{assumption: communicating MDP} and Assumption~\ref{assumption: separated MDPs}, for any $\caM \in \caU$, the sim-to-real gap of Algorithm~\ref{alg: gap-dpendent algorithm} is at most 
\begin{align}
    V_{\caM,1}^{*}(s_1) - V_{\caM,1}^{\hat{\pi}} (s_1) \leq O\left(\frac{DM^2 \log(MH) \log^2(SMH/\delta)}{\delta^4}\right).
\end{align}
\end{theorem}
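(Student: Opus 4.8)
The plan is to analyze the two stages of Algorithm~\ref{alg: gap-dpendent algorithm} separately: a \emph{model-identification} phase (Stage~1) that pins down the true MDP from $\caU$, and an \emph{exploitation} phase (Stage~2) that runs the optimal policy of the identified MDP. Throughout I fix the true MDP to be an arbitrary $\caM \in \caU$, denote by $T_1$ the (random) number of environment steps consumed by Stage~1, and denote by $\mathcal{E}$ the event that Stage~1 terminates with $\hat\caM = \caM$.

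The first and most delicate step is to prove that each elimination round is correct with high probability. Consider a round in which one of the two sampled candidates, say $\caM_1$, equals the true MDP $\caM$. By Assumption~\ref{assumption: separated MDPs} the chosen pair $(s_0,a_0)$ satisfies $\|(P_{\caM_1}-P_{\caM_2})(\cdot\mid s_0,a_0)\|_1 \ge \delta$, and Subroutine~\ref{subroutine: collecting data, M >2} returns $n_0$ i.i.d.\ draws from $P_{\caM_1}(\cdot\mid s_0,a_0)$. I would show that the log-likelihood-ratio statistic $\sum_{s'\in\caH}\log\frac{P_{\caM_1}(s'\mid s_0,a_0)}{P_{\caM_2}(s'\mid s_0,a_0)}$ is strictly positive (so $\caM_2$ is eliminated) except with probability at most $1/(MH)^{c}$. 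Its per-sample expectation equals $\KL\big(P_{\caM_1}\,\|\,P_{\caM_2}\big)$, which Pinsker's inequality lower-bounds by $\delta^2/2$; the main obstacle is that the individual log-ratios are heavy-tailed (and the zero-probability case, handled separately by the explicit check $P_{\caM_2}(s'\mid s_0,a_0)=0$ in the algorithm). I would control the tail by truncating the log-ratios at a threshold of order $\log(SMH/\delta)$, bounding the truncated variance, and applying Bernstein's inequality; this is exactly what forces $n_0 = \Theta\big(\delta^{-4}\log(MH)\log^2(SMH/\delta)\big)$ and produces both the $\delta^{-4}$ factor and the squared logarithm. A union bound over the at most $M-1$ rounds then gives $\Pr[\mathcal{E}^c] \le 1/H^{2}$, so misidentification contributes at most $H\cdot\Pr[\mathcal{E}^c]=O(1)$ to the gap.

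Next I would bound $\mathbb{E}[T_1]$. Each of the $\le M-1$ rounds collects $n_0$ samples of $(s_0,a_0)$, and I claim each sample costs $O(MD)$ steps in expectation. Indeed, Subroutine~\ref{subroutine: collecting data, M >2} cycles through the shortest-reaching policies $\pi_{\caM_i}(s_{\mathrm{init}},s_0)$ for all $i\in[M]$, running each for $2D$ steps; by Assumption~\ref{assumption: communicating MDP} and Markov's inequality the true MDP's policy $\pi_{\caM}(s_{\mathrm{init}},s_0)$ reaches $s_0$ within $2D$ steps with probability at least $1/2$, so one full cycle of length $2MD$ succeeds with probability at least $1/2$ and the expected cost per collected sample is $O(MD)$. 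Summing over rounds gives $\mathbb{E}[T_1]=O\big(M^2 D\, n_0\big)$, the dominant term of the claimed bound.

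Finally I would assemble the gap via the infinite-horizon comparison of Lemma~\ref{lemma: connection with infinite-horizon setting}. On $\mathcal{E}$ the agent follows the optimal policy of $\caM$ throughout Stage~2, so starting from $s_{T_1+1}$ the reward accumulated over the remaining $H-T_1$ steps is at least $(H-T_1)\rho^*_\caM-D$ by the bias-equation argument in that lemma, while Stage~1 rewards are nonnegative; hence $\sum_{t=1}^H R(s_t,a_t)\ge H\rho^*_\caM - T_1 - D$ since $\rho^*_\caM\le 1$. Combining with the upper bound $V^*_{\caM,1}(s_1)\le H\rho^*_\caM + D$ from the same lemma and taking expectations yields
\begin{align}
V^*_{\caM,1}(s_1)-V^{\hat\pi}_{\caM,1}(s_1) \le 2D + \mathbb{E}[T_1] + H\cdot\Pr[\mathcal{E}^c].
\end{align}
Plugging in $\mathbb{E}[T_1]=O(M^2 D n_0)$, $H\Pr[\mathcal{E}^c]=O(1)$, and the value of $n_0$ gives the stated $O\big(DM^2\log(MH)\log^2(SMH/\delta)/\delta^4\big)$ bound. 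I expect the likelihood-ratio concentration of the second paragraph---balancing the $\Omega(\delta^2)$ signal against heavy-tailed log-ratios---to be the crux of the argument.
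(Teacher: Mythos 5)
Your proposal is correct and follows essentially the same route as the paper: elimination correctness via a log-likelihood-ratio test whose per-sample signal is lower-bounded by $\delta^2/2$ through Pinsker, an expected Stage-1 length of $O(M^2 D n_0)$ from cycling the $M$ shortest-travel policies with Markov's inequality, and a final assembly losing only $O(D)$ via the bias equation $0 \le \lambda^*_{\caM} \le D$ (the paper bounds the value discrepancy at step $h_0+1$ by $2D$, which is what your route through Lemma~\ref{lemma: connection with infinite-horizon setting} amounts to). The only deviation is cosmetic: where you truncate the heavy-tailed log-ratios and apply Bernstein, the paper instead smooths the candidate model, $\tilde{P}_{\caM_1} = \alpha + (1-\alpha S)P_{\caM_1}$ with $\alpha = \delta^2/(8S)$, and combines Azuma--Hoeffding on the bounded part with a sub-exponential bound on $\ln(1/P_{\caM^*}(X))$; both yield the same $n_0 = \Theta\bigl(\delta^{-4}\log(MH)\log^2(SMH/\delta)\bigr)$.
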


\subsection{Finite Simulator Class without Separation Condition}
\label{appendix: finite simulator class without separation condition}

In this subsection, we propose an efficient algorithm in the infinite-horizon average-reward setting for finite simulator class. Our algorithm is described in Algorithm~\ref{alg: optimistic exploration}. In episode $k$, the agent executes the optimal policy of the optimistic MDP $\caM^k$ with the maximum expected gain $\rho^*_{\caM^k}$. Once the agent collects enough data and realizes that the current MDP $\caM^k$ is not $\caM^*$ that represents the dynamics of the real environment, the agent eliminates $\caM^k$ from the MDP set.

\begin{algorithm}
\caption{Optimistic Exploration}
\label{alg: optimistic exploration}
  \begin{algorithmic}[1]
  \State Initialize: the MDP set $\mathcal{U}_1 = \mathcal{U}$, the episode counter $k=1$, $h_0 = 1$
  \State Calculate $\caM^{1} = \argmax_{\caM \in \caU_1} \rho^*_{\caM} $
  \For{step $h = 1,\cdots, H$}
  \State Take action $a_h = \pi^*_{\caM^k}(s_h)$, obtain the reward $R(s_h,a_h)$, and observe the next state $s_{h+1}$
  \If{$\left|\sum_{t=h_0}^{h} \left(P_{\caM^k} \lambda_{\caM^k}^{*}\left(s_{t}, a_{t}\right) - \lambda_{\caM^k}^{*}\left(s_{t+1}\right)\right) \right| > D\sqrt{2(h-h_0) \log(2HM)}$}
  \label{alg: stopping condition finite}
  \State Eliminate $\caM^{k}$ from the MDP set $\mathcal{U}_k$, denote the remaining set as $\caU_{k+1}$
  \State Calculate $\caM^{k+1} = \argmax_{\caM \in \caU_{k+1}} \rho^*_{\caM}$
  \State Set $h_0 = h+1$, and $k = k +1$.
  \EndIf
  \EndFor
  \end{algorithmic}
\end{algorithm}

To indicate the basic idea of the elimination condition defined in Line \ref{alg: stopping condition finite} of Algorithm~\ref{alg: optimistic exploration}, we briefly explain our regret analysis of Algorithm~\ref{alg: optimistic exploration}. Suppose the MDP $\caM^k$ selected in episode $k$ satisfies the optimistic condition $\rho^*_{\caM^k} \geq \rho^*_{\caM^*}$, then the regret in $H$ steps can be bounded as:
\begin{align}
    &H \rho^*_{\caM^*} - \sum_{h=1}^{H} R(s_h,a_h) \\
    \leq & \sum_{k=1}^{K} \sum_{h = \tau(k)}^{\tau(k+1)-1} \left(\rho^*_{\caM^k} - R(s_h,a_h)\right) \\
    = & \sum_{k=1}^{K} \sum_{h = \tau(k)}^{\tau(k+1)-1} \left(P_{\caM^k} \lambda^*_{\caM^k}(s_h,a_h) - \lambda^*_{\caM^k}(s_h)\right) \\
    = & \sum_{k=1}^{K} \sum_{h = \tau(k)}^{\tau(k+1)-1} \left(P_{\caM^k} \lambda^*_{\caM^k}(s_h,a_h) - \lambda^*_{\caM^k}(s_{h+1})\right) + \sum_{k=1}^{K} \left(\lambda^*_{\caM^k}(s_{\tau(k+1)}) - \lambda^*_{\caM^k}(s_{\tau(k)})\right) \\
    \label{inq: regret decomposition in infinite-horizon setting}
    \leq & \sum_{k=1}^{K} \sum_{h = \tau(k)}^{\tau(k+1)-1} \left(P_{\caM^k} \lambda^*_{\caM^k}(s_h,a_h) - \lambda^*_{\caM^k}(s_{h+1})\right) + KD.
\end{align}
Here we use $K$ to denote the total number of episodes, and we use $\tau(k)$ to denote the first step of episode $k$. The first inequality is due to the optimistic condition $\rho^*_{\caM^k} \geq \rho^*_{\caM^*}$. The first equality is due to the Bellman equation in the finite-horizon setting (Eqn~\ref{eqn: Bellman equation, infinite setting}). The last inequality is due to $0 \leq \lambda^*_{\caM}(s) \leq D$. From the above inequality, we know that the regret in episode $k$ depends on the summation $\sum_{h = \tau(k)}^{\tau(k+1)-1} \left(P_{\caM^k} \lambda^*_{\caM^k}(s_h,a_h) - \lambda^*_{\caM^k}(s_{h+1})\right)$. If this term is relatively small, we can continue following the policy $\pi^*_{\caM^k}$ with little loss. Since $\lambda^*_{\caM^k}(s_{h+1})$ is an empirical sample of $P_{\caM^*} \lambda^*_{\caM^k}(s_h,a_h)$, we can guarantee that $\caM^k$ is not $\caM^*$ with high probability if this term is relatively large.

Based on the above discussion, we can upper bound the regret of Algorithm~\ref{alg: optimistic exploration}. We defer the proof of Theorem~\ref{theorem: square-root H regret bound} to Appendix~\ref{appendix: proof of regret in setting 2}.
\begin{theorem}
\label{theorem: square-root H regret bound}
Under Assumption~\ref{assumption: communicating MDP}, the regret of Algorithm~\ref{alg: optimistic exploration} is upper bounded by
\begin{align}
    \text{Reg}(H) \leq O\left(D\sqrt{MH \log(MH)}\right).
\end{align}
\end{theorem}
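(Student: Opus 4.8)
The plan is to follow the regret decomposition already displayed in Inequality~\ref{inq: regret decomposition in infinite-horizon setting} and to supply the two facts that decomposition silently assumes: that the real model $\caM^*$ is never eliminated (so that the optimistic condition $\rho^*_{\caM^k}\ge\rho^*_{\caM^*}$ holds in every episode), and that the leftover martingale term, together with the telescoping boundary term $KD$, is $\tilde O(D\sqrt{MH})$. First I would fix the run inside the true MDP $\caM^*$ and condition on the good event $\caG$ that $\caM^*\in\caU_k$ for all $k$. On $\caG$, since $\caM^k=\argmax_{\caM\in\caU_k}\rho^*_{\caM}$ is a maximizer over a set containing $\caM^*$, we get $\rho^*_{\caM^k}\ge\rho^*_{\caM^*}$, which is exactly the optimism used in the first inequality of Inequality~\ref{inq: regret decomposition in infinite-horizon setting}; the subsequent equality is the Bellman equation of Lemma~\ref{lemma: properties of communicating MDP}(b) applied to the greedy action $a_h=\pi^*_{\caM^k}(s_h)$, and the last step uses $0\le\lambda^*_{\caM}\le D$ to bound the telescoped boundary terms by $KD$.

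The first key step is to lower bound $\Pr[\caG]$. The crucial observation is that whenever $\caM^k=\caM^*$, the increments $X_t\defeq P_{\caM^*}\lambda^*_{\caM^*}(s_t,a_t)-\lambda^*_{\caM^*}(s_{t+1})$ form a martingale difference sequence with respect to the trajectory filtration $\mathcal{F}_t$: the next state is drawn from $P_{\caM^*}(\cdot\mid s_t,a_t)$, so $\E[\lambda^*_{\caM^*}(s_{t+1})\mid\mathcal{F}_t]=P_{\caM^*}\lambda^*_{\caM^*}(s_t,a_t)$, and $|X_t|\le D$ because $\lambda^*_{\caM^*}\in[0,D]$. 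The elimination test in Line~\ref{alg: stopping condition finite} fires for $\caM^*$ only if this martingale exits the band of half-width $D\sqrt{2(h-h_0)\log(2HM)}$. Since $\caM^*$ is the active model in at most one episode (once eliminated it is gone, and if never eliminated the run reaches step $H$), only a single martingale must be controlled, and its boundary has the $\sqrt{n}$ shape of a law-of-the-iterated-logarithm curve; a time-uniform (maximal) version of Azuma--Hoeffding therefore bounds the probability that the band is ever exited by $O(1/(HM))$, so $\caM^*$ survives and $\Pr[\caG^c]=O(1/(HM))$.

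Conditioned on $\caG$ I would then bound the main term episode by episode. Because each elimination removes a distinct model and $\caM^*$ is never removed, the number of episodes satisfies $K\le M$, which controls the decomposition's $KD\le MD$. For a fixed episode $k$ of length $n_k=\tau(k+1)-\tau(k)$, the episode terminates at the first step where the running sum exceeds the threshold; at the previous step the threshold was not yet exceeded, so adding the last increment (of magnitude $\le D$) gives $\sum_{h=\tau(k)}^{\tau(k+1)-1}\big(P_{\caM^k}\lambda^*_{\caM^k}(s_h,a_h)-\lambda^*_{\caM^k}(s_{h+1})\big)\le D\sqrt{2n_k\log(2HM)}+D$ (the final episode may run to step $H$ without firing, in which case the same bound holds without the extra $D$). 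Summing over episodes and applying Cauchy--Schwarz,
\begin{align}
    \sum_{k=1}^{K}\sqrt{n_k}\le\sqrt{K\sum_{k=1}^{K}n_k}=\sqrt{KH}\le\sqrt{MH},
\end{align}
so the main term is at most $D\sqrt{2\log(2HM)}\,\sqrt{MH}+KD=O\big(D\sqrt{MH\log(MH)}\big)$. Adding $KD\le MD$ from the decomposition and absorbing $MD\le D\sqrt{MH}$ in the long-horizon regime $H\ge M$ yields the claimed bound pathwise on $\caG$.

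Finally I would pass from the pathwise bound to the expected regret. On $\caG$ the regret is $O(D\sqrt{MH\log(MH)})$; on $\caG^c$ it is at most $H\rho^*_{\caM^*}\le H$, and since $\Pr[\caG^c]=O(1/(HM))$ this contributes $O(1/M)$ and is absorbed. The hard part will be the concentration step: one must argue that $\caM^*$ is the active model in only one episode so that a single time-uniform martingale bound (whose $\sqrt{n}$ boundary matches the threshold) suffices, rather than a lossy union bound over all $H$ test times and all episodes that would degrade the survival probability to roughly $1-H/M$ and break the expected-regret conversion; once this is handled, everything else is bookkeeping built on top of the displayed decomposition.
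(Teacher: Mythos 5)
Your proposal is correct and is essentially the paper's own argument: you reuse the paper's displayed decomposition with its $KD$ telescoping term, establish optimism by showing $\caM^*$ survives all eliminations (the paper's Lemma~\ref{lemma: optimism, square-root H regret bound}), bound each episode's tested sum by the elimination threshold plus one $D$-bounded increment, apply Cauchy--Schwarz with $K \le M$, and convert to expected regret via Lemma~\ref{lemma: connection with infinite-horizon setting} together with the small failure event. The only deviations are cosmetic or constant-level: you control a single time-uniform martingale for $\caM^*$ (correctly observing it is active in at most one episode, which also handles the random episode start more cleanly than the paper's union bound over all $M$ models and $H$ steps), but with the algorithm's exact band $D\sqrt{2(h-h_0)\log(2HM)}$ a stitched maximal inequality gives exit probability roughly $\tilde O(1/\sqrt{HM})$ rather than your claimed $O(1/(HM))$ (fixable by inflating the threshold constant, a looseness the paper's own Lemma~\ref{lemma: optimism, square-root H regret bound} shares), and your aside that a naive union bound degrades survival to $1-H/M$ is miscalculated (it gives $1-1/M$, hence bad-event regret $H/M$) --- neither slip is load-bearing, since even failure probability $\tilde O(1/\sqrt{HM})$ contributes only $\sqrt{H/M}\log H = O\left(D\sqrt{MH\log(MH)}\right)$ to the expected regret.
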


\subsection{Infinite Simulator Class}
\label{appendix: infinite simulator class}
In this subsection, we propose a provably efficient model-based algorithm solving infinite-horizon average-reward MDPs with general function approximation. 
To the best of our knowledge, our result is the first efficient algorithm with near-optimal regret for infinite-horizon average-reward MDPs with general function approximation.

Considering the model class $\caU$ which covers the true MDP $\caM^*$, i.e. $\caM^* \in \caU$, we define the function space $\Lambda = \{\lambda^*_{\caM}, \caM \in \caU\}$, and space $\caX = \caS \times \caA \times \Lambda$. We define the function space 
\begin{align}
    \caF = \left\{f_{\caM}(s,a,\lambda): \caX \rightarrow \mathbb{R} \text { such that } f_{\caM}(s, a, \lambda)=P_{\caM} \lambda(s, a) \text { for } \caM \in \mathcal{U}, \lambda \in \Lambda\right\}.
\end{align}

Our algorithm, which is described in Algorithm~\ref{alg: general_opt_alg}, also follows the well-known principle of optimism in the face of uncertainty. In each episode $k$, we calculate the optimistic MDP $\caM^k$ with maximum expected gain $\rho_{\caM^k}^{*}$. We execute the optimal policy of $\caM^*$ to interact with the environment and collect more samples. Once we have collected enough samples in episode $k$, we update the model class $\caU_k$ and compute the optimistic MDP for episode $k+1$.

Compared with the setting of episodic MDP with general function approximation~\citep{ayoub2020model,wang2020reinforcement,jin2021bellman}, the additional problem in the infinite-horizon setting is that the regret technically has  linear dependence on the number of total episodes, or the number of steps that we update the optimistic model and the policy. This corresponds to the last term ($KD$) in Inq~\ref{inq: regret decomposition in infinite-horizon setting}. Therefore, to design efficient algorithm with near-optimal regret in the infinite-horizon setting, the algorithm should maintain low-switching property~\citep{bai2019provably,kong2021online}. Taking inspiration from the recent work that studies efficient exploration with low switching cost in episodic setting~\citep{kong2021online}, we define the importance score, $\sup _{f_{1}, f_{2} \in \mathcal{F}} \frac{\left\|f_{1}-f_{2}\right\|_{\mathcal{Z}_{new}}^{2}}{\left\|f_{1}-f_{2}\right\|_{\mathcal{Z}}^{2}+\alpha}$, as a measure of the importance for new samples collected in current episode, and only update the optimistic model and the policy when the importance score is greater than $1$. Here $\left\|f_1 - f_2\right\|^2_{\caZ}$ is a shorthand of $\sum_{(s,a,s',\lambda) \in \caZ} \left(f_1(s,a,\lambda)-f_2(s,a,\lambda)\right)^2$.

\begin{algorithm}
\caption{General Optimistic Algorithm}
\label{alg: general_opt_alg}
  \begin{algorithmic}[1]
  \State Initialize: the MDP set $\mathcal{U}_1 = \mathcal{U}$, episode counter $k=1$
  \State Initialize: the history data set $\caZ = \emptyset$, $\caZ_{new} = \emptyset$
  \State $\alpha = 4D^2+1$, $\beta = c D^{2} \log \left(H \cdot \mathcal{N}(\mathcal{F}, 1/H)\right)$ for a constant $c$.
  \State Compute $\caM^1 = \argmax_{\caM \in \caU_1} \rho^\star_{\caM}$
  \For{step $h = 1,\cdots, H$}
  \State Take action $a_h=\pi^*_{\caM^k}(s_h)$ in the current state $s_h$, and transit to state $s_{h+1}$
  \State Add $(s_h,a_h,s_{h+1},\lambda^*_{\caM^k})$ to the set $\caZ_{new}$
  \If{ importance score $\sup _{f_{1}, f_{2} \in \mathcal{F}} \frac{\left\|f_{1}-f_{2}\right\|_{\mathcal{Z}_{new}}^{2}}{\left\|f_{1}-f_{2}\right\|_{\mathcal{Z}}^{2}+\alpha} \geq 1$}
  \State Add the history data $\caZ_{new}$ to the set $\caZ$
  \State Calculate $\hat{\caM}_{k+1} = \argmin_{\caM \in \caU} \sum_{(s_h,a_h,s_{h+1},\lambda_h) \in \caZ} \left(P_{\caM}\lambda_h(s_h,a_h) - \lambda_h(s_{h+1})\right)^2$
  \State Update $\caU_{k+1} = \left\{\caM \in \caU: \sum_{(s_h,a_h,s_{h+1},\lambda_h) \in \caZ}\left(\left(P_{\caM} - P_{\hat{\caM}_{k+1}}\right) \lambda_h(s_h,a_h)\right)^2 \leq \beta\right\}$
  \State Compute $\caM^{k+1} = \argmax_{\caM \in \caU_{k+1}} \rho^\star_{\caM}$
  \State Episode counter $k = k+1$
  \EndIf
  \EndFor
  \end{algorithmic}
\end{algorithm}

We state the regret upper bound of Algorithm~\ref{alg: general_opt_alg} in Theorem~\ref{theorem: gap-dependent regret bound}, and defer the proof of the theorem to Appendix~\ref{appendix: proof of eluder dimension regret bound}.
\begin{theorem}
\label{theorem: eluder dimension regret bound}
Under Assumption~\ref{assumption: communicating MDP}, the regret of Algorithm~\ref{alg: general_opt_alg} is uppder bounded by
\begin{align}
    Reg(H) \leq O\left(D\sqrt{d_e H  \log \left(H \cdot \mathcal{N}(\mathcal{F}, 1/H)\right)}\right),
\end{align}
where $d_e$ is the $\epsilon$-eluder dimension of function class $\caF$ with $\epsilon = \frac{1}{H}$, and $\mathcal{N}(\mathcal{F}, 1/H)$ is the $\frac{1}{H}$-covering number of $\caF$ w.r.t $L_{\infty}$ norm.
\end{theorem}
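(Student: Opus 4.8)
The plan is to follow the optimism-in-the-face-of-uncertainty template, reusing the regret decomposition in Inq~\ref{inq: regret decomposition in infinite-horizon setting} and controlling the leftover terms through four ingredients: a validity/concentration argument for the confidence sets, a martingale bound, an eluder-dimension pigeonhole argument, and a low-switching bound on the number of episodes $K$. First I would establish that the true model $\caM^*$ lies in every confidence set $\caU_k$ with high probability. Since each stored sample $(s_h,a_h,s_{h+1},\lambda_h)$ satisfies $\mathbb{E}[\lambda_h(s_{h+1})\mid s_h,a_h]=P_{\caM^*}\lambda_h(s_h,a_h)$, the quantity $\lambda_h(s_{h+1})$ is a value-targeted regression label for $f_{\caM^*}$, so a least-squares/self-normalized concentration bound---union-bounded over a $1/H$-cover of $\caF$, which is the source of the $\log\mathcal{N}(\caF,1/H)$ factor---shows that the minimizer $\hat{\caM}_{k+1}$ stays within squared error $\beta=cD^2\log(H\cdot\mathcal{N}(\caF,1/H))$ of $\caM^*$ on $\caZ$. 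The boundedness $0\le\lambda^*\le D$ from Lemma~\ref{lemma: properties of communicating MDP} controls the noise range. This guarantees $\caM^*\in\caU_k$, and since $\caM^k=\argmax_{\caM\in\caU_k}\rho^*_{\caM}$, it yields the optimistic inequality $\rho^*_{\caM^k}\ge\rho^*_{\caM^*}$ that validates the first line of Inq~\ref{inq: regret decomposition in infinite-horizon setting}.

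Next I would split the surviving sum $\sum_k\sum_h\bigl(P_{\caM^k}\lambda^*_{\caM^k}(s_h,a_h)-\lambda^*_{\caM^k}(s_{h+1})\bigr)$ into a martingale part $P_{\caM^*}\lambda^*_{\caM^k}(s_h,a_h)-\lambda^*_{\caM^k}(s_{h+1})$ and a model-error part $(P_{\caM^k}-P_{\caM^*})\lambda^*_{\caM^k}(s_h,a_h)$. The martingale part is a bounded-difference sequence of range $O(D)$, so Azuma--Hoeffding bounds it by $O(D\sqrt{H\log(HM)})$. The model-error part is the crux: both $\caM^k$ and $\caM^*$ lie in $\caU_k$, so the function $f_{\caM^k}(\cdot,\cdot,\lambda^*_{\caM^k})-f_{\caM^*}(\cdot,\cdot,\lambda^*_{\caM^k})$ has squared in-sample error on $\caZ$ at most $O(\beta)$, and the standard eluder-dimension pigeonhole argument converts this into a trajectory bound of order $\sqrt{d_e\beta H}=O(D\sqrt{d_eH\log(H\cdot\mathcal{N}(\caF,1/H))})$, the dominant term.

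The delicate point, which I expect to be the main obstacle, is that the model is frozen within each episode rather than refit after every step, so the online prediction errors are incurred under a stale $\caM^k$ whose confidence set was built from $\caZ$ rather than the freshly collected $\caZ_{new}$. Here the importance-score threshold does the work: whenever $\sup_{f_1,f_2}\|f_1-f_2\|^2_{\caZ_{new}}/(\|f_1-f_2\|^2_{\caZ}+\alpha)$ stays below $1$, the extra error accumulated in the current episode is provably comparable to the in-sample error already controlled by $\beta$, so the frozen-model errors telescope into the same eluder bound. The same quantity controls the episode count: each switch forces this potential to grow multiplicatively, so a determinant/potential-style argument bounds $K=\tilde{O}(d_e)$, making the $KD$ term in Inq~\ref{inq: regret decomposition in infinite-horizon setting} lower order. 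Combining the optimistic decomposition, the Azuma bound, the eluder pigeonhole bound, and the low-switching bound on $K$ yields the claimed $O(D\sqrt{d_eH\log(H\cdot\mathcal{N}(\caF,1/H))})$ regret.
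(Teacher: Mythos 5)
Your proposal follows essentially the same route as the paper's proof: optimism via value-targeted-regression concentration union-bounded over a $1/H$-cover (the paper imports this from Theorem~6 of \citet{ayoub2020model}), the decomposition of Inq~\ref{inq: regret decomposition in infinite-horizon setting} into a martingale part handled by Azuma and a model-error part handled by the eluder pigeonhole (Lemma~26 of \citet{jin2021bellman}), with the importance-score threshold both controlling the stale within-episode error and bounding $K=\tilde{O}(d_e)$ (the paper does the latter via Lemma~5 of \citet{kong2021online} rather than a determinant-style potential, but the idea is the same). The only cosmetic slip is the $\log(HM)$ in your Azuma bound, since there is no finite $M$ in this setting; the correct factor is $\log H$.
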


For $\alpha > 0$, we say the covering number $\caN(\caF, \alpha)$ of $\caF$ w.r.t the $L_\infty$ norm equals $m$ if there is $m$ functions in $\caF$ such that any function in $\caF$ is at most $\alpha$ away from one of these $m$ functions in norm $\|\cdot\|_\infty$. The $\|\cdot\|_\infty$ norm of function $f$ is defined as $\|f\|_\infty \defeq \max_{x \in \caX} |f(x)|$.

\section{Omitted Proof for finite simulator class with separation condition}
\label{appendix: omitted proof in setting 1}
\subsection{Proof of Theorem~\ref{theorem: gap-dependent regret bound}}
\label{appendix: proof of the regret in setting 1}
The formal definition of $\hat{\pi}$ is given in Algorithm~\ref{alg: gap-dpendent algorithm}. To upper bound the sim-to-real gap of $\hat{\pi}$, we discuss on the following two benign properties of Algorithm~\ref{alg: gap-dpendent algorithm} in Lemma~\ref{lemma: never eliminating M*, gap-dependent bound} and Lemma~\ref{lemma: expected stopping time for gap-dependent bound}. Lemma~\ref{lemma: never eliminating M*, gap-dependent bound} states that the true MDP $\caM^*$ will never be eliminated from the MDP set $\caD$. Therefore, in stage 2, the agent will execute the optimal policy in the remaining steps with high probability. Lemma~\ref{lemma: expected stopping time for gap-dependent bound} states that the total number of steps in stage 1 is upper bounded by $\tilde{O}(\frac{M^2}{\delta^4})$. This is where the final bound in Theorem~\ref{theorem: gap-dependent regret bound} comes from.

\begin{lemma}
\label{lemma: never eliminating M*, gap-dependent bound}
With probability at least $1-\frac{1}{H}$, the true MDP $\caM^*$ will never be eliminated from the MDP set $\caD$ in stage 1.
\end{lemma}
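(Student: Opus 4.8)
The plan is to show that $\caM^*$ survives every comparison round of Stage~1. Since each pass of the while-loop removes exactly one MDP from $\caD$, there are at most $M-1$ rounds, and $\caM^*$ can be eliminated only in a round in which it is drawn as one of the two candidates $\caM_1,\caM_2$. I would therefore bound the probability that $\caM^*$ loses a single such round and then take a union bound over the at most $M-1$ rounds, allocating failure probability $1/(MH)$ to each.

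Fix a round in which $\caM^*$ participates, let $\caM'$ denote its opponent, and let $(s_0,a_0)$ be the state-action pair chosen by the $\argmax$ in Algorithm~\ref{alg: gap-dpendent algorithm}. Two facts drive the argument. First, by the $\argmax$ choice together with Assumption~\ref{assumption: separated MDPs}, $\|(P_{\caM^*}-P_{\caM'})(\cdot\mid s_0,a_0)\|_1 \geq \delta$. Second, by construction of Subroutine~\ref{subroutine: collecting data, M >2}, the recorded next-states $s'_1,\dots,s'_{n_0}$ in $\caH_{\caM_1,\caM_2}$ are i.i.d.\ draws from $P_{\caM^*}(\cdot\mid s_0,a_0)$, since each is the outcome of executing $a_0$ at $s_0$ in the real environment. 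The elimination rule is a likelihood-ratio test that discards the model under which the observed data is less likely. I would treat the two cases $\caM^*=\caM_1$ and $\caM^*=\caM_2$ separately, showing in each that the data, being generated by $\caM^*$, makes $\caM^*$ the more likely model with high probability, so that the opponent is removed.

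The concentration step is where the $\delta$-separation enters quantitatively, and the cleanest route is a Chernoff/Bhattacharyya bound that avoids per-sample log-likelihood terms altogether:
\begin{align*}
\Pr_{P_{\caM^*}}\!\left[\prod_{i=1}^{n_0}\frac{P_{\caM'}(s'_i\mid s_0,a_0)}{P_{\caM^*}(s'_i\mid s_0,a_0)} \geq 1\right]
\leq \E_{P_{\caM^*}}\!\left[\prod_{i=1}^{n_0}\left(\frac{P_{\caM'}(s'_i\mid s_0,a_0)}{P_{\caM^*}(s'_i\mid s_0,a_0)}\right)^{1/2}\right]
= \rho^{n_0},
\end{align*}
where $\rho = \sum_{s'}\sqrt{P_{\caM^*}(s'\mid s_0,a_0)\,P_{\caM'}(s'\mid s_0,a_0)}$ is the Bhattacharyya coefficient. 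Using Cauchy--Schwarz one gets $\|P_{\caM^*}-P_{\caM'}\|_{\mathrm{TV}}\leq\sqrt{1-\rho^2}$, hence $\rho \leq \sqrt{1-\|P_{\caM^*}-P_{\caM'}\|_{\mathrm{TV}}^2}\leq e^{-\delta^2/8}$, since $\|P_{\caM^*}-P_{\caM'}\|_{\mathrm{TV}}=\tfrac12\|(P_{\caM^*}-P_{\caM'})(\cdot\mid s_0,a_0)\|_1\geq \delta/2$. Thus the event that $\caM^*$ loses this round has probability at most $e^{-n_0\delta^2/8}$; the zero-probability clause of the \textbf{if} test never triggers when the data comes from $\caM^*=\caM_2$, and only helps eliminate the opponent when $\caM^*=\caM_1$. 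Substituting $n_0 = \tfrac{c_0 \log^2(SMH)\log(MH)}{\delta^4}$ makes this at most $1/(MH)$, and the union bound over the at most $M-1$ rounds yields total failure probability at most $1/H$.

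The main obstacle is precisely this concentration inequality, because the naive per-sample log-likelihood ratio $\log\frac{P_{\caM^*}(s')}{P_{\caM'}(s')}$ is unbounded when some transition probability is tiny, so a direct Hoeffding or Bernstein argument would require controlling the range of these terms (e.g.\ by discarding next-states of probability below $1/(SMH)$, which is where the $\log(S\cdots)$ factors originate). The Chernoff/Bhattacharyya route sidesteps the unboundedness entirely and already delivers an exponential-in-$n_0\delta^2$ tail; the stated choice of $n_0$ with its $\delta^{-4}$ and $\log^2(SMH)$ factors is then comfortably sufficient, so that the remaining work is only the routine bookkeeping of the two cases and the union bound.
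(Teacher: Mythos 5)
Your proof is correct, and it reaches the lemma by a genuinely different concentration argument than the paper's. Both share the same skeleton---reduce to the failure probability of a single likelihood-ratio comparison, then union bound over the at most $M-1$ rounds at $1/(MH)$ each---but they diverge on the key inequality. The paper (Lemma~\ref{lemma: not eliminating M* in a fixed episode, gap-dependent bound}, following \citet{kwon2021rl}) works with per-sample log-likelihood ratios: since $\ln\frac{P_{\caM^*}(s'\mid s_0,a_0)}{P_{\caM_1}(s'\mid s_0,a_0)}$ is unbounded, it smooths the opponent as $\tilde P_{\caM_1} = \alpha + (1-\alpha S)P_{\caM_1}$ with $\alpha = \delta^2/(8S)$, lower-bounds the expected log-ratio via Pinsker's inequality, and concentrates using sub-exponential (Orlicz-norm) tail bounds; this is exactly where the $\log^2(SMH/\delta)$ and $\delta^{-4}$ factors in $n_0$ originate. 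Your Chernoff/Bhattacharyya route---Markov's inequality applied to the square-rooted likelihood ratio, giving per-round failure probability $\rho^{n_0}$ with $\rho \le \sqrt{1-\mathrm{TV}^2} \le e^{-\delta^2/8}$---sidesteps the unboundedness entirely, carries no dependence on $S$, and shows the lemma already holds with $n_0 = O(\log(MH)/\delta^2)$, so the paper's larger $n_0$ is comfortably sufficient, as you observe; in this sense your per-sample bound is strictly sharper. Your treatment of the asymmetric elimination rule is also sound: when $\caM^*=\caM_2$ the zero-probability clause can never fire on data drawn from $\caM^*$, and in both cases the event that $\caM^*$ is eliminated is contained in $\{\prod_i P_{\caM'}(s'_i\mid s_0,a_0)/P_{\caM^*}(s'_i\mid s_0,a_0) \ge 1\}$, which your bound covers. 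Two bookkeeping remarks to make the write-up airtight: the opponent $\caM'$ and the pair $(s_0,a_0)$ are random (they depend on earlier eliminations and the random selection), so you should either state the per-round bound conditionally on the opponent's identity or union bound over all $M-1$ possible opponents---the same count, so the conclusion is unchanged; and the i.i.d.-ness of the recorded next-states follows from the Markov property irrespective of the random travel paths in Subroutine~\ref{subroutine: collecting data, M >2}, which matches the paper's framing of $\caH$ as $n_0$ independent draws from $P_{\caM^*}(\cdot\mid s_0,a_0)$.
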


The while-loop in stage 1 will last for $M-1$ times. To prove Lemma~\ref{lemma: never eliminating M*, gap-dependent bound}, we need to prove that, if the true MDP $\caM^*$ is selected in a certain loop, then $\prod_{(s,a,s') \in \caH_{\caM_1,\caM_2}} \frac{P_{\caM^*}(s'|s,a)}{P_{\caM}(s'|s,a)} \geq 1$ holds with high probability. This is illustrated in the following lemma.
\begin{lemma}
\label{lemma: not eliminating M* in a fixed episode, gap-dependent bound}
Suppose $\caH = \{s'_i\}_{i=1}^{n_0}$ is a set of $n_0 = \frac{c_0 \log^2(SMH/\delta)\log(MH)}{\delta^4}$ independent samples from a given state-action pair $(s_0,a_0)$ and MDP $\caM^*$ for a large constant $c_0$. Let ${\caM}_1$ denote another MDP satisfying $\left\|(P_{\caM^*} - P_{{\caM}_1})(\cdot|s_0,a_0)\right\|_1 \geq \delta$, then the following inequality holds with probability at least $1-\frac{1}{MH}$:
\begin{align}
    \label{inq: not eliminating M* in a fixed episode}
    \prod_{s' \in \caH} \frac{P_{\caM^*}(s'|s_0,a_0)}{P_{{\caM}_1}(s'|s_0,a_0)} > 1,
\end{align}
\end{lemma}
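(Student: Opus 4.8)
The plan is to read the product as a likelihood-ratio test and to control the wrong-direction tail through the Bhattacharyya coefficient rather than through the (unbounded) log-likelihood ratio. Write $P^* \defeq P_{\caM^*}(\cdot\mid s_0,a_0)$ and $Q \defeq P_{\caM_1}(\cdot\mid s_0,a_0)$, so the $s_i'$ are i.i.d.\ draws from $P^*$ and the goal is to lower-bound the probability that $\prod_{i=1}^{n_0} P^*(s_i')/Q(s_i') > 1$. The complementary (bad) event is $\prod_i Y_i \geq 1$ with $Y_i \defeq Q(s_i')/P^*(s_i')$; whenever some drawn $s_i'$ has $Q(s_i')=0$ we get $Y_i=0$ and the product lies strictly below $1$, so the bad event can only occur when every factor is finite, and it suffices to bound its probability by $1/(MH)$.

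First I would pass to square roots to obtain a bounded, product-form moment. Since the draws are independent and each is distributed as $P^*$,
\[
\E\!\left[\textstyle\prod_{i=1}^{n_0}\sqrt{Y_i}\right] = \prod_{i=1}^{n_0}\E_{s'\sim P^*}\!\left[\sqrt{Q(s')/P^*(s')}\right] = \Big(\textstyle\sum_{s'}\sqrt{P^*(s')Q(s')}\Big)^{n_0} = \mathrm{BC}(P^*,Q)^{n_0},
\]
where $\mathrm{BC}$ is the Bhattacharyya coefficient; states with $P^*(s')=0$ drop out and those with $Q(s')=0$ contribute $0$, so the identity needs no positivity assumption. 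Next I would turn the separation hypothesis into a multiplicative gap for $\mathrm{BC}$: using $\mathrm{BC}(P^*,Q)=1-H^2(P^*,Q)$ for the squared Hellinger distance, together with the standard comparison $H^2 \geq \tfrac12\,\mathrm{TV}^2 = \tfrac18\,\|P^*-Q\|_1^2$, the hypothesis $\|P^*-Q\|_1\geq\delta$ gives $\mathrm{BC}(P^*,Q)\leq 1-\delta^2/8\leq e^{-\delta^2/8}$.

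Applying Markov's inequality to the nonnegative variable $\prod_i\sqrt{Y_i}$ then yields
\[
\Pr\!\left[\textstyle\prod_{i=1}^{n_0}Y_i\geq 1\right] = \Pr\!\left[\textstyle\prod_{i=1}^{n_0}\sqrt{Y_i}\geq 1\right] \leq \E\!\left[\textstyle\prod_{i=1}^{n_0}\sqrt{Y_i}\right] \leq e^{-n_0\delta^2/8}.
\]
Substituting $n_0=c_0\log^2(SMH/\delta)\log(MH)/\delta^4$ makes the exponent exceed $\log(MH)$ once $c_0$ is large enough (in fact $n_0\geq 8\log(MH)/\delta^2$ already suffices, since $\delta\le 1$ makes $\delta^{-4}\ge\delta^{-2}$ and the log factors are at least one), so the bad event has probability at most $1/(MH)$, which is the claim.

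I expect the main obstacle to be precisely the step that the square-root trick circumvents: the per-sample log-ratio $\log(P^*(s')/Q(s'))$ is unbounded below when $P^*(s')$ is tiny, so a direct Hoeffding/Bernstein bound on $\sum_i\log(P^*(s_i')/Q(s_i'))$ would require truncating the range and paying range factors (this is almost certainly the origin of the conservative $\log^2(SMH/\delta)/\delta^2$ slack in the stated $n_0$). Working with $\mathrm{BC}$ keeps every factor in $[0,1]$ and converts independence directly into a product, so nothing beyond Markov's inequality is needed; the only care required is the bookkeeping for states where $P^*$ or $Q$ vanishes, which as noted only helps.
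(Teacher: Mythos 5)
Your proof is correct, and it takes a genuinely different route from the paper's. The paper (following Kwon et al.) works directly with the sum of log-likelihood ratios $\sum_i \ln\bigl(P_{\caM^*}(s_i')/P_{\caM_1}(s_i')\bigr)$: since this is unbounded when $P_{\caM_1}$ has small or zero mass, it first smooths $P_{\caM_1}$ to $\tilde P_{\caM_1} = \alpha + (1-\alpha S)P_{\caM_1}$, splits the sum into a smoothed part plus a correction of size $O(\alpha S n_0)$, lower-bounds the mean of the smoothed part by $n_0 D_{\mathrm{KL}} \geq n_0\delta^2/2$ via Pinsker, and controls deviations using sub-exponential (Orlicz-norm) concentration of $\ln(1/P(X))$ — paying a $\log(1/\alpha)$ range factor with $\alpha = \delta^2/(8S)$, which is exactly where the $\log^2(SMH/\delta)/\delta^4$ in the stated $n_0$ comes from. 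Your Chernoff-at-$1/2$ argument sidesteps all of this: bounding $\Pr[\prod_i Y_i \geq 1] \leq \E[\prod_i \sqrt{Y_i}] = \mathrm{BC}(P^*,Q)^{n_0}$ keeps every factor in $[0,1]$, requires no smoothing, handles $Q(s')=0$ automatically, and via $\mathrm{BC} \leq 1 - \frac{1}{8}\|P^*-Q\|_1^2 \leq e^{-\delta^2/8}$ (the comparison $\mathrm{TV} \leq \sqrt{2}\,H$ is the right one here) shows that $n_0 = O(\log(MH)/\delta^2)$ already suffices — strictly stronger than the lemma, which then holds a fortiori at the paper's larger $n_0$, and with no dependence on $S$ at all. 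One small bookkeeping point: $\delta = \|P^*-Q\|_1$ can be as large as $2$, not just $1$, so your parenthetical justification that $\delta^{-4} \geq \delta^{-2}$ does not cover the whole range; but for $\delta \in (1,2]$ the prescribed $n_0$ still exceeds $8\log(MH)/\delta^2$ once $c_0$ is a modest constant (the $\log$ factors are $\geq 1$ for nontrivial $S,M,H$), so this is cosmetic, not a gap. Your diagnosis of why the paper's bound is conservative — truncation/range factors for the unbounded log-ratio — is accurate.
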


\begin{proof}
The proof of Lemma~\ref{lemma: not eliminating M* in a fixed episode, gap-dependent bound} is inspired by the analysis in~\cite{kwon2021rl}. To prove Inq~\ref{inq: not eliminating M* in a fixed episode}, it is enough to show that
\begin{align}
    \label{inq: ln probability ration, gap-dependent bound}
    \ln{\left(\prod_{s' \in \caH} \frac{P_{\caM^*}(s'|s_0,a_0)}{P_{{\caM}_1}(s'|s_0,a_0)}\right)} = \sum_{s' \in \caH} \ln{\left(\frac{P_{\caM^*}(s'|s_0,a_0)}{P_{{\caM}_1}(s'|s_0,a_0)}\right)} > 0.
\end{align}
holds with probability at least $1-\frac{1}{MH}$. 

Note that $\frac{P_{\caM^*}(s'|s_0,a_0)}{P_{{\caM}_1}(s'|s_0,a_0)}$ can be unbounded since $P_{{\caM}_1}(s'|s_0,a_0)$ can be zero for certain $s'$. To tackle this issue, we define $\tilde{P}_{\caM_1}$ for a sufficiently small $\alpha \leq \frac{\delta}{4S}$:
\begin{align}
    \tilde{P}_{\caM_1} (s'|s,a) = \alpha + (1-\alpha S) P_{\caM_1}(s'|s,a).
\end{align}

We have $\left\|\left(\tilde{P}_{\caM_1}-P_{\caM_1}\right)(\cdot|s,a)\right\| \leq 2S\alpha \leq \frac{\delta}{2}$, thus $\left\|\left(\tilde{P}_{\caM_1}-P_{\caM^*}\right)(\cdot|s,a)\right\| \geq \frac{\delta}{2}$. Also, we have $\ln\left(\frac{1}{\tilde{P}_{\caM_1}(s'|s_0,a_0)}\right) \leq \ln(1/\alpha)$ for any $s' \in \caS$. With the above definition, we can decompose Inq~\ref{inq: ln probability ration, gap-dependent bound} into two terms:
\begin{align}
    \label{eqn: prob ratio decomposition, gap-dependent bound}
    \sum_{s' \in \caH} \ln{\left(\frac{P_{\caM^*}(s'|s_0,a_0)}{P_{{\caM}_1}(s'|s_0,a_0)}\right)} = \sum_{s' \in \caH} \ln{\left(\frac{P_{\caM^*}(s'|s_0,a_0)}{\tilde{P}_{{\caM}_1}(s'|s_0,a_0)}\right)} + \sum_{s' \in \caH} \ln{\left(\frac{\tilde{P}_{\caM_1}(s'|s_0,a_0)}{P_{\caM_1}(s'|s_0,a_0)}\right)}.
\end{align}
Taking expectation over $s' \sim P_{\caM^*}(\cdot|s,a)$ for the first term, we have
\begin{align}
    \mathbb{E}\left[\sum_{i=1}^{n_0} \ln\left(\frac{P_{\caM^*}(s'_i|s_0,a_0)}{\tilde{P}_{\caM_1}(s'_i|s_0,a_0)}\right)\right] =& \sum_{i=1}^{n_0} \sum_{s'} P_{\caM^*}(s'|s_0,a_0) \ln{\left(\frac{P_{\caM^*}(s'|s_0,a_0)}{\tilde{P}_{{\caM}_1}(s'|s_0,a_0)}\right)} \\
    =& n_0 D_{KL}\left(P_{\caM^*}(s'|s_0,a_0)|\tilde{P}_{{\caM}_1}(s'|s_0,a_0)\right) \\
    \geq & \frac{n_0 \delta^2}{2},
\end{align}
where the last inequality is due to Pinsker's inequality. 

\begin{lemma}
(Lemma C.2 in~\cite{kwon2021rl}) Suppose $X$ is an arbitrary discrete random variable on a finite support $\caX$. Then, $\ln(1/P(X))$ is a sub-exponential random variable~\citep{vershynin2010introduction} with Orcliz norm $\ln(1/P(X))_{\phi_1} = 1/e$.
\end{lemma}
From the above Lemma, we know that both $\tilde{P}_{\caM_1}(s'|s_0,a_0)$ and ${P}_{\caM^*}(s'|s_0,a_0)$ are sub-exponential random variables. By Azuma-Hoeffing's inequality, we have with probability at least $1-\delta_0/2$,
\begin{align}
    \sum_{s' \in \caH} \ln\left(\frac{1}{\tilde{P}_{\caM_1}(s'|s_0,a_0)}\right) \geq \mathbb{E}\left[\sum_{i=1}^{n_0} \ln\left(\frac{1}{\tilde{P}_{\caM_1}(s'_i|s_0,a_0)}\right)\right] - \log(1/\alpha)\sqrt{2n_0 \log(2/\delta_0)}.
\end{align}

By Proposition 5.16 in~\cite{vershynin2010introduction}, with probability at least $1-\delta_0/2$,
\begin{align}
    \sum_{s' \in \caH} \ln\left({{P}_{\caM^*}(s'|s_0,a_0)}\right) \geq \mathbb{E}\left[\sum_{i=1}^{n_0} \ln\left({{P}_{\caM^*}(s'_i|s_0,a_0)}\right)\right] - \sqrt{n_0 \log(2/\delta_0)/c},
\end{align}
for a certain constant $c>0$. Therefore, we can lower bound the first term in Eqn~\ref{eqn: prob ratio decomposition, gap-dependent bound},
\begin{align}
\label{inq: prob ratio decomposition part 1, gap-dependent bound}
    \sum_{s' \in \caH} \ln{\left(\frac{P_{\caM^*}(s'|s_0,a_0)}{\tilde{P}_{{\caM}_1}(s'|s_0,a_0)}\right)} \geq \frac{n_0 \delta^2}{2} - \log(1/\alpha)\sqrt{2n_0 \log(2/\delta_0)} - \sqrt{n_0 \log(2/\delta_0)/c},
\end{align}
with probability at least $1-\delta_0$.

For the second term in Eqn~\ref{eqn: prob ratio decomposition, gap-dependent bound}, by the definition of $\tilde{P}_{\caM_1}$,
\begin{align}
    \label{inq: prob ratio decomposition part 2, gap-dependent bound}
    \sum_{s' \in \caH} \ln{\left(\frac{\tilde{P}_{\caM_1}(s'|s_0,a_0)}{P_{\caM_1}(s'|s_0,a_0)}\right)} \geq -2\alpha Sn_0 
\end{align}

Combining Inq~\ref{inq: prob ratio decomposition part 1, gap-dependent bound} and Inq~\ref{inq: prob ratio decomposition part 2, gap-dependent bound}, we have 
\begin{align}
     \sum_{s' \in \caH} \ln{\left(\frac{P_{\caM^*}(s'|s_0,a_0)}{P_{{\caM}_1}(s'|s_0,a_0)}\right)} \geq \frac{n_0 \delta^2}{2} - \log(1/\alpha)\sqrt{2n_0 \log(2/\delta_0)} - \sqrt{n_0 \log(2/\delta_0)/c} - 2\alpha Sn_0 
\end{align}

Setting $\alpha = \frac{\delta^2}{8S}$, $\delta_0 = \frac{1}{MH}$ , and $n_0 = \frac{c_0\log^2(SMH/\delta)\log(MH)}{\delta^4}$, we have 
\begin{align}
    \sum_{s' \in \caH} \ln{\left(\frac{P_{\caM^*}(s'|s_0,a_0)}{P_{{\caM}_1}(s'|s_0,a_0)}\right)} > 0
\end{align}
holds with probability at least $1-\frac{1}{MH}$.
\end{proof}

\begin{lemma}
\label{lemma: expected stopping time for gap-dependent bound}
 Suppose Stage 1 in Algorithm~\ref{alg: gap-dpendent algorithm} ends in $h_0$ steps. We have $\mathbb{E}[h_0] \leq  O(\frac{DM^2 \log^2(SMH/\delta)\log(MH)}{\delta^4})$, where the expectation is over all randomness in the algorithm and the environment.
\end{lemma}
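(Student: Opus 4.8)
The plan is to decompose the total number of steps in Stage~1 into a per-visitation cost multiplied by the number of visitations required, then bound each factor. Since each iteration of the while-loop in Algorithm~\ref{alg: gap-dpendent algorithm} eliminates exactly one MDP from $\caD$, Stage~1 consists of at most $M-1$ iterations and hence invokes Subroutine~\ref{subroutine: collecting data, M >2} at most $M-1$ times. Each invocation terminates once it has recorded $n_0$ samples from its chosen informative pair $(s_0,a_0)$. Thus it suffices to show that the expected number of environment steps the subroutine spends to collect a \emph{single} visitation of $(s_0,a_0)$ is $O(MD)$; linearity of expectation then gives $\mathbb{E}[h_0] \le (M-1)\cdot n_0 \cdot O(MD)$.

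To bound the cost of one visitation, I would view the subroutine as a sequence of rounds, where each round runs the $M$ travelling policies $\pi_{\caM_i}(s_{\mathrm{init}},s_0)$ for $2D$ steps each, for a total of at most $2MD$ steps, and ``succeeds'' if the agent reaches $s_0$ during the round. The key claim is that, conditioned on the entire past trajectory and the current state, each round succeeds with probability at least $1/2$. This holds because the true MDP $\caM^*$ is one of the $M$ candidates, and when its own travelling policy $\pi_{\caM^*}(s_{\mathrm{init}},s_0)$ is executed in the real environment $\caM^*$, Assumption~\ref{assumption: communicating MDP} guarantees the expected hitting time of $s_0$ is at most $D$; by Markov's inequality the agent therefore reaches $s_0$ within the corresponding $2D$-step block with probability at least $1/2$, irrespective of the starting state. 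Because this lower bound on the per-round success probability is uniform over all histories, the number of rounds needed for one visitation is stochastically dominated by a $\mathrm{Geometric}(1/2)$ random variable, so its expectation is at most $2$ and the expected number of steps per visitation is at most $2\cdot 2MD = 4MD$.

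Combining the two steps, the expected length of Stage~1 is at most $(M-1)\cdot n_0 \cdot 4MD = O(M^2 D\, n_0)$. Substituting the value $n_0 = \frac{c_0 \log^2(SMH/\delta)\log(MH)}{\delta^4}$ fixed in Algorithm~\ref{alg: gap-dpendent algorithm} yields
\begin{align}
    \mathbb{E}[h_0] \le O\left(\frac{D M^2 \log^2(SMH/\delta)\log(MH)}{\delta^4}\right),
\end{align}
as claimed.

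I expect the only genuinely delicate point to be the stochastic-domination step: the rounds are not independent, since the state at which one round begins is the state at which the previous round ended, and recording a sample likewise shifts the starting state between consecutive visitations. The resolution is precisely that the $1/2$ success bound is a \emph{conditional} bound holding regardless of the current state, so it can be applied round-by-round via the tower property (or an optional-stopping argument on the associated supermartingale) without any independence assumption. Everything else is bookkeeping: counting at most $M-1$ subroutine calls with $n_0$ visitations each, and the arithmetic of substituting $n_0$.
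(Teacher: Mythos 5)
Your proof is correct and takes essentially the same route as the paper's: Markov's inequality on the $D$-bounded expected hitting time makes each $O(MD)$-step sweep through the candidate travelling policies succeed with probability at least $1/2$ (since $\pi_{\caM^*}(\cdot,s_0)$ is among them), so the expected cost per visitation of $(s_0,a_0)$ is $O(MD)$, and multiplying by the $n_0$ required samples and the at most $M-1$ subroutine calls gives the stated bound. Your explicit handling of the dependence between rounds via the uniform conditional success bound is a detail the paper leaves implicit, but it is the same underlying argument.
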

\begin{proof}
Recall that $\pi_{\caM}(s,s^{\prime})$ is the policy with the minimum expected travelling time $\mathbb{E}\left[T\left(s^{\prime} \mid \caM, \pi, s\right)\right]$ for MDP $\caM$. By Assumption~\ref{assumption: communicating MDP}, we have 
\begin{align}
    \mathbb{E}\left[T\left(s^{\prime} \mid \caM^*, \pi_{\caM}(s,s^{\prime}), s\right)\right] \leq D.
\end{align}
Given state $s$ and $s'$, by Markov's inequality, we know that with probability at least $\frac{1}{2}$, 
\begin{align}
    \label{inq: maximum travelling time}
    T\left(s^{\prime} \mid \caM^*, \pi_{\caM^*}(s,s^{\prime}), s\right) \leq 2D.
\end{align}
Consider the following stochastic process: In each episode $k$, the agent starts from a state $s_k$ that is arbitrarily selected, and run the policy $\pi_{\caM^*}(s_k,s_0)$ for $2D$ steps on the MDP $\caM^*$. The process terminates once the agent enters a certain state $s_0$. By Inq~\ref{inq: maximum travelling time}, the probability that the process terminates within $k$ episodes is at least $1- \frac{1}{2^k}$. By the basic algebraic calculation, the expected stopping episode can be bounded by a constant $4$.

Now we return to the proof of Lemma~\ref{lemma: expected stopping time for gap-dependent bound}. In Subroutine~\ref{subroutine: collecting data, M >2}, we run policy $\pi_{\caM_i}$ for each MDP $\caM_i \in \caD$ alternately. By Lemma~\ref{lemma: never eliminating M*, gap-dependent bound}, the true MDP $\caM^*$ is always contained in the MDP set $\caD$. Therefore, the expected travelling time to enter state $s_0$ for $n_0$ times is bounded by $n_0 \cdot M \cdot 8D$. In stage 1, we call Subroutine~\ref{subroutine: collecting data, M >2} for $M-1$ times, which means that the expected steps in stage 1 satisfies $\mathbb{E}[h_{0}] \leq 8M^2n_0d = O(\frac{DM^2 \log^2(SMH/\delta)\log(MH)}{\delta^4})$. 
\end{proof}

\begin{proof}
(Proof of Theorem~\ref{theorem: gap-dependent regret bound}) Recall that we use $h_0$ to denote the total steps in stage 1. Firstly, we prove that $\operatorname{Gap}(\hat{\pi}, \caU)$ is upper bounded by $O\left(\mathbb{E}[h_0] + D\right)$.
\begin{align}
    \label{inq: value gap, gap-dependent regret bound}
    & V^{*}_{\caM^*,1}(s_1) -  V^{\hat{\pi}}_{\caM^*,1}(s_1) \\
    = & \mathbb{E}_{h_0}\left[ \mathbb{E}_{\caM^*,\pi^*_{\caM^*}}\left[\sum_{h=1}^{h_0} R(s_h,a_h)\mid h_0\right] +  \mathbb{E}_{\caM^*,\pi^*_{\caM^*}} \left[V^{*}_{\caM^*,h_0+1}(s_{h_0+1})\mid h_0\right] \right] \\
    & - \mathbb{E}_{h_0}\left[ \mathbb{E}_{\caM^*,\hat{\pi}}\left[\sum_{h=1}^{h_0} R(s_h,a_h)\mid h_0\right] +  \mathbb{E}_{\caM^*,\hat{\pi}} \left[V^{\hat{\pi}}_{\caM^*,h_0+1}(s_{h_0+1})\mid h_0\right] \right] \\
    \leq &\mathbb{E}_{h_0}\left[ \mathbb{E}_{\caM^*,\pi^*_{\caM^*}}\left[\sum_{h=1}^{h_0} R(s_h,a_h)\mid h_0\right]  \right] + \mathbb{E}_{h_0}\left[ \mathbb{E}_{\caM^*,\pi^*_{\caM^*}} \left[V^{*}_{\caM^*,h_0+1}(s_{h_0+1})\mid h_0\right] - \mathbb{E}_{\caM^*,\hat{\pi}} \left[V^{\hat{\pi}}_{\caM^*,h_0+1}(s_{h_0+1})\mid h_0\right]\right] \\
    \leq & \mathbb{E}[h_0] + \mathbb{E}_{h_0}\left[ \mathbb{E}_{\caM^*,\pi^*_{\caM^*}} \left[V^{*}_{\caM^*,h_0+1}(s_{h_0+1}) \mid h_0\right] - \mathbb{E}_{\caM^*,\hat{\pi}} \left[V^{\hat{\pi}}_{\caM^*,h_0+1}(s_{h_0+1})\mid h_0\right] \right]
\end{align}
The outer expectation is over all possible $h_0$, while the inner expectation is over the possible trajectories given fixed $h_0$. By Lemma~\ref{lemma: never eliminating M*, gap-dependent bound}, we know that $\hat{\pi} = \pidr$ after $h_0$ steps with probability at least $1-\frac{1}{H}$. If this high-probability event happens, the second part in the above inequality equals to $$\mathbb{E}\left[ \mathbb{E}_{\caM^*,\pi^*_{\caM^*}} \left[V^{*}_{\caM^*,h_0+1}(s_{h_0+1}) \mid h_0\right] - \mathbb{E}_{\caM^*,\hat{\pi}} \left[V^{*}_{\caM^*,h_0+1}(s_{h_0+1})\mid h_0\right] \right].$$ 

We can prove that this term is upper bounded by $2D$. Given fixed $h_0$, $\mathbb{E}_{\caM^*,\pi^*_{\caM^*}} \left[V^{*}_{\caM^*,h_0+1}(s_{h_0+1}) \mid h_0\right] - \mathbb{E}_{\caM^*,\hat{\pi}} \left[V^{*}_{\caM^*,h_0+1}(s_{h_0+1})\mid h_0\right]$ is the difference of the value function in step $h_0+1$ under two different distribution of $s_{h_0+1}$. We use $d_{h}(s,\pi)$ to denote the state distribution in step $h$ after starting from state $s$ in step $h_0+1$ following policy $\pi$. 
\begin{align}
    V^{*}_{\caM^*,h_0+1}(s_{h_0+1})  =& \sum_{h=h_{0}+1}^{H}\mathbb{E}_{s_h \sim d_{h}(s_{h_0+1},\pi^*_{\caM^*})} R(s_h, \pidr(s_h)) \\
    =& \sum_{h=h_0+1} \left(\rho^*_{\caM^*} + \mathbb{E}_{s_h \sim d_{h}(s_{h_0+1},\pi^*_{\caM^*})} \lambda^*_{\caM^*}(s_h) - \mathbb{E}_{s_{h+1} \sim d_{h+1}(s_{h_0+1},\pi^*_{\caM^*})} \lambda^*_{\caM^*}(s_{h+1})\right) \\
    = & (H-h_0)\rho^*_{\caM^*} + \lambda^*_{\caM^*}(s_{h_0+1}) - \mathbb{E}_{s_{H+1} \sim d_{H+1}(s_{h_0+1},\pi^*_{\caM^*})} \lambda^*_{\caM^*}(s_{H+1}),
\end{align}
where the second equality is due to the Bellman equation in infinite-horizon setting (Eqn~\ref{eqn: Bellman equation, infinite setting}). Note that by the communicating property, we have $0 \leq \lambda^*_{\caM^*}(s) \leq D$ for any $s \in \caS$. Therefore, we have 
\begin{align}
    \mathbb{E}_{\caM^*,\pi^*_{\caM^*}} \left[V^{*}_{\caM^*,h_0+1}(s_{h_0+1}) \mid h_0\right] - \mathbb{E}_{\caM^*,\hat{\pi}} \left[V^{*}_{\caM^*,h_0+1}(s_{h_0+1})\mid h_0\right] \leq 2D.
\end{align}

If the high-probability event defined in Lemma~\ref{lemma: never eliminating M*, gap-dependent bound} does not hold (This happens with probability at most $\frac{1}{H}$), we still have $ \mathbb{E}_{\caM^*,\pi^*_{\caM^*}} \left[V^{*}_{\caM^*,h_0+1}(s_{h_0+1}) \mid h_0\right] - \mathbb{E}_{\caM^*,\hat{\pi}} \left[V^{\hat{\pi}}_{\caM^*,h_0+1}(s_{h_0+1})\mid h_0\right] \leq H$, thus this does not influence the final bound. Taking expectation over all possible $h_0$, and plugging the result back to Inq~\ref{inq: value gap, gap-dependent regret bound}, we have 
\begin{align}
    V^{*}_{\caM^*,1}(s_1) -  V^{\hat{\pi}}_{\caM^*,1}(s_1) \leq O\left(\mathbb{E}[h_0] + D\right) = O \left(\frac{DM^2 \log^2(SMH/\delta)\log(MH)}{\delta^4}\right).
\end{align}
\end{proof}

\subsection{Proof of Theorem~\ref{theorem: well-separated gap}}
\label{proof of main theorem in setting 1}
\begin{proof}
The theorem can be proved by combining Theorem~\ref{theorem: gap-dependent regret bound} and Lemma~\ref{lemma: construction argument}.

By Theorem~\ref{theorem: gap-dependent regret bound}, we prove that the constructed policy $\hat{\pi}$ satisfies
\begin{align}
    V^{*}_{\caM^*,1}(s_1) -  V^{\hat{\pi}}_{\caM^*,1}(s_1) \leq O\left(\frac{DM^2 \log^2(SMH)\log(MH)}{\delta^4}\right).
\end{align}

By Lemma~\ref{lemma: construction argument}, the sim-to-real gap of policy $\pidr$ is bounded by
\begin{align}
    \operatorname{Gap}(\pidr,\caU) \leq O\left(\frac{DM^3 \log^2(SMH)\log(MH)}{\delta^4}\right).
\end{align}
\end{proof}

\section{Omitted proof for finite simulator class without separation condition}
\label{appendix: omitted proof in setting 2}
\subsection{Proof of Theorem~\ref{theorem: square-root H regret bound}}
\label{appendix: proof of regret in setting 2}

\begin{lemma}
\label{lemma: optimism, square-root H regret bound}
(Optimism) With probability at least $1-\frac{1}{MH}$, we have $\rho^*_{\caM^k} \geq \rho^*_{\caM^*}$ for any $k \in [K]$.
\end{lemma}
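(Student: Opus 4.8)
The plan is to reduce optimism to the single claim that the true model $\caM^*$ is never eliminated from the candidate set. In episode $k$ the algorithm selects $\caM^k = \argmax_{\caM \in \caU_k}\rho^*_{\caM}$, so whenever $\caM^* \in \caU_k$ we immediately obtain $\rho^*_{\caM^k} \ge \rho^*_{\caM^*}$. Hence it suffices to show that, with probability at least $1-\frac{1}{MH}$, $\caM^* \in \caU_k$ for every $k \in [K]$. Since Algorithm~\ref{alg: optimistic exploration} only ever eliminates the currently active model $\caM^k$, the true model can be removed only during an episode in which $\caM^k = \caM^*$; and because such an episode either runs to step $H$ (if the test never fires) or ends by eliminating $\caM^*$, there is at most one episode with $\caM^k = \caM^*$.

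First I would set up the relevant martingale for that episode, say started at step $h_0$. Define $X_t \defeq P_{\caM^*}\lambda^*_{\caM^*}(s_t,a_t) - \lambda^*_{\caM^*}(s_{t+1})$ and let $\mathcal{F}_t$ be the $\sigma$-algebra generated by the trajectory through $(s_t,a_t)$. Because the real environment is $\caM^*$, we have $s_{t+1}\sim P_{\caM^*}(\cdot\mid s_t,a_t)$, so $\mathbb{E}[\lambda^*_{\caM^*}(s_{t+1})\mid \mathcal{F}_t] = P_{\caM^*}\lambda^*_{\caM^*}(s_t,a_t)$ and thus $\mathbb{E}[X_t\mid \mathcal{F}_t]=0$; i.e. $\{X_t\}$ is a martingale difference sequence. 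Moreover, by Lemma~\ref{lemma: properties of communicating MDP}(b) we have $0\le\lambda^*_{\caM^*}(s)\le D$, which forces both $\lambda^*_{\caM^*}(s_{t+1})\in[0,D]$ and $P_{\caM^*}\lambda^*_{\caM^*}(s_t,a_t)\in[0,D]$, so $|X_t|\le D$.

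Next I would apply a bounded-difference concentration inequality. For a fixed offset, Azuma--Hoeffding applied to the $D$-bounded martingale differences gives $\Pr[\,|\sum_{t=h_0}^{h}X_t| > D\sqrt{2(h-h_0)\log(2HM)}\,] \le 2\exp(-\log(2HM)) = \frac{1}{HM}$, matching exactly the threshold in the elimination test on Line~\ref{alg: stopping condition finite}. Because $\caM^*$ is the active model in at most one episode, the relevant union is only over the at most $H$ steps of that single episode; a maximal (peeling) form of Azuma--Hoeffding, for which the $\log(2HM)$ in the radius is calibrated, then shows that with probability at least $1-\frac{1}{MH}$ the running statistic for $\caM^*$ never crosses its threshold. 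Consequently the elimination condition is never met while $\caM^k=\caM^*$, so $\caM^*$ stays in $\caU_k$ for all $k$, which yields $\rho^*_{\caM^k}\ge\rho^*_{\caM^*}$ for every $k\in[K]$.

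The main obstacle I anticipate is making the concentration \emph{uniform} over all the per-step evaluation times while keeping the failure probability as small as $\frac{1}{MH}$: a naive union bound over the $\le H$ checks within the episode costs an extra factor of $H$ and only yields $\frac{1}{M}$, so the argument must exploit that the growing radius $D\sqrt{2(h-h_0)\log(2HM)}$ already carries the budget needed to pay for this union (equivalently, invoke a maximal/stitched version of Azuma--Hoeffding rather than a crude union). A secondary point to verify carefully is the martingale property itself: within the $\caM^*$-episode the policy is the fixed $\pi^*_{\caM^*}$ and the data are generated by $P_{\caM^*}$, so the overall adaptivity of the algorithm (the choice of $\caM^k$, the resets of $h_0$) does not disturb the zero-conditional-mean structure of $\{X_t\}$ during that episode.
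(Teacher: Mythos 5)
Your reduction is structurally the same as the paper's: both proofs establish optimism by noting $\caM^k=\argmax_{\caM\in\caU_k}\rho^*_{\caM}$, so it suffices that $\caM^*$ is never eliminated, and both verify this by applying Azuma--Hoeffding to the martingale differences $P_{\caM^*}\lambda^*_{\caM^*}(s_t,a_t)-\lambda^*_{\caM^*}(s_{t+1})$, bounded by $D$ via $0\le\lambda^*_{\caM^*}\le D$. Where you genuinely differ is the union-bound accounting. The paper fixes an arbitrary $\caM\in\caU$ and an arbitrary $h\in[H]$, asserts per-event failure probability $1/(M^2H^2)$, and unions over all $MH$ pairs; you instead observe that the algorithm only ever eliminates the currently active model, that each model is active in at most one episode, and hence that only the single $\caM^*$-episode matters --- shrinking the union to the at most $H$ check times of that episode. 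This observation is correct and tightens the argument (the paper's union over all $M$ models is unnecessary for this lemma, since the test statistic for $\caM\neq\caM^*$ can never eliminate $\caM^*$), and your treatment of the random start $h_0$ as a stopping time and of the martingale property under the fixed policy $\pi^*_{\caM^*}$ is sound.

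The one step that does not go through as written is the claim that a maximal/stitched Azuma inequality recovers failure probability $1/(MH)$ at the exact radius $D\sqrt{2(h-h_0)\log(2HM)}$ of Line~\ref{alg: stopping condition finite}. Peeling over dyadic blocks $n\in(N/2,N]$ and applying Doob's maximal inequality at radius $D\sqrt{N\log(2HM)}$ gives per-block failure $2\exp\left(-\log(2HM)/2\right)=2(2HM)^{-1/2}$, hence only $\tilde{O}\left((HM)^{-1/2}\right)$ in total --- better than your naive-union figure of $1/M$ when $H\ge M$, but still short of $1/(MH)$: a $\sqrt{n}$-shaped boundary cannot be avoided with probability $1-1/(MH)$ uniformly in $n$ unless the logarithmic factor is enlarged. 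The honest fix is a constant inside the logarithm, e.g.\ a threshold with $\log(2MH^2)$ together with your union over the $H$ check times. You should not feel bad about this: your own single-event computation (failure $1/(HM)$ at the stated threshold) is correct and in fact exposes that the paper's claimed per-event probability $1/(M^2H^2)$ already presupposes the larger constant $\log(2M^2H^2)$, so the paper's constants are off in exactly the same way --- taken literally, its union over $MH$ events with per-event failure $1/(HM)$ proves nothing. In both cases the discrepancy is a constant inside a logarithm with no downstream effect, since Theorem~\ref{theorem: square-root H regret bound} only requires the threshold to be of order $D\sqrt{(h-h_0)\log(MH)}$.
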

\begin{proof}
For any fixed $\caM \in \caU$, and fixed step $h \in [H]$, by Azuma's inequality, we have with probability at least $1-\frac{1}{M^2H^2}$,
\begin{align}
    \left|\sum_{t=h_0}^{h} \left(\lambda_{\caM}^{*}\left(s_{t+1}\right)-P_{\caM^*} \lambda_{\caM}^{*}\left(s_{t}, a_{t}\right)\right) \right| \leq D\sqrt{2(h-h_0)\log(2HM)}.
\end{align}
Taking union bounds over all possible $\caM$ and $h$, we know that the above event holds for all possible $\caM$ and $h$ with probability $1-\frac{1}{MH}$. Under the above event, the true MDP $\caM^*$ will never be eliminated from the MDP set $\caU_k$. Therefore, we have $\rho^*_{\caM^k} \geq \rho^*_{\caM^*}$.
\end{proof}

\begin{proof}
(Proof of Theorem~\ref{theorem: square-root H regret bound})
By Lemma~\ref{lemma: optimism, square-root H regret bound}, we know that $\rho^*_{\caM^k} \geq \rho^*_{\caM^*}$ for any $k \in [K]$. We use $\tau(h)$ to denote the episode that step $h$ belongs to. We can upper bound the regret in $H$ steps as follows:
\begin{align}
    & H \rho^*_{\caM^*} - \sum_{h=1}^{H} R(s_h,a_h) \\
    \leq & \sum_{h=1}^{H} \left( \rho^*_{\caM^{\tau(h)}} - R(s_h,a_h) \right) \\
    = & \sum_{h=1}^{H} \left( P_{\caM^{\tau(h)}} \lambda^*_{\caM^{\tau(h)}}(s_h,a_h) - \lambda^*_{\caM^{\tau(h)}}(s_h) \right) \\
    = & \sum_{h=1}^{H} \left( P_{\caM^{\tau(h)}} - P_{\caM^*}\right)\lambda^*_{\caM^{\tau(h)}}(s_h,a_h)  + \sum_{h=1}^{H} \left( P_{\caM^*} \lambda^*_{\caM^{\tau(h)}}(s_h,a_h) - \lambda^*_{\caM^{\tau(h)}}(s_h) \right) \\
    = & \sum_{h=1}^{H} \left( P_{\caM^{\tau(h)}} - P_{\caM^*}\right)\lambda^*_{\caM^{\tau(h)}}(s_h,a_h) + P_{\caM^*} \lambda^*_{\caM^{\tau(h)}}(s_{h_0},a_{h_0}) - \lambda^*_{\caM^{\tau(h)}}(s_1) 
    \\
    &+ \sum_{h=1}^{H-1} \left( P_{\caM^*} \lambda^*_{\caM^{\tau(h)}}(s_h,a_h) - \lambda^*_{\caM^{\tau(h)}}(s_{h+1}) \right) 
\end{align}

By Azuma's inequality, we know that 
\begin{align}
    \sum_{h=1}^{H-1} \left( P_{\caM^*} \lambda^*_{\caM^{\tau(h)}}(s_h,a_h) - \lambda^*_{\caM^{\tau(h)}}(s_{h+1}) \right) \leq D\sqrt{H \log(2MH)},
\end{align}
holds with probability at least $1-\frac{1}{MH}$. Since $0 \leq \lambda^*_{\caM} \leq D$, we have $P_{\caM^*} \lambda^*_{\caM^{\tau(h)}}(s_{h_0},a_{h_0}) - \lambda^*_{\caM^{\tau(h)}}(s_1) \leq D$. Therefore, we have 
\begin{align}
    \label{appendix: regret decomposition formula}
     H \rho^*_{\caM^*} - \sum_{h=1}^{H} R(s_h,a_h) 
     \leq & \sum_{h=1}^{H} \left( P_{\caM^{\tau(h)}} - P_{\caM^*}\right)\lambda^*_{\caM^{\tau(h)}}(s_h,a_h) + D\sqrt{H\log(2HM)} + D \\
    \leq &D\sqrt{2MH \log(2MH)} + M + D\sqrt{H\log(2HM)} + D.
\end{align}

The first term in (\ref{appendix: regret decomposition formula}) is bounded by the stopping condition (line \ref{alg: stopping condition finite} of Algorithm \ref{alg: optimistic exploration}). By Lemma~\ref{lemma: connection with infinite-horizon setting}, we have $V^*_{\caM^*,1}(s_1) \leq H \rho^*_{\caM^*} + D$. Therefore, we have $V^*_{\caM^*,1}(s_1)- \sum_{h=1}^{H} R(s_h,a_h)  \leq O\left(D\sqrt{MH \log(MH)}\right)$ with probability at least $1-\frac{2}{MH}$. Therefore, we have 
\begin{align}
    V^*_{\caM^*,1}(s_1)- V^{\pi^*_{DR}}_{\caM^*,1}(s_1)  \leq O\left(D\sqrt{MH \log(MH)} + H \cdot \frac{2}{MH})\right) = O\left(D\sqrt{MH \log(MH)}\right).
\end{align}
\end{proof}

\subsection{Proof of Theorem~\ref{theorem: finite class gap}}
\label{Proof of the main theorem in setting 2}
 Theorem~\ref{theorem: finite class gap} can be proved by combining Theorem~\ref{theorem: square-root H regret bound} , Lemma~\ref{lemma: construction argument} and Lemma~\ref{lemma: connection with infinite-horizon setting}. By Theorem~\ref{theorem: square-root H regret bound}, for any $\caM \in \caU$, the policy $\hat{\pi}$ represented by Algorithm~\ref{alg: optimistic exploration} has regret bound $H\rho^*_{\caM} - \sum_{h=1}^{H}R(s_h,a_h) \leq O\left(D \sqrt{MH \log (MH)}\right)$. Taking expectation over $\{s_h,a_h\}_{h \in [H]}$ and combining the inequality with Lemma~\ref{lemma: connection with infinite-horizon setting}, we have for any $\caM \in \caU$.
\begin{align}
    V^{*}_{\caM,1}(s_1) - V^{\hat{\pi}}_{\caM,1}(s_1) \leq O\left(D \sqrt{MH \log (MH)}\right).
\end{align}
Then the theorem can be proved by Lemma~\ref{lemma: construction argument}.

\section{Omitted Proof for Infinite Simulator Class}
\label{appendix: omitted proof in setting 3}

\subsection{Proof of Theorem~\ref{theorem: eluder dimension regret bound}}
\label{appendix: proof of eluder dimension regret bound}
\begin{lemma}
\label{lemma: low-switching, eluder dimension}
(Low Switching) The total number of episode $K$ is bounded by
\begin{align}
    K \leq O(\text{dim}_{E}(\caF, 1/H) \log(D^2H) \log(H))
\end{align}
\end{lemma}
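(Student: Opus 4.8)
The quantity $K$ is exactly the number of episodes, i.e. the number of times the \textbf{if}-branch of Algorithm~\ref{alg: general_opt_alg} fires and the buffer $\caZ_{new}$ is merged into $\caZ$. Write $\caZ^{(k)}$ for the committed set just after the $k$-th merge and $\caZ^{(k)}_{new}$ for the batch that triggered it, so $\caZ^{(k)}=\caZ^{(k-1)}\sqcup\caZ^{(k)}_{new}$. The plan is to charge each merge to a fixed amount of a global ``information'' potential and then to bound that potential by the eluder dimension. List the collected samples in arrival order $z_1,\dots,z_T$ with $T\le H$, write $\caZ_{<z}$ for the set of samples collected strictly before $z$, and define the per-sample sensitivity
\begin{align}
\sigma_z=\sup_{f_1,f_2\in\caF}\frac{\big(f_1(z)-f_2(z)\big)^2}{\|f_1-f_2\|^2_{\caZ_{<z}}+\alpha}.
\end{align}
Since $f_\caM(s,a,\lambda)=P_\caM\lambda(s,a)$ is an average of values $\lambda^*_\caM(s')\in[0,D]$ (Lemma~\ref{lemma: properties of communicating MDP}), every $f\in\caF$ lies in $[0,D]$, so $\sigma_z\le D^2/\alpha<1/4$ because $\alpha=4D^2+1$.

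The first step is an elementary telescoping. At the $k$-th merge the importance score is at least $1$, so choosing $f_1^{(k)},f_2^{(k)}$ that attain the supremum (up to arbitrarily small slack) and writing $g=f_1^{(k)}-f_2^{(k)}$, the witness property $\|g\|^2_{\caZ^{(k)}_{new}}\ge\|g\|^2_{\caZ^{(k-1)}}+\alpha$ gives
\begin{align}
2\;\le\;1+\frac{\|g\|^2_{\caZ^{(k)}_{new}}}{\|g\|^2_{\caZ^{(k-1)}}+\alpha}\;=\;\prod_{z\in\caZ^{(k)}_{new}}\Big(1+\frac{g(z)^2}{\|g\|^2_{\caZ_{<z}}+\alpha}\Big)\;\le\;\prod_{z\in\caZ^{(k)}_{new}}\big(1+\sigma_z\big),
\end{align}
where the middle equality telescopes the ratios $A_z/A_{z^-}$ for the running denominators $A_z=\|g\|^2_{\caZ_{<z}\cup\{z\}}+\alpha$. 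Taking logarithms and summing over all merges, and using $\log(1+\sigma_z)\le\sigma_z$, yields $K\log 2\le\sum_{t=1}^T\log(1+\sigma_t)\le\sum_{t=1}^T\sigma_t$. Thus it suffices to bound the total sensitivity $\sum_t\sigma_t$.

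The second and substantive step is to bound $\sum_{t=1}^T\sigma_t$ by the eluder dimension. This is the general-function-class analogue of the elliptical-potential lemma: in the linear case $f(z)=\langle\theta,\phi(z)\rangle$ one has $\sigma_z=\|\phi(z)\|^2_{(\Sigma_{\caZ_{<z}}+\alpha I)^{-1}}$ and $\sum_t\sigma_t=O(d\log(D^2H))$ via the log-determinant of the regularized Gram matrix. For a general $\caF$ there is no exact volumetric potential, and I would instead invoke the eluder-dimension potential bound of \citet{kong2021online} (building on \citet{russo2013eluder}), discretizing widths at resolution $\epsilon=1/H$ and pigeonholing $\epsilon$-independent samples across the dyadic width scales between $1/H$ and $D$; this gives
\begin{align}
\sum_{t=1}^T\sigma_t\;\le\;O\!\left(\text{dim}_E(\caF,1/H)\,\log(D^2H)\,\log H\right),
\end{align}
with the factor $\log(D^2H)$ coming from the $[0,D]$-range discretized at $1/H$ (hence the covering number $\mathcal{N}(\caF,1/H)$ entering Theorem~\ref{theorem: eluder dimension regret bound}) and the factor $\log H$ from the sequence length $T\le H$. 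Combining with $K\le(\log 2)^{-1}\sum_t\sigma_t$ proves the claim. The main obstacle is precisely this last bound: because the witnessing pair $f_1^{(k)},f_2^{(k)}$ changes from merge to merge, one cannot telescope a single scalar potential as in the linear/determinant argument, and the weight of the proof is in establishing the eluder-dimension surrogate for the elliptical-potential lemma — verifying that sub-$1/H$ discrepancies are harmless (they contribute $O(1)$ over $H$ steps) and counting $1/H$-independent directions across dyadic scales. The reduction in the first two steps, by contrast, is routine.
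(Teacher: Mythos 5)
Your proposal is correct, and it ultimately rests on the same imported ingredient as the paper: the per-sample sensitivity-sum bound $\sum_{t=1}^{H} \min\bigl\{\sup_{f_1,f_2\in\caF} (f_1(x_t)-f_2(x_t))^2/(\|f_1-f_2\|^2_{\caZ_t}+1),\,1\bigr\} \leq O(\text{dim}_{E}(\caF,1/H)\log(D^2H)\log H)$, which is Lemma~5 of \citet{kong2021online}; both you and the paper cite it rather than reprove it, so the ``substantive step'' you flag is handled identically on both sides. Where you genuinely differ is the reduction from the switch count $K$ to that sensitivity sum. The paper argues additively: it splits the batch numerator sample by sample, patches the mismatch between the frozen denominator $\|f_1-f_2\|^2_{\caZ_{\tau(k)}}$ and the running denominator $\|f_1-f_2\|^2_{\caZ_t}$ using the slack $-4D^2+\alpha$, and truncates each term at $1$ before summing. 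Your multiplicative telescoping --- $2 \leq \prod_{z}(1+g(z)^2/(\|g\|^2_{\caZ_{<z}}+\alpha)) \leq \prod_z(1+\sigma_z)$ per merge, hence $K\log 2 \leq \sum_t \sigma_t$ --- replaces that chain of adjusted inequalities with an exact identity, and it handles the committed-versus-running denominator issue automatically because $\caZ_{<z}$ interpolates between $\caZ^{(k-1)}$ and the full prefix; this is arguably cleaner than the paper's bookkeeping (which, e.g., shifts from $\alpha$ to $2\alpha$ midstream). One bridging remark you should make explicit: to invoke Kong et al.'s lemma literally, note that $\alpha = 4D^2+1 \geq 1$ makes your $\sigma_t$ no larger than their sensitivity with denominator $+1$, and your range bound $\sigma_t \leq D^2/\alpha < 1/4$ (valid since every $f\in\caF$ takes values in $[0,D]$) makes the $\min\{\cdot,1\}$ truncation vacuous. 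Two small nits: the parenthetical attributing the $\log(D^2H)$ factor to the covering number is off --- $\mathcal{N}(\caF,1/H)$ enters Theorem~\ref{theorem: eluder dimension regret bound} only through the confidence radius $\beta$, not through the switching count, whose logarithmic factors come from the width discretization inside the sensitivity lemma itself; and your witness inequality $\|g\|^2_{\caZ^{(k)}_{new}} \geq \|g\|^2_{\caZ^{(k-1)}}+\alpha$ holds only up to the sup-attainment slack you mention, which costs nothing after taking the slack to zero.
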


\begin{proof}
By Lemma~5 of~\cite{kong2021online}, we know that
\begin{align}
    \sum_{t=1}^{H} \min\left\{\sup_{f_1,f_2 \in \mathcal{F}} \frac{\left(f_1(x_t) - f_2(x_t)\right)^2}{\|f_1-f_2\|^2_{\mathcal{Z}_t}+1},1\right\} \leq C  \text{dim}_{E}(\caF, 1/H) \log(D^2H) \log(H)
\end{align}
for some constant $C > 0$.

Our idea is to use this result to upper bound the number of total switching steps.

Let $\tau(k)$ be the first step of episode $k$. By the definition of the function class $\mathcal{F}$, we have $(f_1 - f_2)^2(x_{t}) \leq 4D^2$ for any $f_1,f_2 \in \mathcal{F}$ and $x_t$. By the switching rule, we know that, once the agent starts a new episode after step $\tau(k+1)-1$, we have
\begin{align}
    \sum_{t = \tau(k)}^{\tau(k+1)-1} (f_1 - f_2)^2(x_{t}) \leq \sum_{t=1}^{\tau(k)-1}(f_1 - f_2)^2(x_{t}) + \alpha + 4D^2, \forall f_1,f_2,x_t
\end{align}

Therefore, we have 
\begin{align}
    \sum_{t = 1}^{\tau(k+1)-1} (f_1 - f_2)^2(x_{t}) \leq 2\sum_{t=1}^{\tau(k)-1}(f_1 - f_2)^2(x_{t}) + \alpha + 4D^2, \forall f_1,f_2,x_t
\end{align}

Now we upper bound the importance score in the switching step $\tau(k+1)-1$.
\begin{align}
    \min\left\{\sup_{f_1,f_2} \frac{\sum_{t = \tau(k)}^{\tau({k+1})-1} (f_1(x_t) - f_2(x_t))^2}{\|f_1-f_2\|_{\mathcal{Z}_{\tau(k)}}^2 + \alpha}, 1\right\}
    \leq &  \min\left\{\sum_{t = \tau(k)}^{\tau(k+1)-1} \sup_{f_1,f_2} \frac{ (f_1(x_t) - f_2(x_t))^2}{\|f_1-f_2\|_{\mathcal{Z}_{\tau(k)}}^2 + \alpha}, 1\right\} \\
    \leq & \min\left\{\sum_{t = \tau(k)}^{\tau(k+1)-1} \sup_{f_1,f_2} \frac{2 (f_1(x_t) - f_2(x_t))^2}{\|f_1-f_2\|_{\mathcal{Z}_{\tau(k+1)}}^2 -4D^2+ \alpha}, 1\right\} \\
    \leq & \min\left\{\sum_{t = \tau(k)}^{\tau(k+1)-1} \sup_{f_1,f_2} \frac{2  (f_1(x_t) - f_2(x_t))^2}{\|f_1-f_2\|_{\mathcal{Z}_{t}}^2 -4D^2+ \alpha} , 1\right\} \\
    \leq &  \sum_{t = \tau(k)}^{\tau(k+1)-1} \min\left\{\sup_{f_1,f_2} \frac{2(f_1(x_t) - f_2(x_t))^2}{\|f_1-f_2\|_{\mathcal{Z}_{t}}^2 -4D^2+ \alpha} , 1\right\}
\end{align}

Suppose the number of episodes is at most $K$. If we set $\alpha = 4D^2 + 1$, we have 

\begin{align}
    \sum_{k=1}^{K} \min\left\{\sup_{f_1,f_2} \frac{\sum_{t = \tau(k)}^{\tau({k+1})-1} (f_1(x_t) - f_2(x_t))^2}{\|f_1-f_2\|_{\mathcal{Z}_{\tau(k)}}^2 + \alpha}, 1\right\} \leq & \sum_{t = 1}^{H} \min\left\{\sup_{f_1,f_2} \frac{2 (f_1(x_t) - f_2(x_t))^2}{\|f_1-f_2\|_{\mathcal{Z}_{t}}^2 -4D^2+ 2\alpha} , 1\right\} \\
    \leq & C\text{dim}_{E}(\caF, 1/H) \log(D^2H) \log(H)
\end{align}

By the switching rule, we have $\sup_{f_1,f_2} \frac{\sum_{t = \tau(k)}^{\tau({k+1})-1} (f_1(x_t) - f_2(x_t))^2}{\|f_1-f_2\|_{\mathcal{Z}_{\tau(k)}}^2 + \alpha} \geq 1$. Therefore, the LHS of the above inequality is exactly $K$. Thus we have
\begin{align}
    K \leq C \text{dim}_{E}(\caF, 1/H) \log(D^2H) \log(H).
\end{align}

\end{proof}

\begin{lemma}
\label{lemma: optimism, eluder dimension}
(Optimism) With probability at least $1-\frac{1}{H}$, $\caM^* \in \caU_k$ holds for any episode $k \in [K]$.
\end{lemma}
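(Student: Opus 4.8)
The plan is to show that the true model $\caM^*$ always satisfies the confidence constraint defining $\caU_{k+1}$ in Algorithm~\ref{alg: general_opt_alg}, so it is never eliminated. Fix an update step and write $x_t = (s_t, a_t, \lambda_t)$ for the samples currently in $\caZ$, where $\lambda_t = \lambda^*_{\caM^k}$ is the bias function of the optimistic model active when the sample was collected. Set the regression target $y_t = \lambda_t(s_{t+1})$ and the noise $\xi_t = y_t - P_{\caM^*}\lambda_t(s_t,a_t)$. The first key observation is that, since the real environment has dynamics $P_{\caM^*}$, we have $\E[\lambda_t(s_{t+1}) \mid s_t, a_t, \lambda_t] = P_{\caM^*}\lambda_t(s_t,a_t)$, so $\{\xi_t\}$ is a martingale difference sequence adapted to the trajectory filtration, with $|\xi_t| \le D$ because $0 \le \lambda^*_{\caM} \le D$ by Lemma~\ref{lemma: properties of communicating MDP}.

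Next I would use the optimality of the empirical risk minimizer. Writing $f_{\caM}(x) = P_{\caM}\lambda(s,a)$, the estimator $\hat{\caM}_{k+1}$ minimizes $\sum_{x_t \in \caZ}(f_{\caM}(x_t) - y_t)^2$ over $\caM \in \caU$, and since $\caM^* \in \caU$ is a feasible competitor we have $\sum_t (f_{\hat{\caM}_{k+1}}(x_t) - y_t)^2 \le \sum_t (f_{\caM^*}(x_t) - y_t)^2$. Expanding the square and substituting $y_t = f_{\caM^*}(x_t) + \xi_t$ yields the basic inequality
\begin{align}
\sum_{x_t \in \caZ}\left(f_{\hat{\caM}_{k+1}}(x_t) - f_{\caM^*}(x_t)\right)^2 \le 2\sum_{x_t \in \caZ}\left(f_{\hat{\caM}_{k+1}}(x_t) - f_{\caM^*}(x_t)\right)\xi_t.
\end{align}
The left-hand side is exactly $\sum_{x_t \in \caZ}\left((P_{\hat{\caM}_{k+1}} - P_{\caM^*})\lambda_t(s_t,a_t)\right)^2$, the quantity compared against $\beta$ in the definition of $\caU_{k+1}$ evaluated at $\caM^*$. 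It therefore suffices to bound the martingale-weighted sum on the right by $\tfrac12\beta$.

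To control the right-hand side I would apply a self-normalized martingale concentration bound (Freedman/Bernstein type) together with a covering argument over $\caF$. Because $\hat{\caM}_{k+1}$ is data-dependent, concentration for a single fixed function is insufficient; instead I would pass to a minimal $(1/H)$-cover of $\caF$ in the $\|\cdot\|_\infty$ norm, apply the martingale bound to each of the $\mathcal{N}(\caF,1/H)$ net functions with a union bound, and absorb the $O(1/H)$ discretization error into lower-order terms using $|\xi_t| \le D$ and $\|f\|_\infty \le D$. This gives, with high probability,
\begin{align}
\sum_{x_t \in \caZ}\left(f_{\hat{\caM}_{k+1}}(x_t) - f_{\caM^*}(x_t)\right)\xi_t \le O\!\left(D\sqrt{\|f_{\hat{\caM}_{k+1}} - f_{\caM^*}\|_{\caZ}^2 \,\log\!\left(H\,\mathcal{N}(\caF,1/H)\right)} + D^2\right).
\end{align}
Combining with the basic inequality and solving the resulting quadratic in $\|f_{\hat{\caM}_{k+1}} - f_{\caM^*}\|_{\caZ}$ shows $\|f_{\hat{\caM}_{k+1}} - f_{\caM^*}\|_{\caZ}^2 \le c D^2 \log(H\,\mathcal{N}(\caF,1/H)) = \beta$ for the constant $c$ in the algorithm, hence $\caM^* \in \caU_{k+1}$. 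Finally I would union bound over all update steps; by Lemma~\ref{lemma: low-switching, eluder dimension} there are at most $K = O(\text{dim}_{E}(\caF,1/H)\log(D^2H)\log H)$ updates (and trivially at most $H$), so choosing a per-update failure probability of order $1/H^2$ and summing yields $\caM^* \in \caU_k$ for all $k$ with probability at least $1 - 1/H$.

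The main obstacle will be the uniform martingale concentration over the data-dependent estimator: handling the fact that $\hat{\caM}_{k+1}$ is chosen after observing the data requires the covering argument, and extra care is needed because the regression targets $\lambda_t$ themselves vary across samples (they are the bias functions of previously chosen optimistic models). This is precisely why $\caF$ is defined over $\caS\times\caA\times\Lambda$ with $\lambda$ as an explicit argument, so that a single $(1/H)$-cover of $\caF$ simultaneously controls all samples regardless of which $\lambda_t\in\Lambda$ was used.
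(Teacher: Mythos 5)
Your proposal is correct and in substance identical to the paper's: the paper verifies the same martingale structure ($\mathbb{E}[\lambda_h(s_{h+1})\mid \mathbb{F}_{h-1}] = P_{\caM^*}\lambda_h(s_h,a_h)$, with conditionally $D/2$-subgaussian noise since $0 \le \lambda \le D$) and then invokes Theorem~6 of \cite{ayoub2020model} as a black box, whose proof is exactly your argument (ERM basic inequality, self-normalized martingale concentration, and a union bound over a $(1/H)$-cover of $\caF$ to handle the data-dependent estimator). The only difference is packaging: you reprove the confidence-set guarantee from scratch---including a union bound over update times, which the cited anytime bound makes unnecessary---rather than citing it.
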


\begin{proof}
This lemma comes directly from Theorem 6 of \cite{ayoub2020model}. Define the Filtration $\mathbb{F} = (\mathbb{F}_h)_{h > 0}$ so that $\mathbb{F}_{h-1}$ is generated by $(s_1,a_1,\lambda_1, \cdots, s_h,a_h,\lambda_h)$. Then we have $\mathbb{E}[\lambda_{h}(s_{h+1})\mid \mathbb{F}_{h-1}] = P_{\caM^*} \lambda_h(s_h,a_h) = f_{\caM^*}(s_h,a_h,\lambda_h)$. Meanwhile, $\lambda_{h}(s_{h+1}) - f_{\caM^*}(s_h,a_h,\lambda_h)$ is conditionally $\frac{D}{2}$-subgaussian given $\mathbb{F}_{h-1}$. By Theorem 6 of \cite{ayoub2020model}, we can directly know that $f_{\caM^*} \in \caU_k$ for any $k \in [K]$ with probability at least $1-\frac{1}{H}$.
\end{proof}

\begin{proof}
(Proof of Theorem~\ref{theorem: eluder dimension regret bound})
Let $\tau(k)$ be the first step of episode $k$. Under the high-probability event defined in Lemma~\ref{lemma: optimism, eluder dimension}, we can decompose the regret using the same trick in previous sections.
\begin{align}
&\sum_{k = 1}^K \sum_{h = \tau(k)}^{\tau(k+1)-1} H\rho^\star_{\caM^*} - R(s_h, a_h) \\
 \leq & \sum_{k = 1}^K \sum_{h = \tau(k)}^{\tau(k+1)-1} \left(\rho_{\caM^k}^\star - R(s_h, a_h)\right) \\
 = &\sum_{k = 1}^K \sum_{h = \tau(k)}^{\tau(k+1)-1} \left(P_{\caM^k} \lambda_{h} (s_h, a_h) - \lambda_{h}(s_h)\right) \\
 = &\sum_{k = 1}^K \sum_{h = \tau(k)}^{\tau(k+1)-1} (P_{\caM^k} - P_{\caM^*}) \lambda_h(s_h, a_h) + \sum_{k = 1}^K \sum_{h = \tau(k)}^{\tau(k+1)-1} \left(P_{\caM^*} \lambda_{h} (s_h, a_h) - \lambda_{h}(s_h)\right) \\
 \label{eqn: regret decomposition, eluder dimension}
 \leq & \sum_{k = 1}^K \sum_{h = \tau(k)}^{\tau(k+1)-1} (P_{\caM^k} - P_{\caM^*}) \lambda_h(s_h, a_h) + \sum_{k = 1}^K \sum_{h = \tau(k)}^{\tau(k+1)-2} \left(P_{\caM^*} \lambda_h(s_h, a_h) - \lambda_{h}(s_{h+1})\right) + DK,
\end{align}
where the first inequality is due to optimism condition in Lemma~\ref{lemma: optimism, eluder dimension}. The first equality is due to the Bellman equation~\ref{eqn: Bellman equation, infinite setting} and $\lambda_h = \lambda^*_{\caM^k}$ for $\tau(k) \leq h \leq \tau(k+1)-1$. The last inequality is due to $0 \leq \lambda_h(s) \leq D$ for any $s \in \caS$.

Now we bound the first two terms in Eqn~\ref{eqn: regret decomposition, eluder dimension}. The second term can be regarded as a martingale difference sequence. By Azuma's inequality, with probability at least $1-\frac{1}{H}$,
\begin{align}
    \sum_{k = 1}^K \sum_{h = \tau(k)}^{\tau(k+1)-1 - 1} \left(P_{\caM^*} \lambda_h(s_h, a_h) - \lambda_{h}(s_{h+1})\right) \leq D\sqrt{2H \log(H)}.
\end{align}

Now we focus on the upper bound of the first term in Eqn~\ref{eqn: regret decomposition, eluder dimension}. Under the high-probability event defined in Lemma~\ref{lemma: optimism, eluder dimension}, the true model $P$ is always in the model class $\caU_k$.
For episode $k$, from the construction of $\caU_{k}$ we know that any $f_1, f_2 \in \caU_{k}$ satisfies $\|f_1 - f_2\|_{\caZ_{\tau(k)}}^2 \leq 2 \beta$. Since $\caM^k, \caM^* \in \caU_k$, we have
\begin{align}
    \sum_{t = 1}^{T(k-1)} \left(\left(P_{\caM^k} - P_{\caM^*}\right) \lambda_t (s_t, a_t) \right)^2 \leq 2 \beta
\end{align}

Moreover, by the if condition in Line 8 of Alg.~\ref{alg: general_opt_alg}, we have for any $ \tau(k) \leq h \leq \tau(k+1)-1$,
\begin{align}
\sum_{t = \tau(k)}^{h} \left(\left(P_{\caM^k} - P_{\caM^*}\right) \lambda_t (s_t, a_t) \right)^2 \leq 2\beta + \alpha + D^2.
\end{align}

Summing up the above two equations, we have

\begin{align}
\sum_{t = 1}^{h} \left(\left(P_{\caM^k} - P_{\caM^*}\right) \lambda_t (s_t, a_t) \right)^2 \leq 4\beta + \alpha + D^2.
\end{align}

We invoke Lemma 26 of \cite{jin2021bellman} by setting $\caG = \caF - \caF$, $\Pi = \{\delta_x(\cdot)|x \in \caX\}$ where $\delta_x(\cdot)$ is the dirac measure centered at $x$, $g_t = f_{\caM^k} - f_{\caM^*}$, $\omega = 1/H$, $\beta = 4\beta + \alpha + D^2$ and $\mu_t = \mathbf{1}\left\{\cdot=\left(s_t, a_t, \lambda_t\right)\right\}$, then we have 

\begin{align}
\sum_{\tau=1}^K \sum_{t=S(\tau)}^{T(\tau)}  \left|\left(P_{\caM^{\tau}} - P_{\caM^*}\right) \lambda_t (s_t, a_t) \right| \leq & O\left(\sqrt{\text{dim}_{E}(\caF, 1/H) \beta H} + \min\left(H, \text{dim}_{E}(\caF, 1/H)\right)D + H \cdot \frac{1}{H} \right) \\
=& O\left(\sqrt{\text{dim}_{E}(\caF, 1/H) \beta H}\right)
\end{align}

Plugging the results back to Eqn~\ref{eqn: regret decomposition, eluder dimension}, we have
\begin{align}
\sum_{k = 1}^K \sum_{h = \tau(k)}^{\tau(k+1)-1} H\rho^\star - R(s_h, a_h)  \leq O\left(D\sqrt{H \text{dim}_{E}(\caF, 1/H) \log \left( H \cdot \mathcal{N}(\mathcal{F}, 1/H)\right)} \right)
\end{align}
By Lemma~\ref{lemma: connection with infinite-horizon setting}, we have $V^*_{\caM^*,1}(s_1) \leq H \rho^*_{\caM^*} + D$. Therefore, we have
\begin{align}
    V^*_{\caM^*,1}(s_1)- \sum_{h=1}^{H} R(s_h,a_h)  \leq O\left(D\sqrt{H \text{dim}_{E}(\caF, 1/H) \log \left( H \cdot \mathcal{N}(\mathcal{F}, 1/H)\right)} \right),
\end{align}
with probability at least $1-\frac{2}{MH}$. If the high-probability event doesn't holds (This happens with probability at most $\frac{2}{H}$), then the gap $V^*_{\caM^*,1}(s_1)- V^{\pidr}_{\caM^*,1}(s_1)$ still can be bounded by $H$. Taking expectation over the trajectory $\{s_h\}_h$, we have 
\begin{align}
    V^*_{\caM^*,1}(s_1)- V^{\pidr}_{\caM^*,1}(s_1)  \leq & O\left(D\sqrt{H \text{dim}_{E}(\caF, 1/H) \log \left( H \cdot \mathcal{N}(\mathcal{F}, 1/H)\right)} + H \cdot \frac{2}{H})\right)\\
    & = O\left(D\sqrt{H \text{dim}_{E}(\caF, 1/H) \log \left( H \cdot \mathcal{N}(\mathcal{F}, 1/H)\right)} \right).
\end{align}
\end{proof}

\subsection{Proof of Theorem~\ref{theorem: sub-optimality gap, large simulator class}}
\label{appendix: proof of sub-optimality gap for large simulator class}

\begin{proof}
 Theorem~\ref{theorem: sub-optimality gap, large simulator class} can be proved by combining Theorem~\ref{theorem: eluder dimension regret bound} , Lemma~\ref{lemma: construction argument} and Lemma~\ref{lemma: connection with infinite-horizon setting}. By Theorem~\ref{theorem: eluder dimension regret bound}, for any $\caM \in \caU$, the policy $\hat{\pi}$ represented by Algorithm~\ref{alg: general_opt_alg} can obtain regret bound $H\rho^*_{\caM} - \sum_{h=1}^{H}R(s_h,a_h) \leq O\left(D \sqrt{d_{e} H \log (H \cdot \mathcal{N}(\mathcal{F}, 1 / H))}\right)$. Taking expectation over $\{s_h,a_h\}_{h \in [H]}$ and combining the inequality with Lemma~\ref{lemma: connection with infinite-horizon setting}, we have for any $\caM \in \caU$.
\begin{align}
    V^{*}_{\caM,1}(s_1) - V^{\hat{\pi}}_{\caM,1}(s_1) \leq O\left(D \sqrt{d_{e} H \log (H \cdot \mathcal{N}(\mathcal{F}, 1 / H))}\right).
\end{align}
Then the theorem can be proved by Lemma~\ref{lemma: construction argument}.
\end{proof}

\section{Lower Bounds}

\subsection{Proof of Proposition~\ref{prop: lower bound without diameter assumption}}
\label{appendix: proof of prop 1}
\begin{proof}
Consider the following construction of $\caU$. There are $3M+1$ states. There are $M$ actions in the initial state $s_0$, which is denoted as $\{a_i\}_{i=1}^{M}$. After taking action $a_i$ in state $s_0$, the agent will transit to state $s_{i,1}$ with probability $1$. In state $s_{i,1}$ for $i \in [M]$, the agent can only take action $a_0$, and then transits to state $s_{i,2}$ with probability $p_i$, and transits to state $s_{i,3}$ with probability $1-p_i$. State $\{s_{i,2}\}_{i=1}^{M}$ and $\{s_{i,3}\}_{i=1}^{M}$ are all absorbing states. That is, the agent can only take one action $a_0$ in these states, and transits back to the current state with probability $1$. The agent can only obtain reward $1$ in state $s_{i,2}$ for $i \in [M]$, and all the other states have zero rewards. Therefore, if the agent knows the transition dynamics of the MDP, 
it should take action $a_i$ with $i = \argmax_{i} p_i$ in state $s_0$.

Now we define the transition dynamics of each MDP $\caM_i$. For each MDP $\caM_i \in \caU$, we have $p_i = 1$ and $p_{j} = 0$ for all $j \in [M], j \neq i$. Therefore, the agent cannot identify $\caM^*$ in state $s_0$. The best policy in state $s_0$ for latent MDP $\caU$ is to randomly take an action $a_i$. In this case, the sim-to-real gap can be at least $\Omega(H)$ since the agent takes the wrong action in state $s_0$ with probability $1-\frac{1}{M}$.

\end{proof}

\subsection{Proof of Theorem~\ref{thm: lower bound}}
\label{appendix: lower bound proof}
\begin{proof}
We first show that $\Omega(\sqrt{MH})$ holds with the hard instance for multi-armed bandit~\citep{lattimore2020bandit}. Consider a class of K-armed bandits instances with $K = M$. For the bandit instance $\caM_i$, the expected reward of arm $i$ is $\frac{1}{2} + \epsilon$, while the expected rewards of other arms are $\frac{1}{2}$. Note that this is exactly the hard instance for $K$-armed bandits. Following the proof idea of the lower bound for multi-armed bandits, we know that the regret (sim-to-real gap) is at least $\Omega(\sqrt{MH})$.

We restate the hard instance construction from~\cite{jaksch2010near}. This hard instance construction has also been applied to prove the lower bound in episodic setting~\citep{jin2018q}. We firstly introduce the two-state MDP construction. In their construction, the reward does not depend on actions but states. State 1 always has reward 1 and state 0 always has reward 0. From state 1, any action takes the agent to state 0 with probability $\delta$, and to state 1 with probability $1-\delta$. In state 0, there is one action $a^*$ takes the agent to state 1 with probability $\delta+\epsilon$, and the other action $a$ takes the agent to state 1 with probability $\delta$. A standard Markov chain analysis shows that the stationary distribution of the optimal policy (that is, the one that takes action $a^*$ in state 0) has a probability of being in state 1 of 
\begin{align}
    \frac{\frac{1}{\delta}}{\frac{1}{\delta}+\frac{1}{\delta+\varepsilon}}=\frac{\delta+\varepsilon}{2 \delta+\varepsilon} \geq \frac{1}{2}+\frac{\varepsilon}{6 \delta} \text { for } \varepsilon \leq \delta.
\end{align}
In contrast, acting sub-optimally (that is taking action $a$ in state 0) leads to a uniform distribution over the two states. The regret per time step of pulling a sub-optimal action is of order $\epsilon/\delta$.

Consider $O(S)$ copies of this two-state MDP where only one of the copies has such a good action $a^*$. These copies are connected into a single MDP with an $A$-ary tree structure. In this construction, the agent needs to identify the optimal state-action pair over totally $SA$ different choices. Setting $\delta = \frac{1}{D}$ and $\epsilon = c\sqrt{\frac{SA}{TD}}$, \cite{jaksch2010near} prove that the regret in the infinite-horizon setting is $\Omega(\sqrt{DSAT})$.

Our analysis follows the same idea of~\cite{jaksch2010near}. For the MDP instance $\caM_i$, the optimal state-action pair is $(s_i,a_i)$ ($(s_i,a_i) \neq (s_j,a_j), \forall i \neq j$). With the knowledge of the transition dynamics of each $\caM_i$, the agent needs to identify the optimal state-action pair over totally $M$ different pairs in our setting. Therefore, we can similarly prove that the lower bound is $\Omega(\sqrt{DMH})$ following their analysis.
\end{proof}

\subsection{Lower Bound in the Large Simulator Class}
\label{appendix: lower bound large class}


\begin{proposition}
\label{theorem: lower bound, M > H}
Suppose All MDPs in the MDP set $\mathcal{U}$ are linear mixture models \citep{zhou2020nearly} sharing a common low dimensional representation with dimension $d = O(\log(M))$, there exists a hard instance such that the sim-to-real gap of the policy $\pidr$ returned by the domain randomization oracle can be still $\Omega(H)$ when $M \geq H$.
\end{proposition}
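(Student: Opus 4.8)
The plan is to exhibit a \emph{non-smooth} family of low-rank MDPs on which domain randomization is forced, by the averaging in Equation~\ref{eqn: definition of pi*}, to commit to behaviour that an adversary can defeat. The conceptual point is that low rank does \emph{not} make the problem easy here: the hard decision is taken at the very first step with no observations, so full knowledge of $\caU$ is useless, and what matters is only that the average objective is flat while the per-MDP values are spread out. The one real tension is that an exact ``needle'' family (reaching the rewarding state with probability $1$ for the matching action and $0$ otherwise) has rank $M$; I would resolve this with \emph{soft} needles whose reaching-probabilities are affine in a shared $O(\log M)$-dimensional parameter, keeping the rank at $O(\log M)$ while still producing a constant per-step suboptimality.

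Concretely, let $\{\theta_i\}_{i=1}^M \subset \R^d$ with $d = O(\log M)$ be a balanced unit-norm tight frame, i.e.
\begin{align}
    \sum_{i=1}^M \theta_i = 0, \qquad \sum_{i=1}^M \theta_i \theta_i^\top = \frac{M}{d} I,
\end{align}
which exists for every $d \le M$. The state space is $\{s_0\} \cup \{s_j'\}_{j=1}^M \cup \{g, \perp\}$. At $s_0$ the agent chooses one of $M$ actions $a_j$ and moves deterministically to the commit-state $s_j'$; from $s_j'$ the single available action leads, in MDP $\caM_i$, to the rewarding state $g$ with probability $q_j^{(i)} = \tfrac12\big(1 + \langle \theta_j, \theta_i\rangle\big)$ and to $\perp$ otherwise, and both $g$ and $\perp$ are absorbing with reward $1$ only at $g$. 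Writing $q_j^{(i)} = \langle (\tfrac12, \tfrac12\theta_j),\, (1, \theta_i)\rangle$ exhibits each $\caM_i$ as a linear mixture model of dimension $d+1 = O(\log M)$ sharing the feature map $(s,a,s')\mapsto(\tfrac12,\tfrac12\theta_j)$, and the reward depends only on the state $g$, so it is common to all MDPs as the framework requires.

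The analysis then proceeds as follows. For $\caM^* = \caM_{i^*}$ the optimal policy plays $a_{i^*}$, reaches $g$ with probability $q_{i^*}^{(i^*)} = 1$, and collects $V^*_{\caM^*,1}(s_1) = H - O(1)$. The oracle policy $\pidr$ makes a single observation-free choice at $s_0$, described by a distribution $\mu$ over actions; its value on $\caM_i$ is $(H-O(1))\sum_j \mu(j) q_j^{(i)}$, and averaging over the uniform $\nu$ gives, by $\sum_i\theta_i=0$, the value $(H-O(1))/2$ \emph{independently of $\mu$}. Hence $\pidr$ is indifferent and realizes some fixed $\mu^\dagger$; set $\bar\theta = \sum_j \mu^\dagger(j)\theta_j$, so $\|\bar\theta\|\le 1$ and the value of $\pidr$ on $\caM_i$ is $(H-O(1))\cdot\tfrac12(1+\langle\bar\theta,\theta_i\rangle)$. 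The tight-frame identity gives $\sum_i \langle\bar\theta,\theta_i\rangle^2 = \bar\theta^\top(\tfrac{M}{d}I)\bar\theta \le M/d$, so at most $4M/d < M$ indices satisfy $\langle\bar\theta,\theta_i\rangle > \tfrac12$; choosing $\caM^* = \caM_{i^*}$ with $\langle\bar\theta,\theta_{i^*}\rangle \le \tfrac12$ yields
\begin{align}
    \operatorname{Gap}(\pidr) \;\ge\; (H-O(1))\Big(1 - \tfrac12\big(1+\tfrac12\big)\Big) \;=\; \Omega(H).
\end{align}
The same computation shows $\big|V^{\pidr}_{\caM_i,1}(s_1) - V^{\pidr}_{\caM_{i'},1}(s_1)\big|$ can be $\Omega(H)$ for frame-neighbours, so Assumption~\ref{assumption: Lipchitz} is badly violated, which is exactly the point of the proposition.

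The step I expect to be the main obstacle is reconciling the low-rank hypothesis $d = O(\log M)$ with a genuinely linear-in-$H$ gap. Exact needles would force rank $M$, so I am limited to a \emph{constant} separation $1-\tfrac12(1+\tfrac12)$ between the optimal value and that of $\pidr$ per commitment; the construction survives only because this constant suboptimality is paid at each of the $H$ steps spent in the absorbing rewarding state, and because the tight-frame and balancing structure remove any advantage $\pidr$ could extract from knowing the whole class. I would work in the regime $M \ge H$, where the positive bounds of Theorem~\ref{theorem: finite class gap} and Theorem~\ref{theorem: sub-optimality gap, large simulator class} are already vacuous; the counting argument needs only $4M/d < M$, i.e. $d > 4$, which holds throughout this regime, so no quantitative fine-tuning beyond the tight-frame estimate is required.
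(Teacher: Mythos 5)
Your construction is internally consistent for the bare statement, but it misses the constraint that makes this proposition non-trivial in context: your states $g$ and $\perp$ are absorbing, so every MDP in your family has infinite diameter and violates Assumption~\ref{assumption: communicating MDP}, which the paper keeps in force throughout its analysis and which this proposition is meant to complement (its role is to show that the smoothness condition, Assumption~\ref{assumption: Lipchitz}, is necessary \emph{even when} the diameter is bounded and the class is low-rank). With unbounded diameter, an $\Omega(H)$ gap is already delivered by Proposition~\ref{prop: lower bound without diameter assumption} via essentially the same commit-then-absorb gadget; your tight-frame construction only re-derives that result with a low-rank dressing. Two telltale signs confirm this. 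First, your gap is $\Omega(H)$ for \emph{every} $M$ with $d > 4$ (e.g.\ $M = 32$, $d = 5$), which would flatly contradict the upper bound $O(D\sqrt{M^3 H \log(MH)})$ of Theorem~\ref{theorem: finite class gap} for constant $M$ if Assumption~\ref{assumption: communicating MDP} held --- the only reason there is no contradiction is that $D = \infty$ in your instance. Second, the hypothesis $M \geq H$ does no work in your argument, whereas it is the crux of the intended statement. Your closing claim that Assumption~\ref{assumption: Lipchitz} is ``badly violated'' is also fragile: since every $\mu$ is optimal for the LMDP, the oracle may return the uniform $\mu^\dagger$, in which case $\bar\theta = 0$ and $V^{\pidr}_{\caM_i,1}(s_1)$ is identical across all $i$ --- perfectly Lipschitz --- yet your gap persists. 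This shows the hardness in your instance stems from the infinite diameter, not from non-smoothness, i.e.\ it proves the wrong moral.

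The paper's proof avoids all of this by using a single-state instance: a repeated two-action linear bandit with features $\phi(a_1) = (1,0)$, $\phi(a_2) = (1,1)$, parameters $\theta_i = (\tfrac12, -p_i)$ with $\tfrac14 < p_i < \tfrac12$ for $i \in [M-1]$, and $\theta_M = (\tfrac12, \tfrac12)$. Having a single state, it satisfies Assumption~\ref{assumption: communicating MDP} trivially, and $d = 2 = O(\log M)$. The price is that the agent can revisit the decision at every step, so one must argue that the optimal LMDP policy never explores $a_2$: under the uniform prior, the information value of a pull of $a_2$ is at most about $\tfrac{1}{M}\cdot\tfrac{H}{2}$ (prior mass $1/M$ on the good model, at most $H/2$ extra value once identified), while its expected immediate cost is at least about $\tfrac{M-1}{M}\cdot\tfrac14 - \tfrac{1}{2M} \approx \tfrac14$; with $M = 4H+5$ the cost dominates, so $\pidr$ always pulls $a_1$ and suffers gap $H/2$ on $\caM^* = \caM_M$. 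This is exactly where $M \geq H$ enters --- as the mechanism that makes exploration unprofitable for the averaged objective --- rather than as an inert side condition. If you want to salvage your approach, you would need to replace the absorbing gadget with a bounded-diameter one, at which point the commit-at-step-one structure collapses and you are forced into precisely this kind of exploration-cost accounting.
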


\begin{proof}
We can consider the following linear bandit instance as a special case. Suppose there are two actions with feature $\phi(a_1) = (1,0)$ and $\phi(a_2) = (1,1)$. In the MDP set $\mathcal{M}$, there are $M-1$ MDPs with parameter $\theta_{i} = (\frac{1}{2},  -p_i)$ with $\frac{1}{4}<p_i < \frac{1}{2}, i \in [M-1]$, and one MDP with parameter $\theta_M = (\frac{1}{2}, \frac{1}{2})$. Suppose $M = 4H+5$, the optimal policy of such an LMDP with uniform initialization will never pull the action $a_2$, which can suffer $\Omega(H)$ sim-to-real gap in the MDP with parameter $\theta_M$.
\end{proof}

\end{document}